\newenvironment{breakablealgorithm}
{
		\begin{center}
			\refstepcounter{algorithm}
			\hrule height.8pt depth0pt \kern2pt
			\renewcommand{\caption}[2][\relax]{
				{\raggedright\textbf{\ALG@name~\thealgorithm} ##2\par}%
				\ifx\relax##1\relax 
				\addcontentsline{loa}{algorithm}{\protect\numberline{\thealgorithm}##2}%
				\else 
				\addcontentsline{loa}{algorithm}{\protect\numberline{\thealgorithm}##1}%
				\fi
				\kern2pt\hrule\kern2pt
			}
		}{
		\kern2pt\hrule\relax
	\end{center}
}
\newcommand\HatchedCell[4][0pt]{%
  \begin{tikzpicture}[overlay,remember picture]%
    \fill[#4] ( $ (pic cs:#2) + (0,1.9ex) $ ) rectangle ( $ (pic cs:#3) + (0pt,-#1*\baselineskip-.8ex) $ );
  \end{tikzpicture}%
}%
\begin{document}
\title{A Semi-Bayesian Nonparametric Estimator of the Maximum Mean Discrepancy Measure: Applications in Goodness-of-Fit Testing and Generative Adversarial Networks}

\author{\name Forough Fazeli-Asl \email foroughf@hku.hk \\
	\addr Department of Statistics and Actuarial Science\\
	University of Hong Kong\\
	Pok Fu Lam, Hong Kong
	\AND
	\name Michael Minyi Zhang \email mzhang18@hku.hk \\
	\addr Department of Statistics and Actuarial Science\\
	University of Hong Kong\\
	Pok Fu Lam, Hong Kong
    \AND
	\name Lizhen Lin \email lizhen01@umd.edu \\
	\addr Department of Mathematics\\
	University of Maryland\\
	College Park, MD, USA
	}

\editor{}

\maketitle

\begin{abstract}
A classic inferential statistical problem is the goodness-of-fit (GOF) test. Such a test can be challenging when the hypothesized parametric model has an intractable likelihood and its distributional form is not available. Bayesian methods for GOF can be appealing due to their ability to incorporate expert knowledge through prior distributions. 
However, standard Bayesian methods for this test often require strong distributional assumptions on the data and their relevant parameters. To address this issue, we propose a semi-Bayesian nonparametric (semi-BNP) procedure in the context of the maximum mean discrepancy (MMD) measure that can be applied to the GOF test. Our method introduces a novel Bayesian estimator for the MMD, enabling the development of a measure-based hypothesis test for intractable models. Through extensive experiments, we demonstrate that our proposed test outperforms frequentist MMD-based methods by achieving a lower false rejection and acceptance rate of the null hypothesis. Furthermore, we showcase the versatility of our approach by embedding the proposed estimator within a generative adversarial network (GAN) framework. It facilitates a robust BNP learning approach as another significant application of our method. 
With our BNP procedure, this new GAN approach can enhance sample diversity and improve inferential accuracy compared to traditional techniques.
\end{abstract}

\begin{keywords}
Dirichlet process, two-sample hypothesis tests, Bayesian evidence, generative models, computational methods.
\end{keywords}

\section{Introduction}
GOF tests are commonly used to evaluate an empirical data set against a hypothesized parametric model. 
However, there are cases when the likelihood of the parametric model is intractable and the explicit form of the model distribution is unavailable, making it challenging to directly assess the model's fit. One such example is the case of generative models, where independent samples can be generated, but the required likelihood function for traditional GOF tests is intractable. In such situations, a potential solution is to use the MMD measure as an alternative approach for conducting GOF tests \citep{Gretton,key2021composite}. The MMD is a metric on the space of probability distributions and is commonly used in hypothesis testing to quantify the difference between the distribution of the data and the hypothesized model. It can be conveniently estimated using available samples generated from desired distributions. The MMD estimator has proven to be effective in various applications, including analyzing large-scale datasets with high-dimensional features and implementing generative models, especially GANs.

Bayesian nonparametric methods, while powerful, have received comparatively little attention, especially regarding their application in estimating the MMD. One of the primary benefits of the Bayesian approach is that expert knowledge can be incorporated into the prior distributions in a diagnostic setting. Moreover, a BNP learning procedure can provide a certain level of regularization to the training process. This is partially a result of placing uncertainty on the sampling distribution of the data, via a Dirichlet process (DP). Therefore, the lack of such methods in MMD estimation proves to be a hindrance for the statistician who wishes to be Bayesian without overly strong assumptions. This paper seeks to fill this crucial gap.

In this paper, we propose a BNP estimator that accurately estimates the MMD kernel-based measure between an intractable parametric model and an unknown distribution. To develop the procedure, we place the DP prior solely on the unknown distribution. 
Therefore, we refer to this procedure as a semi-BNP estimator. Having established our MMD estimator, we demonstrate that we can generalize the bootstrap procedure given in \cite{dellaporta2022robust} beyond posterior parameter inference. First, we apply our estimator in a variety of two-sample hypothesis testing problems. Next, we introduce a robust Bayesian nonparametric learning (BNPL) approach for training GANs based on simulating from the posterior distribution on the parameter space of the generator. Our approach utilizes the aforementioned estimator as a robust discriminator between the generator's distribution and a DP posterior on the empirical data distribution. 
Specifically, our framework unifies concepts of the MMD measurement and the BNP inference to leverage their respective benefits into a single discriminator. Furthermore, we will 
investigate the ability of our discriminator to reduce mode collapse and increase the ability of the generator to fool the discriminator more effectively than the frequentist counterpart for GAN training. 

The paper is organized as follows: In Section~\ref{sec:previous}, we review previous works and methods related to our proposed technique. We then introduce our novel semi-BNP estimator for the MMD measure between an unknown and intractable parametric distribution in Section~\ref{sec-BNP-estimator}, and provide theoretical properties of our proposed estimator. In Section~\ref{sec-BNP-test}, we utilize our semi-BNP estimator of the MMD measure to create a powerful GOF test based on the relative belief (RB) ratio, which serves as the Bayesian evidence to judge the null hypothesis. Moreover, Section~\ref{sec-gan-train} outlines the incorporation of the semi-BNP estimator as the discriminator in the GAN architecture. This results in a robust BNPL procedure that accurately estimates the generator's parameters for generating realistic samples. The section also discusses the theoretical properties of the proposed discriminator, such as robustness and consistency. We evaluate the novel semi-BNP procedures for hypothesis testing and GAN training through numerical experiments in Section~\ref{sec-experiment}. Lastly, we conclude the paper in Section~\ref{sec-conclusion} and discuss potential future directions. All proofs, algorithms, notations, and additional experiments are given in the supplementary material.

\section{Previous Work}\label{sec:previous}
In this section, we introduce the fundamental components of our BNP estimator of the MMD.
\subsection{Maximum Mean Discrepancy Measure}
Consider the random variables, $\mathbf{X}$ and $\mathbf{Y}$, drawn from Borel distributions $F_1$ and $F_2$ on a topological space $\mathfrak{X}$, respectively. Let $\mathcal{H}_{k}$ be a reproducing kernel Hilbert space (RKHS) indexed with a kernel function $k(\cdot,\cdot)$ that maps pairs of inputs from $\mathfrak{X}$ to real numbers. The function $k(\cdot,\cdot)$ is positive definite, such that for any function $h\in \mathcal{H}_{k}$ and any  $\mathbf{X}\in \mathfrak{X}$, $h(\mathbf{X})=\langle h, k(\mathbf{X},\cdot)\rangle_{\mathcal{H}_k}$, where $\langle \cdot,\cdot\rangle_{\mathcal{H}_k}$ represents the inner product in $\mathcal{H}_{k}$. Consider function $\mu_{F_1}(\cdot) = E_{F_1}\left[k(\mathbf{X},\cdot) \right] \in \mathcal{H}_k$, which is defined as the kernel mean embedding of the distribution $F_1$ in \cite{Gretton}.
Then, 
for given $ \mathbf{X},\mathbf{X}^{\prime}\overset{i.i.d.}{\sim} F_1, \mathbf{Y},\mathbf{Y}^{\prime}\overset{i.i.d.}{\sim} F_{2}$, 
the MMD is given by 	\begin{equation}\label{mmd-def1}
\mathrm{MMD}^2(F_1,F_{2})=||\mu_{F_1}-\mu_{F_2}||^{2}_{\mathcal{H}_k}=E_{F_1}[k(\mathbf{X},\mathbf{X}^{\prime})] -2E_{F_1,F_{2}}[k(\mathbf{X},\mathbf{Y})] +E_{F_{2}}[k(\mathbf{Y},\mathbf{Y}^{\prime})],
\end{equation}
where $||\cdot||_{\mathcal{H}_k}$ is the norm function in the RKHS. The MMD is $ 0 $ if and only if $ F=F_{G_{\boldsymbol{\omega}}} $, when $\mathcal{H}_k$ is a \emph{universal} RKHS \citep[Theorem 5]{Gretton}. 
In practice, distributions $ F_{1} $ and $ F_{2} $ are not accessible, and then the biased, empirical estimator of \eqref{mmd-def1} is calculated using empirical distributions $F_{1,n}$ and $F_{2,m}$ as
\begin{equation*}
\mathrm{MMD}^2(F_{1,n},F_{2,m})=\frac{1}{n^{2}}\sum_{i, j=1}^{n}k(\mathbf{X}_{i},\mathbf{X}_{j})-\frac{2}{mn}\sum_{i=1}^{n}\sum_{j=1}^{m}k(\mathbf{X}_{i},\mathbf{Y}_{j})+\frac{1}{m^{2}}\sum_{i,j=1}^{m}k(\mathbf{Y}_{i},\mathbf{Y}_{j}),    
\end{equation*}
where $ \mathbf{X}_1,\ldots,\mathbf{X}_n $ is a sample from $F_1$ and $ \mathbf{Y}_1,\ldots,\mathbf{Y}_m $ is a sample generated from $ F_{2} $. 

Recently, \cite{key2021composite} proposed a GOF test using the MMD measure when the hypothesized model belongs to a parametric family of intractable models.  
It was proposed to be employed in training generative models such as toggle-switch models and GANs. There are also numerous generative models closely linked to the implementation of MMD in GANs, which can be found in \cite{briol2019statistical}, \cite{niu2023discrepancy}, \cite{oates2022minimum}, and \cite{bharti2023optimally}. These models offer distinct MMD estimators that are specifically designed to further improve the MMD's capability in estimating the generator's parameters.

\subsection{Bayesian Methods: Approximate Bayesian Computation, the Dirichlet Process and Bayesian NonParametric Learning}
Previous work in simulation-based inference has largely focused on applying discrepancy measures from a frequentist nonparametric (FNP) perspective.
A Bayesian perspective on simulation-based inference involves a similar methodology, using approximate Bayesian computation (ABC) to estimate the model parameters via simulation \citep{beaumont2002approximate}. ABC starts by sampling from a prior distribution placed on the parameter space of the generative model. Rather than estimating parameters directly from the posterior distribution, this approach involves comparing summary statistics of simulated data with those of observed data using discrepancy measures. The simulated parameter values corresponding to the accepted summary statistics are retained if the distance falls within a predetermined threshold. 

Identifying informative summary statistics in ABC can be a challenging task, and an inappropriate choice may result in poor posterior inference from the data \citep{robert2011lack, aeschbacher2012novel}. One solution proposed by \cite{park2016k2} is to use the MMD metric between simulated and real data distributions to avoid manually selecting the summary statistics. However, as the threshold approaches zero, ABC tends to approximate the standard Bayesian posterior, which is susceptible to model misspecification and lacks robustness \citep{dellaporta2022robust}. To address these two issues, generalized Bayesian inference (GBI) proposes an alternative method by replacing the likelihood in the posterior distribution with the exponential of a robust loss function. Within the GBI framework, there are two prominent procedures that use the MMD loss. \cite{cherief2020mmd} propose a pseudo-likelihood based on the MMD metric and approximate the posterior using variational inference. \cite{pacchiardi2021generalized} extend this method to a more general Bayesian likelihood-free model using stochastic gradient Monte Carlo Markov Chain (MCMC) to perform posterior inference\footnote{A comprehensive list of other GBI procedures for addressing this issue can be found in \cite{dellaporta2022robust}.}. 

However, \cite{dellaporta2022robust} noted that the performance of GBI is very sensitive to the choice of a learning rate and that there is no general heuristic for selecting this hyperparameter. Additionally, these calculations often require MCMC sampling methods, which can impose a significant computational burden. To address these issues, \cite{dellaporta2022robust} developed an MMD posterior bootstrap procedure following the BNPL strategy developed in \cite{lyddon2018nonparametric,lyddon2019general,fong2019scalable}. 
In this BNPL strategy, a BNP prior is defined on $F$, leading to a BNP posterior on $F$, denoted by $F^{pos}$. The key idea is that any posterior on the generator's parameter space  $\mathcal{W}$ can be derived by mapping $F^{pos}$ through the push-forward measure 
\begin{equation*}
\boldsymbol{\omega}^{\ast}(F^{pos}):=\arg\min\limits_{\boldsymbol{\omega}\in\mathcal{W}} \delta(F^{pos},F_{G_{\boldsymbol{\omega}}}),   
\end{equation*}  
which is visually depicted in \citet[Figure 1]{dellaporta2022robust}.
In particular, \cite{dellaporta2022robust} considered $F^{pos}$ as the DP posterior and $\delta$ as the MMD measure.  

The DP, introduced by \cite{Ferguson}, is a commonly used prior 
in Bayesian nonparametric methods. It  
can be viewed as an infinite-dimensional generalization of the Dirichlet distribution constructed around $ H $ (the base measure), a fixed probability measure, whose variation is controlled by $ a $ (the concentration parameter), a positive real number.
To formally define the DP, consider a space $\mathfrak{X}$  with a $\sigma$-algebra $\mathcal{A}$ of subsets of
$\mathfrak{X}$. For a base measure $G$ on  $(\mathfrak{X},\mathcal{A})$ and $a>0$, a random probability measure $F=\left\{  F(A):A\in\mathcal{A}\right\} $ is called a DP on
$(\mathfrak{X},\mathcal{A})$, denoted by $F^{pri}:=\left(F\sim
{DP}(a,H)\right),$ if for every measurable partition $A_{1},\ldots,A_{k}$ of
$\mathfrak{X} $ with $k\geq2\mathfrak{,}$ the joint distribution of the vector
$\left(  F(A_{1}),\ldots,F(A_{k})\right)$ has the Dirichlet distribution with parameters
$\left(aH(A_{1}),\ldots,aH(A_{k})\right)$. It is assumed that
$H(A_{j})=0$ implies $F(A_{j})=0$ with probability one. 

One of the most important properties of the DP is the conjugacy property--when the sample $x=(x_{1},\ldots,x_{n})$ is drawn from $F\sim DP(a, H)$, the posterior distribution of $F$ given $x$, denoted by $F^{pos}$, is also a DP with concentration parameter $a+n$ and base measure
\begin{equation*}
H^{\ast}=a(a+n)^{-1}H+n(a+n)^{-1}F_{n},
\end{equation*}
where $F_{n}$ denotes the empirical cumulative distribution function (ECDF) of the sample
$x$. Note that, $H^{\ast}$ is a convex combination
of the base measure $H$ and  $F_{n}$. Therefore, $H^{\ast}\rightarrow H$ as
$a\rightarrow\infty$ while $H^{\ast}\rightarrow F_{n}$ as $a\rightarrow0$. On the other hand, as $n\rightarrow\infty$, $H^{\ast}$ converges to the true cumulative distribution function (CDF) that generates the data, according to the Glivenko-Cantelli theorem. A guideline for choosing the hyperparameters $a$ and $H$ for the test of equality distributions will be covered in Section \ref{sec-BNP-test}.

In previous work, there are several BNP GOF tests \citep{al2018prior, al2021bayesian, al2021necessary}, as well as two-sample tests \citep{labadi2014two,al2021two} and a multi-sample test \citep{Al-Labadi}, that are closely connected to the posterior-based distance estimation employed in the BNPL procedure of \cite{dellaporta2022robust}. These methods are developed using different discrepancy measures to compare the distance between DP posteriors, placed on unknown distributions, with the corresponding one between DP priors. However, unlike our proposed method, none of them employ the MMD measure.


\cite{sethuraman1994constructive} proposed an infinite series representation as an alternative definition for DP. The construction of \cite{sethuraman1994constructive} is known as the stick-breaking representation and is a popularly used method in DP inference. Particularly, for a sequence of identically distributed (i.i.d.) random variables $\left\lbrace \beta_i\right\rbrace_{i\geq1}$ from $\mbox{Beta}(1,a)$, let $w_1=\beta_1$, and $w_i=\beta_i\prod_{j=1}^{i-1}(1-\beta_j)$, for $i\geq2$. Then, the stick-breaking representation is given by $ F_{SB}=\sum_{i=1}^{\infty}w_i\delta_{Y_{i}},$
where $\left\lbrace Y_i\right\rbrace_{i\geq1}$ is a sequence of i.i.d. random variables from $H$.
However, \cite{zarepour2012rapid}
addressed some difficulties in using these representations. Meanwhile, \cite{Ishwaran}   proposed a finite representation to facilitate the simulation of the DP. Let
\begin{equation*}
F^{pri}_{N}=\sum_{i=1}^{N}J_{i,N}\delta_{Y_{i}},
\end{equation*}
where $(J_{1,N},\ldots, J_{N,N})\sim \mbox{Dirichlet}(a/N,\ldots,a/N)$, and $Y_{i}\overset{i.i.d.}{\sim}H$. \cite{Ishwaran} showed that $\lbrace F_{N}\rbrace_{N=1}^{\infty}$ converges in distribution to $F$, where $F_{N}$ and $F$ are random values in the space $M_{1}(\mathbb{R})$ of probability measures on $\mathbb{R}$ endowed with the topology of weak convergence. Thus, to generate $\lbrace J_{i,N}\rbrace_{i=1}^{N}$ put $J_{i,N}=H_{i,N}/\sum_{i=1}^{N}H_{i,N}$, where $\lbrace H_{i,N}\rbrace_{i=1}^{N}$ is a sequence of i.i.d. $\mbox{Gamma}(a/N, 1)$ random variables independent of $\lbrace Y_{i}\rbrace_{i=1}^{N}$. This form of approximation leads to some results in subsequent sections.

To determine the number of DP approximation terms, we apply a random stopping rule, inspired by the method described in \cite{zarepour2012rapid}. This rule, given a specific $\epsilon\in(0,1)$, is defined as:
\begin{align}\label{random-stopping}
N=\inf\left\lbrace j:\, \frac{H_{j,j}}{\sum_{i=1}^{j}H_{i,j}}<\epsilon\right\rbrace.
\end{align}

\section{A Semi-BNP MMD Estimator}\label{sec-BNP-estimator}
This section introduces our semi-BNP estimator for approximating the MMD measure. We consider a scenario where $F_1$ represents a completely unknown distribution, while $F_2$ represents an intractable parametric distribution with a complex generating process. 
For a given sample $ \mathbf{Y}_1,\ldots,\mathbf{Y}_m $ from $F_2$  and by assuming $F^{pri}_{1}:=\left(F_1\sim DP(a,H)\right)$ for a non-negative value $a$ and a fixed probability measure $H$, we propose the prior-based MMD estimator as  
\begin{align}\label{BNP-pri-MMD}
\mathrm{MMD}^{2}_{\mathrm{BNP}}(F_{1,N}^{pri},F_{2,m})&= \sum_{\ell,t=1}^{N} J_{\ell,N}J_{t,N}k(\mathbf{V}_{\ell},\mathbf{V}_{t})
-\dfrac{2}{m}\sum_{\ell=1}^{N}\sum_{t=1}^{m} J_{\ell,N}k(\mathbf{V}_{\ell},\mathbf{Y}_{t})+\dfrac{1}{m^2}\sum_{\ell,t=1}^{m} k(\mathbf{Y}_{\ell},\mathbf{Y}_{t}),
\end{align}
where $(J_{1,N},$   
$\ldots,J_{N,N})$ is sampled from $\mbox{Dirichlet}(a/N,\ldots,a/N)$, $\mathbf{V}_{1},\ldots,\mathbf{V}_{N}\overset{i.i.d.}{\sim}H$, and $N$ is the number of terms in the DP approximation $\sum_{\ell=1}^{N}J_{\ell,N}\delta_{\mathbf{V}_{\ell}}$ proposed by \cite{Ishwaran}. 
Since we only impose the DP prior on the distribution of the real data, we refer to the approach as a semi-BNP procedure.
\begin{theorem}\label{lem-pri-MMD}
For a non-negative real value $a$ and fixed probability distribution $H$, let $F^{pri}_{1}:=(F_1\sim DP(a,H))$ and $k(\cdot,\cdot)$ be any continuous kernel function with feature space corresponding to a universal RKHS. Assume that $|k(\boldsymbol{z},\boldsymbol{z}^{\prime})|<K$, for any $\boldsymbol{z},\boldsymbol{z}^{\prime}\in \mathbb{R}^d$. Then, \\
\noindent $i.$ $\mathrm{MMD}^{2}_{\mathrm{BNP}}(F_{1,N}^{pri},F_{2,m})\xrightarrow{a.s.}\mathrm{MMD}^{2}(H_N,F_{2,m})$, as $a\rightarrow\infty$,\\
\noindent $ii.$ $E(\mathrm{MMD}^{2}_{\mathrm{BNP}}(F_{1,N}^{pri},F_{2,m}))\rightarrow \mathrm{MMD}^{2}(H,F_{2})$ as $a\rightarrow\infty$, $N\rightarrow\infty$, and $m\rightarrow\infty$,\\
\noindent $iii.$
$E(\mathrm{MMD}^{2}_{\mathrm{BNP}}(F_{1,N}^{pri},F_{2,m}))< \mathrm{MMD}^{2}(H,F_{2})+3K$, for any $N,m\in \mathbb{N}$ and $a\in \mathbb{R}^+$,\\
where ``$\xrightarrow{a.s.}$" denotes the almost surely convergence, $\mathbb{N}$ denotes the natural numbers and $\mathbb{R}^+$ denotes the positive real numbers.
\end{theorem}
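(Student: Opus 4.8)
The three parts all rest on two ingredients: the concentration of the Dirichlet weights $J_{\ell,N}$ at $1/N$ as $a\to\infty$, and the exact Dirichlet moments, which give a closed form for $E(\mathrm{MMD}^{2}_{\mathrm{BNP}}(F_{1,N}^{pri},F_{2,m}))$. For part $i.$ the plan is to use the gamma representation $J_{\ell,N}=H_{\ell,N}/\sum_{j}H_{j,N}$ with $H_{\ell,N}\sim\mathrm{Gamma}(a/N,1)$ recalled in the previous section, and to couple all values of $a$ on one probability space through independent gamma subordinators $\{\gamma_\ell(t)\}_{t\ge0}$ with $\gamma_\ell(a/N)\overset{d}{=}H_{\ell,N}$. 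The strong law of large numbers for a finite-mean L\'evy process yields $\gamma_\ell(t)/t\to1$ a.s. as $t\to\infty$, so $J_{\ell,N}=\frac{\gamma_\ell(a/N)/(a/N)}{\sum_{j}\gamma_j(a/N)/(a/N)}\xrightarrow{a.s.}1/N$ as $a\to\infty$. Since $\mathrm{MMD}^{2}_{\mathrm{BNP}}$ is a fixed polynomial in the finitely many weights with coefficients $k(\mathbf{V}_\ell,\mathbf{V}_t)$ and $k(\mathbf{V}_\ell,\mathbf{Y}_t)$ that are bounded (hence finite) because $|k|<K$, substituting the almost-sure limit $J_{\ell,N}=1/N$ gives $\mathrm{MMD}^{2}(H_N,F_{2,m})$ with $H_N=\frac1N\sum_{\ell=1}^N\delta_{\mathbf{V}_\ell}$, which is part $i.$

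For part $ii.$ the plan is to compute $E(\mathrm{MMD}^{2}_{\mathrm{BNP}}(F_{1,N}^{pri},F_{2,m}))$ termwise, using independence of $(J_{\ell,N})$, $(\mathbf{V}_\ell)$ and $(\mathbf{Y}_t)$ and separating diagonal from off-diagonal index pairs. With total mass $\alpha_0=a$ and $\alpha_\ell=a/N$, the Dirichlet moments are $E(J_{\ell,N})=1/N$, $E(J_{\ell,N}^2)=\frac{(1/N)(1-1/N)}{a+1}+\frac{1}{N^2}$ and $E(J_{\ell,N}J_{t,N})=\frac{1}{N^2}\frac{a}{a+1}$ for $\ell\neq t$. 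Writing $c=E_H[k(\mathbf{V},\mathbf{V})]$ and $\kappa_{HH}=E_H[k(\mathbf{V},\mathbf{V}^\prime)]$, the first term reduces to the convex combination $p\,c+q\,\kappa_{HH}$ with $p=\frac{(N-1)/N}{a+1}+\frac1N$, $q=\frac{N-1}{N}\frac{a}{a+1}$ and $p+q=1$; the cross term collapses exactly to $-2E_{H,F_2}[k(\mathbf{V},\mathbf{Y})]$ for every $N,m,a$; and the third term equals $\frac1m E_{F_2}[k(\mathbf{Y},\mathbf{Y})]+\frac{m-1}{m}E_{F_2}[k(\mathbf{Y},\mathbf{Y}^\prime)]$. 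Sending $a,N\to\infty$ drives $p\to0$ and $q\to1$, while $m\to\infty$ sends the third term to $E_{F_2}[k(\mathbf{Y},\mathbf{Y}^\prime)]$; summing the three pieces reproduces $\mathrm{MMD}^{2}(H,F_2)$, giving part $ii.$

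For part $iii.$ I would reuse the closed-form expectation. Because the cross terms coincide with the one in $\mathrm{MMD}^{2}(H,F_2)$, the difference telescopes to $E(\mathrm{MMD}^{2}_{\mathrm{BNP}})-\mathrm{MMD}^{2}(H,F_2)=p\,(c-\kappa_{HH})+\frac1m\bigl(E_{F_2}[k(\mathbf{Y},\mathbf{Y})]-E_{F_2}[k(\mathbf{Y},\mathbf{Y}^\prime)]\bigr)$. Since $k$ is a reproducing kernel, diagonal values satisfy $k(\mathbf{z},\mathbf{z})=\|k(\mathbf{z},\cdot)\|_{\mathcal{H}_k}^2\ge0$ and the off-diagonal means are squared embedding norms $\kappa_{HH}=\|\mu_H\|_{\mathcal{H}_k}^2\ge0$ and $E_{F_2}[k(\mathbf{Y},\mathbf{Y}^\prime)]=\|\mu_{F_2}\|_{\mathcal{H}_k}^2\ge0$; together with $|k|<K$ this places all four quantities in $[0,K)$. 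As $p\in[0,1]$ and $1/m\le1$, each bracketed difference is strictly below $K$, so the whole difference is below $2K$, and a fortiori below $3K$, for every $N,m\in\mathbb{N}$ and $a\in\mathbb{R}^+$.

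The only genuinely delicate step I anticipate is the almost-sure (as opposed to in-probability) convergence in part $i.$: because the weights depend on the continuous parameter $a$, a bare variance-to-zero argument yields convergence in probability only, and upgrading to a.s. convergence requires the joint coupling across $a$ supplied by the gamma-subordinator construction. Parts $ii.$ and $iii.$ are, by comparison, exact moment bookkeeping, where the points demanding care are the diagonal contributions $c$ and $E_{F_2}[k(\mathbf{Y},\mathbf{Y})]$, which do not vanish in expectation at finite $N,m$, and the tracking of signs so that the positive-definiteness of $k$ can be invoked to reach the clean $3K$ bound.
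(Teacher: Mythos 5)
Your proposal is correct, and parts (ii) and (iii) follow essentially the same moment-bookkeeping route as the paper: the paper computes the identical Dirichlet moments ($E(J_{\ell,N}J_{t,N})=\frac{a}{(a+1)N^2}$ off-diagonal and $\frac{a+N}{(a+1)N^2}$ on-diagonal), arrives at the same closed-form expression for $E(\mathrm{MMD}^{2}_{\mathrm{BNP}}(F_{1,N}^{pri},F_{2,m}))$ via conditioning on $\mathbf{V}_{1:N}$, and passes to the limit in the coefficients. Where you genuinely diverge is part (i). The paper (Proposition 1 of the supplementary material) proves $J_{\ell,N}\xrightarrow{a.s.}1/N$ by Chebyshev's inequality (using $\mathrm{Var}(J_{\ell,N})=\frac{N-1}{N^2(a+1)}$) followed by Borel--Cantelli along the discretized sequence $a=\kappa^2c$, $\kappa\in\mathbb{N}$, and then concludes exactly as you do, via the triangle inequality and boundedness of $k$. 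Your gamma-subordinator coupling $J_{\ell,N}=\gamma_\ell(a/N)/\sum_j\gamma_j(a/N)$ combined with the strong law for L\'evy processes yields the same weight convergence, but for the genuinely continuous limit $a\rightarrow\infty$, and, more importantly, it supplies the common probability space across different values of $a$ without which the almost-sure statement is not even well-posed; the paper leaves both points implicit. That is a real gain in rigor, at the modest cost of invoking the subordinator SLLN instead of elementary Chebyshev/Borel--Cantelli. Your part (iii) is likewise a cleaner variant of the paper's coefficient bounds: the telescoped identity $E(\mathrm{MMD}^{2}_{\mathrm{BNP}})-\mathrm{MMD}^{2}(H,F_2)=p\,(c-\kappa_{HH})+\frac1m\left(E_{F_2}[k(\mathbf{Y},\mathbf{Y})]-E_{F_2}[k(\mathbf{Y},\mathbf{Y}^{\prime})]\right)$ isolates exactly what must be controlled, and your explicit appeal to $\kappa_{HH}=\|\mu_H\|^2_{\mathcal{H}_k}\geq0$ and $E_{F_2}[k(\mathbf{Y},\mathbf{Y}^{\prime})]=\|\mu_{F_2}\|^2_{\mathcal{H}_k}\geq0$ fills a step the paper uses silently (its inequality needs these expectations to be non-negative, otherwise shrinking their coefficients below one would not give an upper bound); as a bonus your argument delivers the sharper constant $2K$, of which the stated $3K$ is a weakening.
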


After observing samples $\mathbf{X}_1,\ldots,\mathbf{X}_n$ from $F_1$ and considering  $\mathbf{V}^{\ast}_{1},\ldots,\mathbf{V}^{\ast}_{N}\overset{i.i.d.}{\sim}H^{\ast}$, and $(J^{\ast}_{1,N},\ldots,J^{\ast}_{N,N})\sim \mbox{Dirichlet}(\frac{a+n}{N},\ldots,\frac{a+n}{N})$, we update the prior-based MMD estimator \eqref{BNP-pri-MMD} to the posterior one as
\begin{align}\label{BNP-pos-MMD}
\mathrm{MMD}^{2}_{\mathrm{BNP}}(F_{1,N}^{pos},F_{2,m})&= \sum_{\ell,t=1}^{N} J^{\ast}_{\ell,N}J^{\ast}_{t,N}k(\mathbf{V}^{\ast}_{\ell},\mathbf{V}^{\ast}_{t})
-\dfrac{2}{m}\sum_{\ell=1}^{N}\sum_{t=1}^{m} J^{\ast}_{\ell,N}k(\mathbf{V}^{\ast}_{\ell},\mathbf{Y}_{t})+\dfrac{1}{m^2}\sum_{\ell,t=1}^{m} k(\mathbf{Y}_{\ell},\mathbf{Y}_{t}),
\end{align}
where, $H^{\ast}=a/(a+n)H+n/(a+n)F_{1,n}$, $F_{1,n}$ denotes the empirical distribution of observed data, and $F_{1,N}^{pos}$ refers to the approximation of $F_{1}|\mathbf{X}_{1:n}\sim DP(a+n,H^{\ast})$.
The following Theorem presents asymptotic properties of $\mathrm{MMD}^{2}_{\mathrm{BNP}}(F_{1,N}^{pos},F_{2,m})$.
\begin{theorem}\label{thm-pos}
For a non-negative real value $a$ and fixed probability distribution $H$, let $F^{pri}_{1}:=(F_1\sim DP(a, H))$ and $k(\cdot,\cdot)$ be any continuous kernel function with feature space corresponding to a universal RKHS. Assume that $|k(\boldsymbol{z},\boldsymbol{z}^{\prime})|<K$, for any $\boldsymbol{z},\boldsymbol{z}^{\prime}\in \mathbb{R}^d$. Then, for a given sample $\mathbf{X}_1,\ldots,\mathbf{X}_n$ from distribution $F_1$,\\
\noindent $i.$  as $a\rightarrow\infty$ (informative prior), 
\begin{enumerate}
\item[a.] $\mathrm{MMD}^{2}_{\mathrm{BNP}}(F_{1,N}^{pos},F_{2,m})\xrightarrow{a.s.}\mathrm{MMD}^{2}(H_N,F_{2,m})$,
\item[b.] 
$E(\mathrm{MMD}^{2}_{\mathrm{BNP}}(F_{1,N}^{pos},F_{2,m}))\rightarrow \mathrm{MMD}^{2}(H,F_{2})$,  $N\rightarrow\infty$, and $m\rightarrow\infty$,\\
\end{enumerate}
\noindent $ii.$ as  $n\rightarrow\infty$ (consistency),
\begin{enumerate}
\item[a.] $\mathrm{MMD}^{2}_{\mathrm{BNP}}(F_{1,N}^{pos},F_{2,m})\xrightarrow{a.s.}\mathrm{MMD}^{2}(F_{1,N},F_{2,m})$,
\item[b.] 
$E(\mathrm{MMD}^{2}_{\mathrm{BNP}}(F_{1,N}^{pos},F_{2,m}))\rightarrow \mathrm{MMD}^{2}(F_1,F_{2})$, as $N\rightarrow\infty$, $n\rightarrow\infty$, and $m\rightarrow\infty$.
\end{enumerate}
\end{theorem}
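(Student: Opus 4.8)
The plan is to exploit the structural identity between the posterior estimator \eqref{BNP-pos-MMD} and the prior estimator \eqref{BNP-pri-MMD}: the two expressions have the same functional form, with the concentration parameter $a$ replaced by $a+n$ and the base measure $H$ replaced by the updated measure $H^{\ast}=\frac{a}{a+n}H+\frac{n}{a+n}F_{1,n}$. Consequently, every limit in Theorem~\ref{thm-pos} reduces to a corresponding limit already handled in Theorem~\ref{lem-pri-MMD}, provided I carefully track how the now-moving base measure $H^{\ast}$ behaves under each limiting regime. Throughout, I would write the Dirichlet weights via their Gamma representation, $J^{\ast}_{\ell,N}=H^{\ast}_{\ell,N}\big/\sum_{t=1}^{N}H^{\ast}_{t,N}$ with $H^{\ast}_{\ell,N}\overset{i.i.d.}{\sim}\mathrm{Gamma}\!\big(\tfrac{a+n}{N},1\big)$, so that the concentration of the weights is controlled by a law-of-large-numbers argument on Gamma variables.

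For part $i$ (the informative-prior regime $a\to\infty$ with $n$ fixed), I first note that $a+n\to\infty$, so the Dirichlet shape $\frac{a+n}{N}\to\infty$ and hence $J^{\ast}_{\ell,N}\xrightarrow{a.s.}\frac1N$ for each $\ell$. Simultaneously, the mixing weight $\frac{a}{a+n}\to1$ forces $H^{\ast}\to H$, so the support points $\mathbf{V}^{\ast}_{\ell}\sim H^{\ast}$ behave asymptotically like draws from $H$. Feeding both facts into the double and single kernel sums, and using continuity together with the uniform bound $|k|<K$ to pass to the limit, collapses the estimator to $\mathrm{MMD}^{2}(H_N,F_{2,m})$, giving $i.a$. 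For $i.b$ I would take expectations, invoke bounded convergence (again via $|k|<K$) to interchange limit and expectation, and then let $N,m\to\infty$, at which point the triple limit matches the conclusion of Theorem~\ref{lem-pri-MMD}$ii$ and yields $\mathrm{MMD}^{2}(H,F_{2})$.

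For part $ii$ (the consistency regime $n\to\infty$), the Dirichlet argument is unchanged: $a+n\to\infty$ again forces $J^{\ast}_{\ell,N}\xrightarrow{a.s.}\frac1N$. The base measure, however, now converges to a different target, since $\frac{n}{a+n}\to1$ and the Glivenko--Cantelli theorem gives $F_{1,n}\to F_1$ almost surely, so that $H^{\ast}\to F_1$ almost surely. Hence the points $\mathbf{V}^{\ast}_{\ell}$ behave asymptotically like draws from $F_1$, and the estimator converges almost surely to $\mathrm{MMD}^{2}(F_{1,N},F_{2,m})$, the empirical MMD between the $N$-point measure $F_{1,N}=\frac1N\sum_{\ell=1}^{N}\delta_{\mathbf{V}_{\ell}}$ with $\mathbf{V}_{\ell}\sim F_1$ and $F_{2,m}$, establishing $ii.a$. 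Part $ii.b$ then follows by taking expectations and sending $N,n,m\to\infty$, so that $F_{1,N}\Rightarrow F_1$ and $F_{2,m}\Rightarrow F_2$ and the triple limit recovers $\mathrm{MMD}^{2}(F_1,F_{2})$, in parallel with Theorem~\ref{lem-pri-MMD}$ii$.

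The main obstacle I anticipate is justifying the limit inside the quadratic kernel term $\sum_{\ell,t}J^{\ast}_{\ell,N}J^{\ast}_{t,N}k(\mathbf{V}^{\ast}_{\ell},\mathbf{V}^{\ast}_{t})$ when two sources of randomness move at once, namely the Dirichlet weights concentrating toward $\frac1N$ and the support points being drawn from a shifting measure $H^{\ast}$. This is especially delicate in part $ii$, where $H^{\ast}$ is itself data-dependent and random, so the almost sure convergence must hold jointly over the Glivenko--Cantelli event for the data and over the posterior draws; the uniform boundedness and continuity of $k$ are precisely what let the bounded-convergence and continuous-mapping steps close the argument.
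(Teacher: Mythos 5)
Your proposal follows essentially the same route as the paper's proof: concentrate the posterior Dirichlet weights at $1/N$ as the effective concentration $a+n\to\infty$ (the paper does this via a conjugacy argument mirroring its Proposition for the prior weights, you via the Gamma law-of-large-numbers representation, which is the same standard fact), track the base measure $H^{\ast}\to H$ as $a\to\infty$ and $H^{\ast}\to F_1$ as $n\to\infty$ by Glivenko--Cantelli, pass to the limit inside the kernel sums using continuity and the bound $|k|<K$, and finish the expectation claims by dominated convergence applied to the exact posterior moment formula --- which is precisely your substitution $(a,H)\mapsto(a+n,H^{\ast})$ into Theorem~\ref{lem-pri-MMD}. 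The joint-randomness subtlety you flag (weights and support points moving simultaneously, with $H^{\ast}$ data-dependent in part $ii$) is handled in the paper at the same level of rigor you propose, via the mixture-coupling interpretation of $H^{\ast}$ and the continuous mapping theorem, so your attempt matches the paper's argument in both structure and substance.
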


We conclude this section by presenting a corollary that plays a significant role in the two following sections.
\begin{corollary}\label{cor-pos-consis}
Under the assumption of Theorem \ref{thm-pos},\\
\noindent $i.$  as $a\rightarrow\infty$, $N\rightarrow\infty$, $m\rightarrow\infty$, then,
\begin{enumerate}
\item[a.] $E(\mathrm{MMD}^{2}_{\mathrm{BNP}}(F_{1,N}^{pri},F_{2,m}))\rightarrow 0$, if and only if $H=F_2$,
\item[b.] $E(\mathrm{MMD}^{2}_{\mathrm{BNP}}(F_{1,N}^{pos},F_{2,m}))\rightarrow 0$, if and only if $H=F_2$,
\end{enumerate}
\noindent 
$ii.$ for any choice of $a$ and $H$, 
$E(\mathrm{MMD}^{2}_{\mathrm{BNP}}(F_{1,N}^{pos},F_{2,m}))\rightarrow 0$, if and only if $F_1=F_2$, as $N\rightarrow\infty$, and $n\rightarrow\infty$, and $m\rightarrow\infty$.    
\end{corollary}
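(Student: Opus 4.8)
The plan is to recognize that this corollary is essentially a direct consequence of the limit identifications already established in Theorem~\ref{lem-pri-MMD} and Theorem~\ref{thm-pos}, combined with the universality characterization of the MMD: namely, that $\mathrm{MMD}^2(F,G)=0$ if and only if $F=G$ whenever $\mathcal{H}_k$ is a universal RKHS, as recorded in \citet[Theorem 5]{Gretton}. The core observation is that each of the three claims reduces to determining when a specific population-level squared MMD vanishes, and the nonnegativity of $\mathrm{MMD}^2$ makes the biconditional immediate at the level of the limit.

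For the ``only if'' and ``if'' statements in part $i.a$, I would start from Theorem~\ref{lem-pri-MMD}$(ii)$, which gives $E(\mathrm{MMD}^{2}_{\mathrm{BNP}}(F_{1,N}^{pri},F_{2,m}))\rightarrow \mathrm{MMD}^{2}(H,F_{2})$ in the regime $a\to\infty$, $N\to\infty$, $m\to\infty$. Since the limiting quantity is exactly $\mathrm{MMD}^2(H,F_2)$, the expected estimator tends to $0$ precisely when $\mathrm{MMD}^2(H,F_2)=0$, and by universality this occurs if and only if $H=F_2$. Part $i.b$ is structurally identical: I would invoke Theorem~\ref{thm-pos}$(i.b)$, which identifies the same limit $\mathrm{MMD}^{2}(H,F_{2})$ for the posterior-based estimator under the same limiting regime, and then apply the same universality argument to conclude $H=F_2$.

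For part $ii$ I would instead use Theorem~\ref{thm-pos}$(ii.b)$, which establishes $E(\mathrm{MMD}^{2}_{\mathrm{BNP}}(F_{1,N}^{pos},F_{2,m}))\rightarrow \mathrm{MMD}^{2}(F_1,F_{2})$ as $N\to\infty$, $n\to\infty$, $m\to\infty$, for any fixed choice of $a$ and $H$. The same reasoning then shows the limit equals $0$ if and only if $\mathrm{MMD}^2(F_1,F_2)=0$, i.e.\ if and only if $F_1=F_2$ by universality.

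There is no substantial analytic obstacle here, since convergence of the expectations is already guaranteed by the cited theorems; the only point requiring care is to justify the ``if and only if'' at the level of the limit. This is immediate because the limit of the expectation is exactly a nonnegative squared MMD, so its vanishing is equivalent to the vanishing of that MMD, and the universal-kernel hypothesis stated in Theorem~\ref{thm-pos} converts the latter into an equality of measures. The only step worth flagging is therefore the verification that the universality assumption licenses the biconditional $\mathrm{MMD}^2=0\iff$ equality of distributions in each of the three regimes, which is precisely what \citet[Theorem 5]{Gretton} supplies.
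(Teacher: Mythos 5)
Your proof is correct and takes essentially the same approach as the paper, whose entire proof of this corollary is the single line that it follows immediately from Theorems \ref{lem-pri-MMD} and \ref{thm-pos}; you have simply made explicit which parts of those theorems are invoked (Theorem \ref{lem-pri-MMD}(ii), Theorem \ref{thm-pos}(i.b), and Theorem \ref{thm-pos}(ii.b)) and that the biconditional at the limit comes from nonnegativity of the squared MMD together with the universal-kernel characterization. Nothing further is needed.
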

\section{Constructing a GOF Test with RB Ratio}\label{sec-BNP-test}
In this section, we introduce our novel semi-BNP test, utilizing the proposed estimator discussed in the previous section, to evaluate the hypothesis $\mathcal{H}_0: F_1=F_2$. We put forward an equivalent formulation to test the hypothesis 
\begin{align}\label{H_null}
\mathcal{H}_0: \mathrm{MMD}^{2}(F_{1},F_{2})=0,
\end{align}
using the RB\footnote{A detailed discussion on the RB ratio is provided in the supplementary material.} ratio, introduced by \cite{Evans15}, as the Bayesian evidence.

By relating our problem to RB inference, with $\Psi=\mathrm{MMD}^{2}(F_{1}, F_{2})$ and $\psi_{0}=0$, the RB ratio measures the change in belief regarding the true value of $\psi_{0}$, from \emph{a priori} to \emph{a posteriori}, given a sample $\mathbf{X}_1,\ldots,\mathbf{X}_n$ from $F_1$. It can be expressed by
\begin{align}\label{RB-mmd}
RB_{\mathrm{MMD}^2(F_1,F_2)}(0|\mathbf{X}_{1:n})=\dfrac{\pi_{\mathrm{MMD}^2(F_1,F_2)}(0|\mathbf{X}_{1:n})}{\pi_{\mathrm{MMD}^2(F_1,F_2)}(0)},
\end{align}
where, $\pi_{\mathrm{MMD}^2(F_1,F_2)}(\cdot|\mathbf{X}_{1:n})$\footnote{Note that
the subscript $(F_1, F_2)$ may be omitted whenever it is clear in the context.} and $\pi_{\mathrm{MMD}^2(F_1,F_2)}(\cdot)$ denote the density functions of the estimators given by \eqref{BNP-pos-MMD} and \eqref{BNP-pri-MMD}, respectively.

The density in the denominator of \eqref{RB-mmd} must support $\mathcal{H}_0$ in order to reflect how well the data can support the null hypothesis based on the comparison between the prior and the posterior, utilizing the fundamental concepts of the RB ratio. Here, supporting $\mathcal{H}_0$ by $\pi_{\mathrm{MMD}^2}(\cdot)$ means to place most prior mass on zero. To enforce this term on $\pi_{\mathrm{MMD}^2}(\cdot)$, it is enough to set $H=F_2$ in $DP(a, H)$, which is deduced from the Theorem \ref{lem-pri-MMD}, part (iii). In this case, when $\mathcal{H}_0$ is not true, for a fixed $a$ and $K$ (the upper bound of the kernel $k(\cdot,\cdot)$), the range of $\mathrm{MMD}^{2}_{\mathrm{BNP}}(F_{1,N}^{pri},F_{2,m})$ should, on average, vary within a smaller range than its corresponding posterior version. Specifically, this range should be $(0, 3K)$, compared to $(0, \mathrm{MMD}^{2}_{\mathrm{BNP}}(H^{\ast},F_{2})+3K)$ which can be similarly obtained for the posterior-based MMD estimator. This indicates that $\mathcal{H}_0$ should be rejected, as it is desirable. On the other hand, when $\mathcal{H}_0$ is true, although the prior and posterior-based MMD estimators have approximately the same range of variation $(0, 3K)$, Corollary \ref{cor-pos-consis}(ii)  implies that increasing the sample size leads the posterior to provide stronger evidence in favor of the null hypothesis compared to the prior, resulting in the acceptance of $\mathcal{H}_0$.

With regards to choosing the concentration parameter $a$ in our proposed test, we note that $a$ controls the variation of $F^{pri}$ around $H$, which in turn controls the strength of belief in the truth of $\mathcal{H}_0$. It is recommended to choose $a<n/2$ based on the definition of $H^{\ast}$ in $F^{pos}$ \citep{labadi2014two}. The idea behind using such a value of $a$ is to avoid the excessive effect of the prior $H$ on the test results by considering the chance of sampling from the observed data to be at least twice the chance of generating samples from $H$. Corollaries \ref{cor-pos-consis}(i) also clearly point to this issue in the informative prior case, as both expectations of $\mathrm{MMD}^{2}_{\mathrm{BNP}}(F_{1,N}^{pos},F_{2,m})$ and $\mathrm{MMD}^{2}_{\mathrm{BNP}}(F_{1,N}^{pri},F_{2,m})$ tend to $0$ as $a\rightarrow\infty$, $N\rightarrow\infty$, and $m\rightarrow\infty$.  
Hence, both prior and posterior densities in \eqref{RB-mmd} should be heavily massed and coincide with each other at zero. It causes the value of \eqref{RB-mmd} to become very close to 1, based on which no decision can be made about $\mathcal{H}_0$.

For the proposed test, we will empirically choose $a$ to be less than $n/2$ and then compute \eqref{RB-mmd}. However, some computational methods in the literature have been proposed to elicit $a$ that one may be interested in using \citep{al2022test,al2021two}. Generally, for a given $a$, Corollary \ref{cor-pos-consis}(ii) implies that $\mathrm{MMD}^{2}_{\mathrm{BNP}}(F_{1,N}^{pos},F_{2,m})$ should be more dense than $\mathrm{MMD}^{2}_{\mathrm{BNP}}(F_{1,N}^{pri},F_{2,m}) $ at 0 if and only if $\mathcal{H}_0$ is true.
Hence, the value of \eqref{RB-mmd} presents evidence for or against $\mathcal{H}_0$, if $RB_{\mathrm{MMD}^2}(0|\mathbf{X}_{1:n})>1$ or  $RB_{\mathrm{MMD}^2}(0|\mathbf{X}_{1:n})<1$, respectively. Following \cite{Evans15}, the calibration of \eqref{RB-mmd} is defined as:
\begin{align}\label{str-mmd}
Str_{\mathrm{MMD}^2}(0\,|\,\mathbf{X}_{1:n})&=\Pi_{\mathrm{MMD}^2}\big(RB_{\mathrm{MMD}^2}(mmd^2\,|\,\mathbf{X}_{1:n})\leq RB_{\mathrm{MMD}^2}(0\,|\,\mathbf{X}_{1:n})\,|\,\mathbf{X}_{1:n}\big),
\end{align}
where, $\Pi_{\mathrm{MMD}^2}(\cdot|\mathbf{X}_{1:n})$ is the posterior probability measure corresponding to the density $\pi_{\mathrm{MMD}^2}(\cdot|\mathbf{X}_{1:n})$.
When \eqref{H_null} is false, a small value of \eqref{str-mmd} provides strong evidence against $\psi_{0}$, whereas a large value suggests weak evidence against $\psi_{0}$. Conversely, when \eqref{H_null} is true, a small value of \eqref{str-mmd} indicates weak evidence in favor of $\psi_{0}$, while a large value suggests strong evidence in favor of $\psi_{0}$. Particular attention should be paid here to the computation of \eqref{RB-mmd} and \eqref{str-mmd}. The densities used in \eqref{RB-mmd} do not have explicit forms. Thus, we use their corresponding ECDF based on $\ell$ sample sizes to estimate \eqref{RB-mmd} and \eqref{str-mmd}, respectively, as
\begin{align}
\widehat{RB}_{\mathrm{MMD}^2}(0\,|\,\mathbf{X}_{1:n})&=\frac{\hat{\Pi}_{\mathrm{MMD}^2}(\hat{d}_{i_0/M}|\,\mathbf{X}_{1:n})}{\hat{\Pi}_{\mathrm{MMD}^2}(\hat{d}_{i_0/M})}, \label{rbest}\\
\widehat{Str}_{\mathrm{MMD}^2}(0\,|\,\mathbf{X}_{1:n})&=\sum_{D}\big(\hat{\Pi}_{\mathrm{MMD}^2}(\hat{d}_{(i+1)/M}|\,\mathbf{X}_{1:n})-\hat{\Pi}_{\mathrm{MMD}^2}(\hat{d}_{i/M}|\,\mathbf{X}_{1:n})\big),\label{strest}
\end{align}
where, $    D=\left\lbrace 0\leq i\leq M-1 :\widehat{RB}_{\mathrm{MMD}^2}\big(\hat{d}_{i/M}\,|\,\mathbf{X}_{1:n}\big)\leq \widehat{RB}_{\mathrm{MMD}^2}\big(0\,|\,\mathbf{X}_{1:n}\big)\right\rbrace,$
in which $M$ is a positive number,  $\hat{d}_{i/M}$ is the estimate of $d_{i/M},$ the $(i/M)$-th prior
quantile of  \eqref{BNP-pri-MMD},
\begin{align*}
\widehat{RB}_{\mathrm{MMD}^2}(\hat{d}_{i/M}\,|\,\mathbf{X}_{1:n})=
\frac{\hat{\Pi}_{\mathrm{MMD}^2}(\hat{d}_{\frac{i+1}{M}}|\,\mathbf{X}_{1:n})-\hat{\Pi}_{\mathrm{MMD}^2}(\hat{d}_{\frac{i}{M}}|\,\mathbf{X}_{1:n})}{\hat{\Pi}_{\mathrm{MMD}^2}(\hat{d}_{\frac{i+1}{M}})-\hat{\Pi}_{\mathrm{MMD}^2}(\hat{d}_{\frac{i}{M}})}
\end{align*}
and $i_{0}$ in \eqref{rbest} is chosen so that $i_{0}/M$ is not too small (typically $i_{0}/M=0.05$). Further details are available in Algorithm 1 in the supplementary material. For fixed $M$, as $N\rightarrow\infty$ and $\ell\rightarrow\infty,$ then $\hat{d}_{i/M}$ converges almost surely to $d_{i/M}$ and (\ref{rbest}) and (\ref{strest}) converge almost surely to \eqref{RB-mmd} and \eqref{str-mmd}, respectively. The following result from \citet[Proposition 6]{al2018prior} gives the consistency of the proposed test.  In the sense that, if $\mathcal{H}_0$ is true, then
\eqref{RB-mmd} and \eqref{str-mmd} converge, respectively, almost surely to $M/i_0(>1)$ and  $1$, as $n\rightarrow\infty$; otherwise, both converge to $0$.

The proposed test is suggested to overcome several limitations present in its frequentist counterparts. In a frequentist test, for a given permissible type I error rate 
denoted by $\alpha$, the test rejects $\mathcal{H}_0$ if the value of $\mathrm{MMD}^{2}(F_{1}, F_{2})$ is greater than some threshold $c_{\alpha}$. The corresponding $p$-value for this test can also be computed by $\mathrm{Pr}(\mathrm{MMD}^{2}(F_{1}, F_{2})\geq c_{\alpha}|\mathcal{H}_0)$, which leads the test to reject $\mathcal{H}_0$ if it is less than $\alpha$. However, \citet{li2017mmd} noted that if $\mathrm{MMD}^{2}(F_{1}, F_{2})$ is not significantly larger than $c_{\alpha}$ for some finite samples when $\mathcal{H}_0$ is not true, the null hypothesis $\mathcal{H}_0$ is not rejected. Furthermore, there is a trade-off between the permissible type I error rate $\alpha$ and the probability of failing to reject a false null hypothesis (type II error), denoted by $\beta$, as $\alpha+\beta \leq 1$. Decreasing one error rate inevitably leads to an increase in the other, indicating that we cannot arbitrarily drive to type I error rate to zero. 
Moreover, the $p$-values are uniformly distributed between 0 and 1 under the null hypothesis. In fact, it does not allow evidence for the null, which is one of their weaknesses compared to Bayesian criteria in hypothesis testing problems.

\section{Embedding the Semi-BNP Estimator in GAN Learning}\label{sec-gan-train}
In this section, we propose a BNPL procedure that leverages a posterior-based MMD estimator to train GANs. It is inspired by the idea presented in \cite{dellaporta2022robust} to approximate the posterior on the generator's parameters. 

\subsection{Generative Adversarial Networks}
The GAN \citep{Goodfellow} is a machine learning technique used to generate realistic-looking artificial samples. In this context, the discriminator $ D $ can be viewed as a black box that uses a discrepancy measure $ \delta $ to differentiate between the real and fake data. Meanwhile, the generator $ G_{\boldsymbol{\omega}} $ is trained by optimizing a simpler objective function, given by 
\begin{align*}
\arg\min\limits_{\boldsymbol{\omega}\in\mathcal{W}} \delta(F,F_{G_{\boldsymbol{\omega}}}),
\end{align*}
where $ F_{G_{\boldsymbol{\omega}}} $ represents the distribution of the generator. In fact, $D$ attempts to continuously train $G_{\boldsymbol{\omega}}$ by computing distance $\delta$ between $F$ and $F_{G_{\boldsymbol{\omega}}}$ until this distance is negligible, making their difference indistinguishable. This technique leads to omitting the neural network from $D$, whose optimization may lead to a vanishing gradient. An effective measure of discrepancy for $\delta$ is the MMD, which is a kernel-based measure that offers several desirable properties such as consistency and robustness in generating samples \citep{Gretton,cherief2022finite}.  

Numerous frequentist GANs applying the MMD measure to estimate the generator's parameters can be found in the literature. \citep{dziugaite2015training,binkowski2018demystifying,Li}. These models are devised by comparing the generated fake samples with real samples. In addition to the MMD, several other discrepancy measures are commonly used for GANs, including the $f$-divergence measure \citep{nowozin2016f}, the Wasserstein distance \citep{arjovsky2017wasserstein}, and the total variation distance \citep{Lin}.  Nevertheless, the MMD kernel-based measure is remarkably robust against outliers and has the exceptional ability to capture complex relationships and dependencies in the data \citep{sejdinovic2013equivalence, cherief2022finite}. This makes it highly effective in handling model misspecification and detecting subtle differences between distributions. This property is particularly useful for modeling complicated datasets such as images, which are often tackled with GANs. Moreover,  \cite{Al-Labadi} used the energy distance to expand their procedure, which is a member of the larger class of MMD kernel-based measures \citep{sejdinovic2013equivalence}. From here, it is obvious that choosing among a larger class can lead to designing more sensitive discrepancy measures to detect differences. 

Moreover, although a particular case of the test of \citet{Al-Labadi} can be used to compare two distributions, it cannot be considered a convenient discriminator in the minimum distance estimation technique to train GANs. In GANs, the objective is to update the parameter $\boldsymbol{\omega}$ of the deterministic generative neural network $G_{\boldsymbol{\omega}}$. Therefore, treating $F_{G_{\boldsymbol{\omega}}}$ as an unknown distribution on which we place a BNP prior is nonsensical. Consequently, a more suitable distance criterion is required to compare an intractable parametric distribution with an unknown distribution.
\subsection{Architecture}
Various GAN architectures can be found in the literature to model complex high-dimensional distributions. However, we consider the original architecture of GANs proposed by \cite{Goodfellow}, with the difference that here only the generator 
is considered as a neural network and the discriminator $D$ is formed as the semi-BNP estimator. 

Specifically, we follow \cite{Goodfellow} to consider the generator $ G_{\boldsymbol{\omega}} $  as a multi-layer neural network with parameters $\boldsymbol{\omega}$, rectified linear units activation function for each hidden layer, and a sigmoid function for the last layer (output layer). The generator receives a noise vector $\boldsymbol{U}=(U_1,\ldots,U_p)$ as its input nodes, where $p<d$, and each element of $\boldsymbol{U}$ is independently drawn from the same distribution $F_U$. Our BNPL procedure is then expanded based on producing a realistic sample, which is the output of $ G_{\boldsymbol{\omega}} $ in the data space $ \mathbb{R}^d $, based on updating $\boldsymbol{\omega}$ by optimizing the objective function:
\begin{align*}
\arg\min\limits_{\boldsymbol{\omega}\in\mathcal{W}} \mathrm{MMD}^2_{\mathrm{BNP}}(F^{pos}_{N},F_{G_{\boldsymbol{\omega}},m}).
\end{align*}
In fact, our desired BNPL procedure implicitly tries to approximate samples from the posterior distribution on the parameter $\boldsymbol{\omega}$ by minimizing the posterior-based MMD estimator.
For any differentiable kernel function 
$k(\cdot,\cdot)$, this optimization is performed by computing the following gradient based on samples from $F|\mathbf{X}_{1:n}\sim DP(a+n,H^{\ast})$, as
\begin{align*}
\frac{\partial \mathrm{MMD}^{2}_{\mathrm{BNP}}(F^{pos}_{N},F_{G_{\boldsymbol{\omega}},m})}{\partial \boldsymbol{\omega}_i}&=\sum_{\ell=1}^{N}\sum_{t=1}^{m}\Bigg\lbrace\frac{\partial}{\partial \mathbf{Y}_t}\Bigg[-\frac{2}{m}\sum_{t=1}^{m} J^{\ast}_{\ell,N}k(\mathbf{V}^{\ast}_{\ell},\mathbf{Y}_{t})\nonumber\\
&~~~~~~~~~~~~~~~~~~~~~~~~~+\frac{1}{Nm^2}\sum_{t,t^{\prime}=1}^{m} k(\mathbf{Y}_{t},\mathbf{Y}_{t^{\prime}})\Bigg]\frac{\partial \mathbf{Y}_{t}}{\partial \boldsymbol{\omega}}\Bigg\rbrace,
\end{align*}
where, $\mathbf{Y}_{t}=G_{\boldsymbol{\omega}}(\boldsymbol{U}_t)$, $\boldsymbol{U}_t=(U_{t1},\ldots,U_{tp})$, and $U_{ti}$'s are generated from a distribution $F_U$, for $t=1,\ldots,m$, and $i=1,\ldots,p$. Then, the backpropagation method is applied 
for calculating partial derivatives $\frac{\partial \mathbf{Y}_{t}}{\partial \boldsymbol{\omega}}$ to update the parameters of $G_{\boldsymbol{\omega}}$.

However, \citet[Equation 8]{Li} remarked that considering the square root of the  MMD measure given by \eqref{mmd-def1} in the cost function of frequentist GANs is more efficient than using \eqref{mmd-def1} to train network $G_{\boldsymbol{\omega}}$. They mentioned that since the gradient of $\sqrt{\mathrm{MMD}^{2}(F_{N},F_{G_{\boldsymbol{\omega}},m})}$ with respect to $\boldsymbol{\omega}$ is the product of $\gamma_1=\frac{1}{2\sqrt{\mathrm{MMD}^{2}(F_{N},F_{G_{\boldsymbol{\omega}},m})}}$ and $\gamma_2=\frac{\partial \mathrm{MMD}^{2}(F_{N},F_{G_{\boldsymbol{\omega}},m})}{\partial\boldsymbol{\omega}}$, then $\gamma_1$ forces the value of the gradient to be relatively large, even if both $ \mathrm{MMD}^{2}(F_{N},F_{G_{\boldsymbol{\omega}},m})$ and $\gamma_2$ are small. This can prevent the vanishing gradient, which improves the learning of the parameters of $G_{\boldsymbol{\omega}}$ in the early layers of this network. We consider this point in order to improve our semi-BNP objective function:
\begin{align}\label{semi-bnp-obj}
\arg\min\limits_{\boldsymbol{\omega}\in\mathcal{W}} \mathrm{MMD}_{\mathrm{BNP}}(F^{pos}_{N},F_{G_{\boldsymbol{\omega}},m}).
\end{align}
Algorithm 2 in the supplementary material provides steps for implementing the training.

Let $\boldsymbol{\omega}^{\ast}$ be the optimized parameter of $G_{\boldsymbol{\omega}}$ that minimizes $\mathrm{MMD}_{\mathrm{BNP}}(F^{pos}_{N},F_{G_{\boldsymbol{\omega}},m})$. 
Since $\mathrm{MMD}_{\mathrm{BNP}}(F^{pos}_{N},F_{G_{\boldsymbol{\omega}},m})$ can be viewed as a semi-BNP estimation of \eqref{mmd-def1}, it becomes imperative to assess the accuracy of this estimation, specifically in terms of how effectively the proposed GAN can generate realistic samples that faithfully represent the true data distribution (generalization error).  Furthermore, it is crucial to take into consideration the generator's performance in dealing with outliers which includes a small proportion of observations that deviate from the clean data distribution $F_0$ (robustness). The next lemma addresses these two concerns.

\begin{lemma}\label{g-error and robustness}
Let $\mathcal{W}$ be the parameter space for $G_{\boldsymbol{\omega}}$ and $\boldsymbol{\omega}^{\ast}\in\mathcal{W}$ be the value that optimizes the objective function \eqref{semi-bnp-obj} and $\boldsymbol{\omega}^{\prime}$ be the true value that minimizes $\mathrm{MMD}(F,F_{G_{\boldsymbol{\omega}}})$. Assume that $F\sim DP(a, H)$ and let $k(\cdot,\cdot)$ be any continuous kernel function with feature space corresponding to a universal RKHS such that $|k(\boldsymbol{z},\boldsymbol{z}^{\prime})|<K$, for any $\boldsymbol{z},\boldsymbol{z}^{\prime}\in \mathbb{R}^d$. For a given sample $\mathbf{X}_1,\ldots,\mathbf{X}_n$ from distribution $F$:\\
\noindent $i.$ Generalization error:
\begin{align*}
E\left(\mathrm{MMD}(F,F_{G_{\boldsymbol{\omega}^{\ast}}})\right)\leq \mathrm{MMD}(F,F_{G_{\boldsymbol{\omega}^{\prime}}})+\dfrac{2K}{\sqrt{n}}+\dfrac{4aK}{a+n}+2\sqrt{\dfrac{(a+n+N)K}{(a+n+1)N}}.
\end{align*}
\noindent $ii.$ Robustness: Suppose there exist outliers in the sample data, which arise from a noise distribution $Q$.  Consider the H\"{u}ber's contamination model \citep{huber1992robust,cherief2022finite}, given by $F=(1-\epsilon)F_0+\epsilon Q$, where $\epsilon\in (0,\frac{1}{2})$ is the contamination rate, and the latent variables $Z_1,\ldots,Z_n\overset{i.i.d.}{\sim}\mathrm{Bernoulli}(\epsilon)$ are such that $\mathbf{X}_{i}\overset{i.i.d.}{\sim}F_0$ if  $Z_i=0$; otherwise, $\mathbf{X}_{i}\overset{i.i.d.}{\sim}Q$.
Then,
\begin{align*}
E\left(\mathrm{MMD}(F_0,F_{G_{\boldsymbol{\omega}^{\ast}}})\right)\leq \min\limits_{\boldsymbol{\omega}\in\mathcal{W}}\mathrm{MMD}(F_0,F_{G_{\boldsymbol{\omega}}})+4\epsilon+\dfrac{2K}{\sqrt{n}}+\dfrac{4aK}{a+n}+2\sqrt{\dfrac{(a+n+N)K}{(a+n+1)N}}.
\end{align*}
\end{lemma}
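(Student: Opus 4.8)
The plan is to exploit two elementary facts about the MMD: first, $\mathrm{MMD}(P,Q)=\|\mu_P-\mu_Q\|_{\mathcal{H}_k}$ is a (pseudo)metric, so it obeys the triangle inequality; second, the embedding $\mu_P=E_P[k(X,\cdot)]$ is linear in $P$ and satisfies $\|\mu_P\|_{\mathcal{H}_k}\le\sqrt{K}$ whenever $|k|<K$. Together with the defining optimality of $\boldsymbol{\omega}^{\ast}$, these reduce both claims to controlling a few estimation-error terms in expectation. For part (i) I would insert the finite DP posterior approximation $F^{pos}_N$, apply the triangle inequality, and use $\mathrm{MMD}(F^{pos}_N,F_{G_{\boldsymbol{\omega}^{\ast}}})\le \mathrm{MMD}(F^{pos}_N,F_{G_{\boldsymbol{\omega}^{\prime}}})$ to obtain the oracle-type bound
\begin{align*}
\mathrm{MMD}(F, F_{G_{\boldsymbol{\omega}^{\ast}}}) \le \mathrm{MMD}(F, F_{G_{\boldsymbol{\omega}^{\prime}}}) + 2\,\mathrm{MMD}(F, F^{pos}_N).
\end{align*}

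Everything then rests on $E[\mathrm{MMD}(F, F^{pos}_N)]$. I would split this through the posterior base measure $H^{\ast}=\frac{a}{a+n}H+\frac{n}{a+n}F_n$, writing $\mathrm{MMD}(F,F^{pos}_N)\le \mathrm{MMD}(F,H^{\ast})+\mathrm{MMD}(H^{\ast},F^{pos}_N)$. By linearity of the embedding, $\mu_F-\mu_{H^{\ast}}=\frac{a}{a+n}(\mu_F-\mu_H)+\frac{n}{a+n}(\mu_F-\mu_{F_n})$, so the first piece is at most $\frac{a}{a+n}\mathrm{MMD}(F,H)+\frac{n}{a+n}\mathrm{MMD}(F,F_n)$. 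Using $\mathrm{MMD}(F,H)\le 2\sqrt{K}$ and the standard empirical-embedding variance estimate $E[\mathrm{MMD}(F,F_n)]\le\sqrt{K/n}$ (from $E[\mathrm{MMD}^2(F,F_n)]=\frac1n(E_F[k(X,X)]-\|\mu_F\|^2)\le K/n$ and Jensen) yields, after multiplying by the factor $2$, the $\frac{4aK}{a+n}$ and $\frac{2K}{\sqrt n}$ terms, up to the kernel-bound normalization $\sqrt{K}$ versus $K$.

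The crux is the finite-approximation term $E[\mathrm{MMD}(H^{\ast},F^{pos}_N)]$. Since $(J^{\ast}_{1,N},\ldots,J^{\ast}_{N,N})\sim\mathrm{Dirichlet}(\tfrac{a+n}{N},\ldots,\tfrac{a+n}{N})$ is independent of $\mathbf{V}^{\ast}_1,\ldots,\mathbf{V}^{\ast}_N\overset{i.i.d.}{\sim}H^{\ast}$ with $E[J^{\ast}_{\ell,N}]=1/N$, the embedding $\mu_{F^{pos}_N}=\sum_\ell J^{\ast}_{\ell,N}k(\mathbf{V}^{\ast}_\ell,\cdot)$ is unbiased for $\mu_{H^{\ast}}$, so the target is a variance. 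Expanding the squared norm and separating the diagonal from the off-diagonal indices gives
\begin{align*}
E[\mathrm{MMD}^2(H^{\ast}, F^{pos}_N)] = \Big(\sum_{\ell=1}^N E[(J^{\ast}_{\ell,N})^2]\Big)\big(E_{H^{\ast}}[k(V,V)] - \|\mu_{H^{\ast}}\|^2_{\mathcal{H}_k}\big) \le K\sum_{\ell=1}^N E[(J^{\ast}_{\ell,N})^2].
\end{align*}
The Dirichlet second-moment identity $\sum_{\ell}E[(J^{\ast}_{\ell,N})^2]=\frac{a+n+N}{(a+n+1)N}$ and Jensen's inequality then produce the final term $2\sqrt{(a+n+N)K/((a+n+1)N)}$. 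I expect this moment computation, together with the bookkeeping that lets me center on the tractable posterior mean $H^{\ast}$ rather than on the infinite DP posterior, to be the main technical obstacle; the remaining constants are routine.

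For part (ii) I would reduce to part (i) by bracketing $F_0$ and $F$ with the triangle inequality and using the contamination structure: since $\mu_F=(1-\epsilon)\mu_{F_0}+\epsilon\mu_Q$, we have $\mathrm{MMD}(F_0,F)=\epsilon\,\mathrm{MMD}(F_0,Q)\le 2\epsilon\sqrt{K}$. Writing $\mathrm{MMD}(F_0,F_{G_{\boldsymbol{\omega}^{\ast}}})\le\mathrm{MMD}(F_0,F)+\mathrm{MMD}(F,F_{G_{\boldsymbol{\omega}^{\ast}}})$, invoking the part (i) bound on the second summand, and then trading the $F$-oracle $\mathrm{MMD}(F,F_{G_{\boldsymbol{\omega}^{\prime}}})$ for the clean-model oracle $\min_{\boldsymbol{\omega}}\mathrm{MMD}(F_0,F_{G_{\boldsymbol{\omega}}})$ at the cost of a second $\mathrm{MMD}(F_0,F)$ factor, accumulates exactly two copies of $\mathrm{MMD}(F_0,F)$, i.e. the additive $4\epsilon$ term, while leaving the other three terms unchanged. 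One subtlety worth flagging is the empirical generator measure $F_{G_{\boldsymbol{\omega}},m}$ in the objective \eqref{semi-bnp-obj}: the stated bound carries no $m$-dependence, so I would either carry out the optimality step at the population law $F_{G_{\boldsymbol{\omega}}}$ or absorb the extra term $E[\mathrm{MMD}(F_{G_{\boldsymbol{\omega}}},F_{G_{\boldsymbol{\omega}},m})]\le\sqrt{K/m}$, which is handled by the same empirical-embedding estimate used above.
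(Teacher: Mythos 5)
Your proposal is correct and, structurally, it is the same proof as the paper's: an oracle inequality from the triangle inequality plus the optimality of $\boldsymbol{\omega}^{\ast}$, a decomposition of the estimation error into an empirical term, a prior-bias term through $H^{\ast}$, and a finite-Dirichlet-approximation term, the second-moment identity $\sum_{\ell}E[(J^{\ast}_{\ell,N})^2]=\tfrac{a+n+N}{(a+n+1)N}$ followed by Jensen for that last term, and a double application of the contamination bound for part (ii). The difference is that the paper outsources every one of these steps to external results---the decomposition to \citet[Theorem 9]{dellaporta2022robust}, the three term-bounds to \citet[Lemmas 7 and 8]{dellaporta2022robust} and \citet[Lemmas 3.3 and 7.1]{cherief2022finite}---whereas you rederive them from first principles (linearity of the mean embedding, $\|\mu_P\|_{\mathcal{H}_k}\le\sqrt{K}$, the variance of the empirical embedding, and the Dirichlet moments). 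What your self-contained version buys is transparency about constants, and it exposes a real sloppiness in the lemma as stated: done carefully, the bounds come out as $2\sqrt{K}/\sqrt{n}$, $4a\sqrt{K}/(a+n)$ and $4\epsilon\sqrt{K}$ rather than $2K/\sqrt{n}$, $4aK/(a+n)$ and $4\epsilon$; the paper's mixed $K$-versus-$\sqrt{K}$ (and $\epsilon$-versus-$\epsilon\sqrt{K}$) normalization arises because the cited lemmas are proved for kernels bounded by one and are then quoted with $K$ inserted inconsistently, so the stated inequalities only match a careful derivation up to assuming $K\ge 1$ in part (i) and $K\le 1$ in the $4\epsilon$ term of part (ii). Your flag about the empirical generator measure $F_{G_{\boldsymbol{\omega}},m}$ in \eqref{semi-bnp-obj} is likewise a genuine subtlety that the paper's proof silently skips (it performs the optimality step at the population law); either of your two fixes closes it. In short, there is no gap in your argument relative to the paper's own proof---yours is the more careful of the two.
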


Lemma \ref{g-error and robustness}(ii) demonstrates that despite encountering outlier data, $F_{G_{\boldsymbol{\omega}^{\ast}}}$ and $F_0$ are negligibly different for a sufficiently large sample size. This feature results in the majority of the posterior on the parameter space $\mathcal{W}$ being distributed on value $\boldsymbol{\omega}^{\ast}$, which is a desirable outcome of the proposed method.

Although the preceding statements investigate properties of the estimated parameters by providing upper bounds for the expectation of the MMD estimator, the next lemma presents stochastic bounds for the estimation error in order to assess the posterior consistency.

\begin{lemma}\label{prob-pos-bound}
Building upon the general assumptions stated in Lemma \ref{g-error and robustness}, for a given sample $\mathbf{X}_1,\ldots,\mathbf{X}_n$ from distribution $F$ in the probability space $(\mathfrak{X},\mathcal{A},\mathrm{Pr})$ and any $\epsilon>0$,\\
\noindent $i.$ $\mathrm{Pr}\left(|\mathrm{MMD}(F^{pos}_{N},F_{G_{\boldsymbol{\omega}^{\ast}},m})-\mathrm{MMD}(F,F_{G_{\boldsymbol{\omega}^{\prime}}})|\geq h(n,m,\epsilon)+|\Delta_1|+|\Delta_2|\right)\leq 2\exp{\frac{-\epsilon^2nm}{2K(n+m)}},$\\
$ii.$ $\mathrm{Pr} \left( \mathrm{MMD}(F,F_{G_{\boldsymbol{\omega}^{\ast}}})>\epsilon\right)\leq \dfrac{1}{\epsilon}\left(\mathrm{MMD}(F,F_{G_{\boldsymbol{\omega}^{\prime}}})+\dfrac{2K}{\sqrt{n}}+\dfrac{4aK}{a+n}+2\sqrt{\dfrac{(a+n+N)K}{(a+n+1)N}}.\right)$,\\
where, $h(N,m,K,\epsilon)=2\sqrt{K}(\sqrt{n}+\sqrt{m})/\sqrt{nm}+\epsilon$, $\Delta_1=\mathrm{MMD}(F^{pos}_{N},F_{G_{\boldsymbol{\omega}^{\ast}}})-\mathrm{MMD}(F_n,F_{G_{\boldsymbol{\omega}^{\prime},m}})$, and $\Delta_2=\mathrm{MMD}(F,F_{G_{\boldsymbol{\omega}^{\ast}}})-\mathrm{MMD}(F,F_{G_{\boldsymbol{\omega}^{\prime}}})$.

\end{lemma}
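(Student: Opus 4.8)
The plan is to treat the two parts separately. Part (ii) follows almost immediately from the expectation bound already established in Lemma~\ref{g-error and robustness}(i) combined with Markov's inequality, whereas part (i) is the substantive statement and rests on a bounded-differences concentration argument for the two-sample MMD estimator. For part (ii), I would observe that $\mathrm{MMD}(F,F_{G_{\boldsymbol{\omega}^{\ast}}})$ is a nonnegative random variable, being an RKHS-norm distance, so Markov's inequality gives $\mathrm{Pr}(\mathrm{MMD}(F,F_{G_{\boldsymbol{\omega}^{\ast}}})>\epsilon)\le \epsilon^{-1}E(\mathrm{MMD}(F,F_{G_{\boldsymbol{\omega}^{\ast}}}))$. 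Substituting the upper bound on $E(\mathrm{MMD}(F,F_{G_{\boldsymbol{\omega}^{\ast}}}))$ from Lemma~\ref{g-error and robustness}(i) reproduces the claimed right-hand side verbatim, so no further work is needed there.

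For part (i) the key is to isolate the genuinely random, concentration-amenable quantity. I would introduce the fully empirical pivot $\mathrm{MMD}(F_n,F_{G_{\boldsymbol{\omega}^{\prime}},m})$, which estimates the population value $\mathrm{MMD}(F,F_{G_{\boldsymbol{\omega}^{\prime}}})$ up to a controlled bias. The crucial structural fact is that the empirical MMD is the norm distance $\|\hat{\mu}_{F_n}-\hat{\mu}_{F_{G_{\boldsymbol{\omega}^{\prime}},m}}\|_{\mathcal{H}_k}$ between the empirical mean embeddings $\hat{\mu}_{F_n}=\tfrac1n\sum_i k(\mathbf{X}_i,\cdot)$ and $\hat{\mu}_{F_{G_{\boldsymbol{\omega}^{\prime}},m}}=\tfrac1m\sum_j k(\mathbf{Y}_j,\cdot)$. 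Viewing this norm as a function of the $n+m$ independent samples, replacing a single $\mathbf{X}_i$ (resp. $\mathbf{Y}_j$) perturbs it, by the reverse triangle inequality together with $\|k(\boldsymbol{z},\cdot)\|_{\mathcal{H}_k}^2=k(\boldsymbol{z},\boldsymbol{z})<K$, by at most $2\sqrt{K}/n$ (resp. $2\sqrt{K}/m$). Summing the squared increments yields $4K(n+m)/(nm)$, and McDiarmid's bounded-differences inequality then produces $\mathrm{Pr}(|\mathrm{MMD}(F_n,F_{G_{\boldsymbol{\omega}^{\prime}},m})-E(\cdot)|\ge \epsilon)\le 2\exp(-\epsilon^2 nm/(2K(n+m)))$, which is exactly the target exponent.

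It then remains to re-center the bound and to re-attach the parameter and posterior discrepancies. For the centering, a Jensen-plus-triangle-inequality argument bounds the bias $|E(\mathrm{MMD}(F_n,F_{G_{\boldsymbol{\omega}^{\prime}},m}))-\mathrm{MMD}(F,F_{G_{\boldsymbol{\omega}^{\prime}}})|$ by $E\|\hat{\mu}_{F_n}-\mu_F\|+E\|\hat{\mu}_{F_{G_{\boldsymbol{\omega}^{\prime}},m}}-\mu_{F_{G_{\boldsymbol{\omega}^{\prime}}}}\|\le \sqrt{K/n}+\sqrt{K/m}\le 2\sqrt{K}(\sqrt{n}+\sqrt{m})/\sqrt{nm}$, which is precisely the deterministic part of $h(n,m,\epsilon)$; adding the tail slack $\epsilon$ recovers $h$. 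Finally I would apply the triangle inequality for the MMD metric to write $\mathrm{MMD}(F^{pos}_N,F_{G_{\boldsymbol{\omega}^{\ast}},m})-\mathrm{MMD}(F,F_{G_{\boldsymbol{\omega}^{\prime}}})$ as the concentrated pivot difference plus the two offsets $\Delta_1$ and $\Delta_2$, so that these enter the threshold additively as $|\Delta_1|+|\Delta_2|$ while only the pivot term is exposed to the random fluctuation controlled above.

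The main obstacle I anticipate is the bookkeeping in this last step: one must choose the pivot and order the triangle-inequality splits so that exactly $\Delta_1$ (which swaps the DP posterior $F^{pos}_N$ and optimum $\boldsymbol{\omega}^{\ast}$ for the empirical $F_n$ and $\boldsymbol{\omega}^{\prime}$) and $\Delta_2$ (the population-level parameter gap) emerge as the offsets rather than some other grouping, and to confirm that the square-root (rather than squared) MMD satisfies the bounded-difference constants above with no extra Lipschitz loss. This last point is exactly what the RKHS-norm representation secures, since writing the estimator as a norm of an average in feature space lets the reverse triangle inequality deliver the Lipschitz constant directly, bypassing any differentiation of the square root.
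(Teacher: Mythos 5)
Your proposal is correct, and for part (ii) it is exactly the paper's argument: Markov's inequality applied to the nonnegative random variable $\mathrm{MMD}(F,F_{G_{\boldsymbol{\omega}^{\ast}}})$, followed by the expectation bound of Lemma~\ref{g-error and robustness}(i). For part (i) your route differs from the paper's in two respects. First, the paper does not redo any concentration analysis: it simply invokes \citet[Theorem 7]{Gretton} for the deviation of the empirical two-sample MMD from its population value; your McDiarmid argument (per-coordinate increments $2\sqrt{K}/n$ and $2\sqrt{K}/m$, summed squares $4K(n+m)/(nm)$) together with the Jensen bias bound is precisely the proof of that cited theorem, reproducing the same exponent and the same $h$, so this difference is one of self-containedness only. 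Second, and more substantively, you anchor the concentration at the deterministic population minimizer $\boldsymbol{\omega}^{\prime}$, i.e.\ at the pivot $\mathrm{MMD}(F_n,F_{G_{\boldsymbol{\omega}^{\prime}},m})$, whereas the paper applies the cited theorem at the random, data-dependent optimizer $\boldsymbol{\omega}^{\ast}$ and then uses the three-term triangle inequality
\begin{equation*}
\left|\mathrm{MMD}(F^{pos}_{N},F_{G_{\boldsymbol{\omega}^{\ast}}})-\mathrm{MMD}(F,F_{G_{\boldsymbol{\omega}^{\prime}}})\right|\leq \left|\mathrm{MMD}(F_{n},F_{G_{\boldsymbol{\omega}^{\ast}},m})-\mathrm{MMD}(F,F_{G_{\boldsymbol{\omega}^{\ast}}})\right|+|\Delta_1|+|\Delta_2|,
\end{equation*}
in which both offsets appear naturally. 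Your anchoring is technically cleaner: a concentration inequality for i.i.d.\ samples from two fixed distributions applies verbatim at $\boldsymbol{\omega}^{\prime}$, whereas applying it at $\boldsymbol{\omega}^{\ast}$ tacitly requires conditioning on, or uniformity over, the randomness that produced $\boldsymbol{\omega}^{\ast}$ --- a point the paper leaves implicit. The price is exactly the bookkeeping issue you flag at the end: with the pivot at $\boldsymbol{\omega}^{\prime}$, the natural split is the two-term bound $|\mathrm{MMD}(F^{pos}_{N},F_{G_{\boldsymbol{\omega}^{\ast}},m})-\mathrm{MMD}(F,F_{G_{\boldsymbol{\omega}^{\prime}}})|\leq |\Delta_1|+|\mathrm{MMD}(F_n,F_{G_{\boldsymbol{\omega}^{\prime}},m})-\mathrm{MMD}(F,F_{G_{\boldsymbol{\omega}^{\prime}}})|$, so $\Delta_2$ never emerges from the decomposition at all. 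This is not a gap: since $|\Delta_2|\geq 0$ only enlarges the threshold, your argument proves a slightly stronger statement and the claimed inequality follows a fortiori; you should simply say that absorption explicitly rather than search for a split that produces both offsets.
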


A direct consequence of Lemma \ref{prob-pos-bound}(ii) is that for a fixed value of $a$, $\mathrm{Pr} ( \mathrm{MMD}(F,F_{G_{\boldsymbol{\omega}^{\ast}}})\geq\epsilon)\rightarrow 0$, as $n\rightarrow\infty$ and $N\rightarrow\infty$, for any $\epsilon>0$, when $\mathrm{MMD}(F,F_{G_{\boldsymbol{\omega}^{\prime}}})=0$ (well-specified case). This implies $F_{G_{\boldsymbol{\omega}^{\ast}}}$ converges in probability to the data distribution $F$ as the sample size increases in well-specified cases.

Note that, choosing the value of $a$ in the test proposed in Section \ref{sec-BNP-test} plays a crucial role in determining the degree of support for the null hypothesis against the alternative. However, in the current context of approximating the posterior on the parameter space, the prior choice for $F$ and determining the strength of belief becomes challenging. Therefore, we opt for $a=0$ as a non-informative prior, as suggested by \cite{dellaporta2022robust}, thanks to its broad ability to characterize uncertainty \citep{terenin2017noninformative}.

The main distinction between our BNPL method and the one proposed by \cite{dellaporta2022robust} lies in the fact that we generalize their BNPL procedure beyond estimating parameters and explicitly consider the terms of the DP posterior approximation and their corresponding weights. \citeauthor{dellaporta2022robust} used the following DP approximation:
\begin{align*}
F^{pos}_{n+N}=\sum_{\ell=1}^{n}\widetilde{J}_{\ell,n}\delta_{\mathbf{X}_{\ell}}+\sum_{t=1}^{N}J_{t,N}\delta_{\mathbf{V}_{t}},
\end{align*}
where $(\widetilde{J}_{1:n,n},J_{1:N,N})\sim \mbox{Dirichlet}(1,\ldots,1,\frac{a}{N},\ldots,\frac{a}{N})$, $(\mathbf{X}_{1:n})\overset{i.i.d.}{\sim}F$, and $(\mathbf{V}_{1:n})\overset{i.i.d.}{\sim}H$. In contrast, we employ $F^{pos}_{N}=\sum_{i=1}^{N}J^{\ast}_{i,N}\delta_{V^{\ast}_{i}}$, with $(J^{\ast}_{1:N,N})\sim \mbox{Dirichlet}(\frac{a+n}{N},\ldots,\frac{a+n}{N})$. Our approach offers an advantage over the approximation used in \cite{dellaporta2022robust} due to its reduced number of terms, significantly reducing both computational and theoretical complexity. Additionally, a further difference is that Dellaporta's bootstrap procedure needs to query the loss function $B$ times to simulate $B$ posterior parameters, whereas our procedure does not require a bootstrap algorithm and we only need to simulate a single parameter. Although their bootstrap procedure is embarrassingly parallelizable, $B$ generally should be a fairly large number and the typical statistical practitioner does not have access to $B$ cores to truly parallelize the additional cost of bootstrap sampling.
\subsection{Kernel Settings}
In our method, we choose to use the standard radial basis function (RBF) kernel as its feature space corresponds to a universal RKHS.  For a comprehensive understanding of RBF functions, refer to Section 4 in the supplementary material.
\citet{dziugaite2015training, Li} and \citet{li2017mmd} used the Gaussian kernel in training MMD-GANs because of its simplicity and good performance. \citet{dziugaite2015training} also evaluated some other RBF kernels such as the Laplacian and rational quadratic kernels to compare the results of the MMD-GANs with those obtained based on using Gaussian kernels. They found the best performance by applying the Gaussian kernel in the MMD cost function. 

Hence, we consider the Gaussian kernel function in our proposed procedure. To choose the bandwidth parameter $\sigma$, we follow the idea of considering a set of fixed values of $\sigma$'s such as $\lbrace \sigma_1,\ldots,\sigma_T \rbrace$, then compute the mixture of Gaussian kernels  $k(\cdot,\cdot)=\sum_{t=1}^{T} k_{G_{\sigma_t}}(\cdot,\cdot)$, to consider in \eqref{BNP-pos-MMD}. For each $\sigma(t)$, $0\leq k_{G_{\sigma_t}}(\cdot,\cdot)\leq 1$; hence, $0\leq k(\cdot,\cdot)\leq T$, which satisfies the theoretical results presented in the paper. As it is mentioned in \cite{Li}, this choice reflects a good performance in training MMD-GANs.

\section{Experimental Investigation}\label{sec-experiment}
In this section, we empirically investigate our proposed methods through comprehensive numerical studies in the following two subsections, which demonstrate the superior performance of our proposed semi-BNP test as a standalone test as well as an embedded discriminator for the semi-BNP GAN. 

\subsection{The Semi-BNP Test}
To comprehensively study test performance evaluation, we consider some major representative examples in two-sample comparison problems. For this, let $\mathbf{y}_1,\ldots, \mathbf{y}_n$ be a sample generated from $ F_2=N(\mathbf{0}_{d},I_{d})$ and $\mathbf{x}_1,\ldots, \mathbf{x}_n$ be a sample generated from each below distributions: $F_1=N(\mathbf{0}_{d},I_{d})$ (No differences), $ F_1=N(\mathbf{0.5}_{d},I_{d})$ (Mean shift), $ F_1=LN(\mathbf{0}_{d},B_{d})$ (Skewness), $F_1=\frac{1}{2}N(-\mathbf{1}_{d},I_{d})+\frac{1}{2}N(\mathbf{1}_{d},I_{d})$ (Mixture), $F_1=N(\mathbf{0}_{d},2I_{d})$ (Variance shift), $ F_1=t_3(\mathbf{0}_{d},I_{d})$ (Heavy tail), and $F_1=LG(\mathbf{0}_{d},I_{d})$ (Kurtosis).


To implement the test, we set $\ell=1000$, $M=20$, and $\epsilon=10^{-3}$ to be used in Algorithm 1 in the supplementary material. We first considered the mixture of six Gaussian kernels corresponding to the suggested bandwidth parameters $2, 5, 10, 20, 40,$ and $ 80$ by \cite{Li}. We found that although this choice can provide good results in training GANs, it does not provide satisfactory results in hypothesis testing problems. 

Instead of using a mixture of several Gaussian kernels, we propose choosing a specific value for the bandwidth parameter that maximizes the area under the receiver operating characteristic curve (AUC) empirically. In a binary classifier, which can also be thought of as a two-sample test assessing whether two samples are distinguishable or not, the receiver operating characteristic (ROC) curve is a plot of true positive rates (sensitivity)  against the false positive rates (1-specificity) based on different choices of threshold to display the performance of the test. The positive term refers to rejecting $\mathcal{H}_0$ in \eqref{H_null}, while, the negative term refers to failing to reject $\mathcal{H}_0$. The false positive and false negative rates are equivalent to type I and type II errors, respectively.
Hence, a higher AUC indicates a better diagnostic ability of a binary test.  
It should be noted that since we consider $i_0/M=0.05$ to estimate the RB ratio, the values of $RB$ can vary between 0 and 20. Therefore, in computing the AUC for the semi-BNP test, the threshold should vary from 0 to 20. More details for plotting the ROC and computing the AUC are provided by Algorithm 3 in the supplementary material. The ROC curves and AUC values of the synthetic examples are provided in Figure \ref{ROC-variousSig} for the sample size $n=50$, $d=60$, $a=25$, and various values of the bandwidth parameter, including the median heuristic $\sigma_{MH}$. The red diagonal line represents the random classifier. A ROC curve located higher than the diagonal line indicates better test performance and vice versa. It is obvious from Figure \ref{ROC-variousSig} that 
the best test performance ($\operatorname{AUC}=1$) is achieved for the bandwidth parameter $80$. 

Another test of interest is to assess the effect of different hyperparameter settings for $a$ and $H$ through simulation studies to follow our proposed theoretical convergence results. To do this, we generate 100 $60$-dimensional samples of sizes $n=50$  from both $F_1=t_3(\mathbf{0}_{60},I_{60})$ and $F_2=N(\mathbf{0}_{60},I_{60})$ and represent the result of the semi-BNP test by Figure \ref{variousa} for two choices of the base measure $H$ ($H=F_2$ and $H=LG(\mathbf{0}_{60},I_{60})$) and various values of $a$ ($a=1,\ldots, 100
0$).
\begin{figure}[h!]
\centering
\subfloat[$\mathbf{y}_1,\ldots, \mathbf{y}_n\sim N(\mathbf{0.5}_{60},I_{60})$]{\includegraphics[width=.34\linewidth]{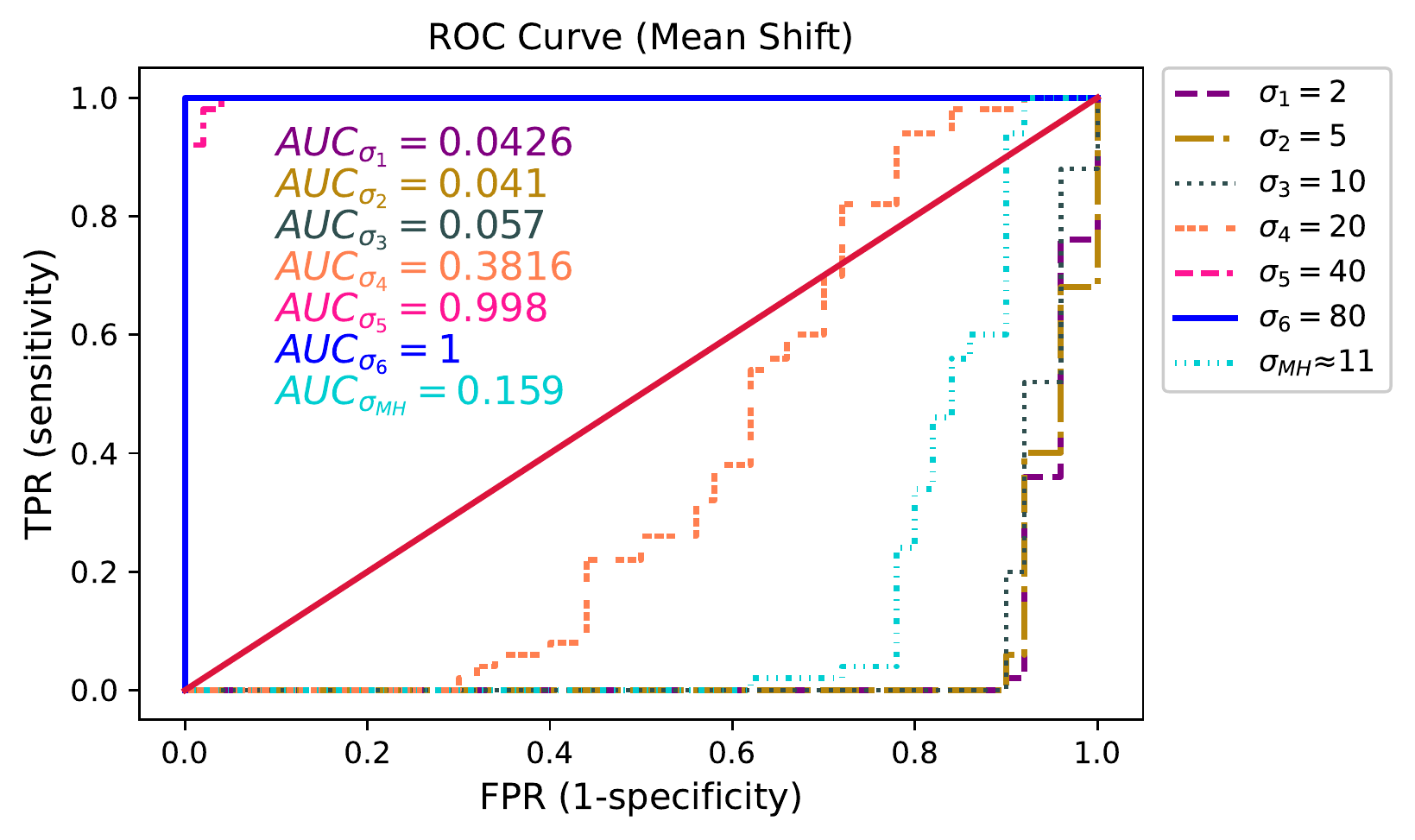}}
\subfloat[$\mathbf{y}_1,\ldots, \mathbf{y}_n\sim N(\mathbf{0}_{60},2I_{60})$]{\includegraphics[width=.34\linewidth]{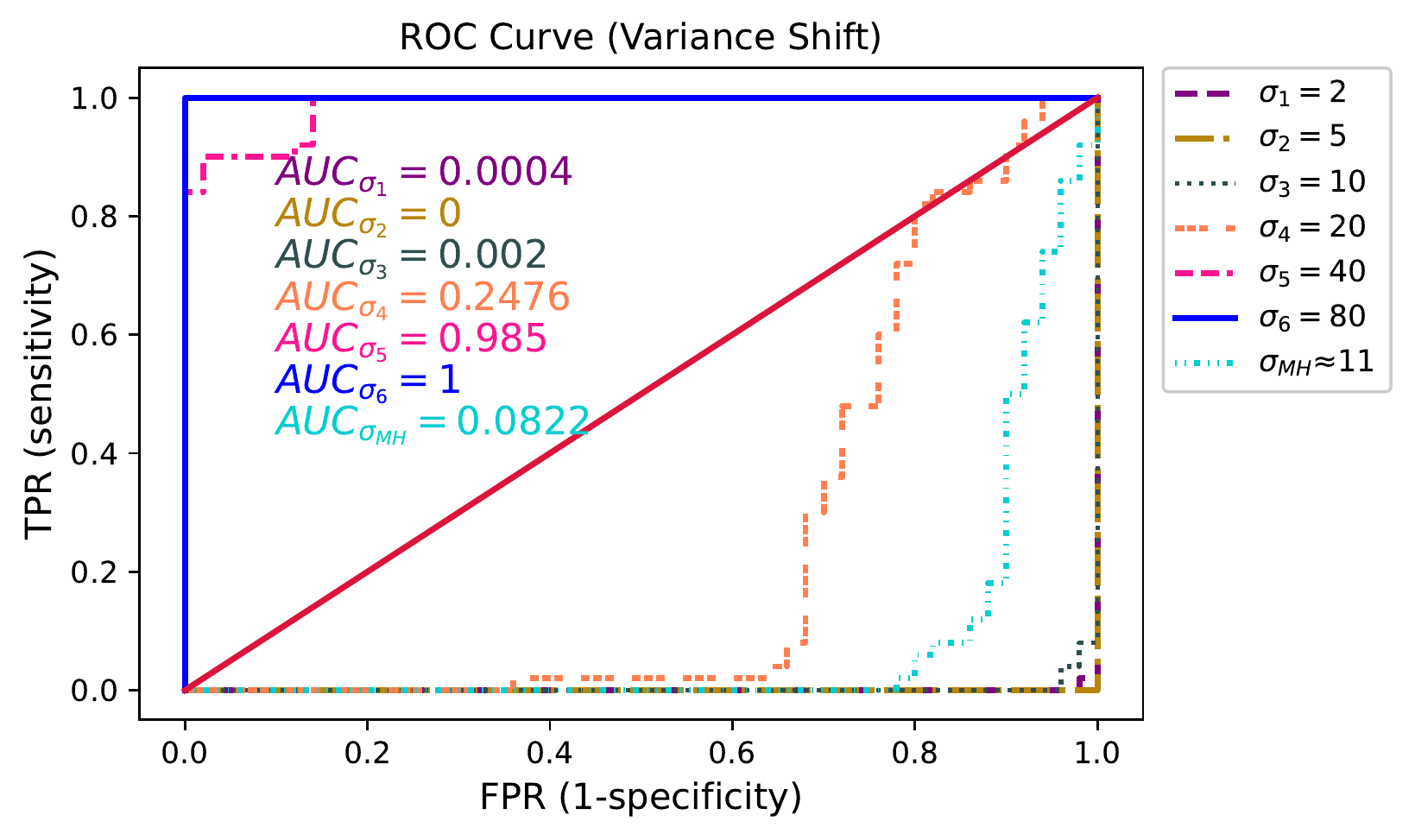}}
\subfloat[$\mathbf{y}_1,\ldots, \mathbf{y}_n\sim t_3(\mathbf{0}_{60},I_{60})$]{\includegraphics[width=.34\linewidth]{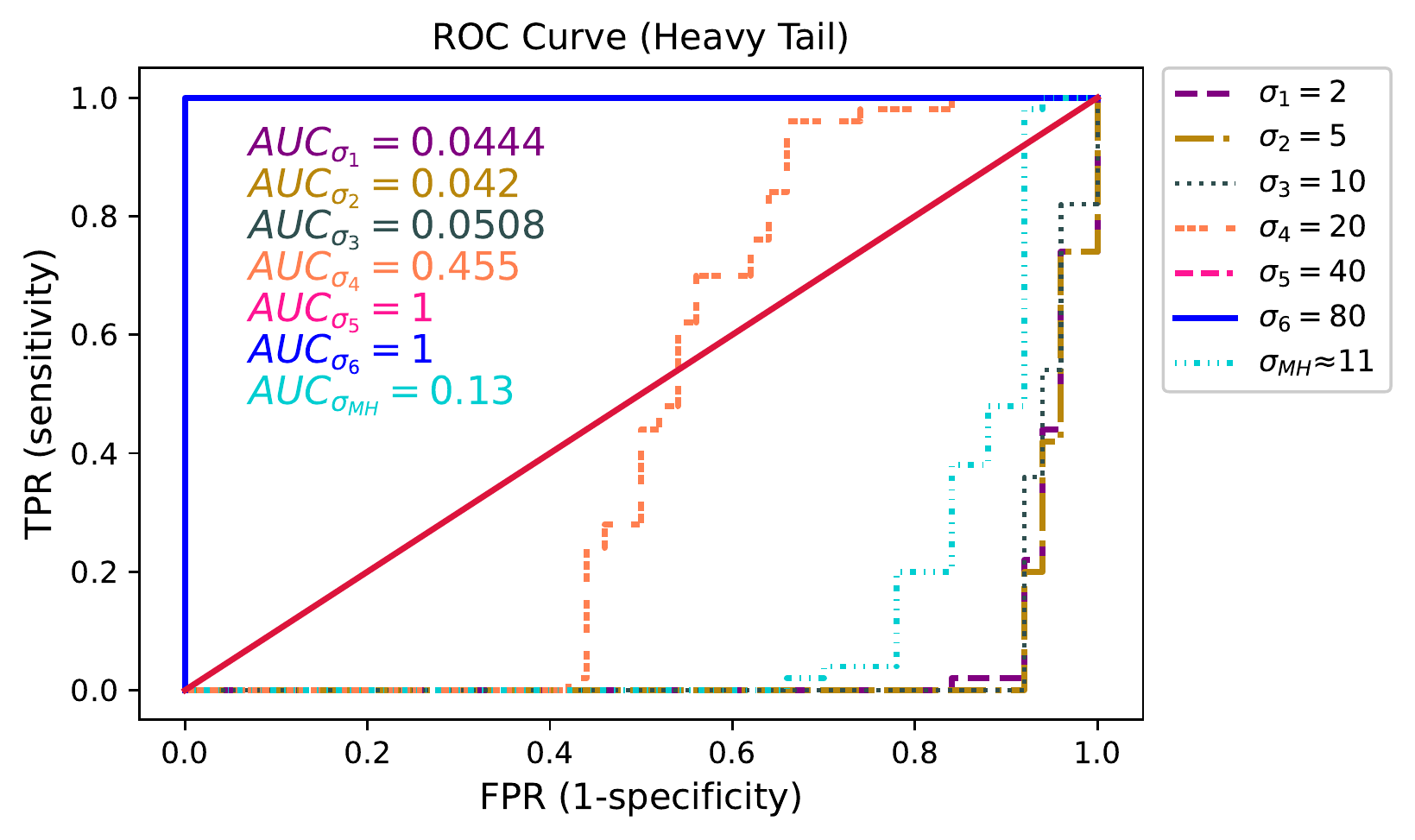}}
\qquad
\subfloat[$\mathbf{y}_1,\ldots, \mathbf{y}_n\sim 0.5N(-\mathbf{1}_{60},I_{60})+0.5N(\mathbf{1}_{60},I_{60})$]{\includegraphics[width=.34\linewidth]{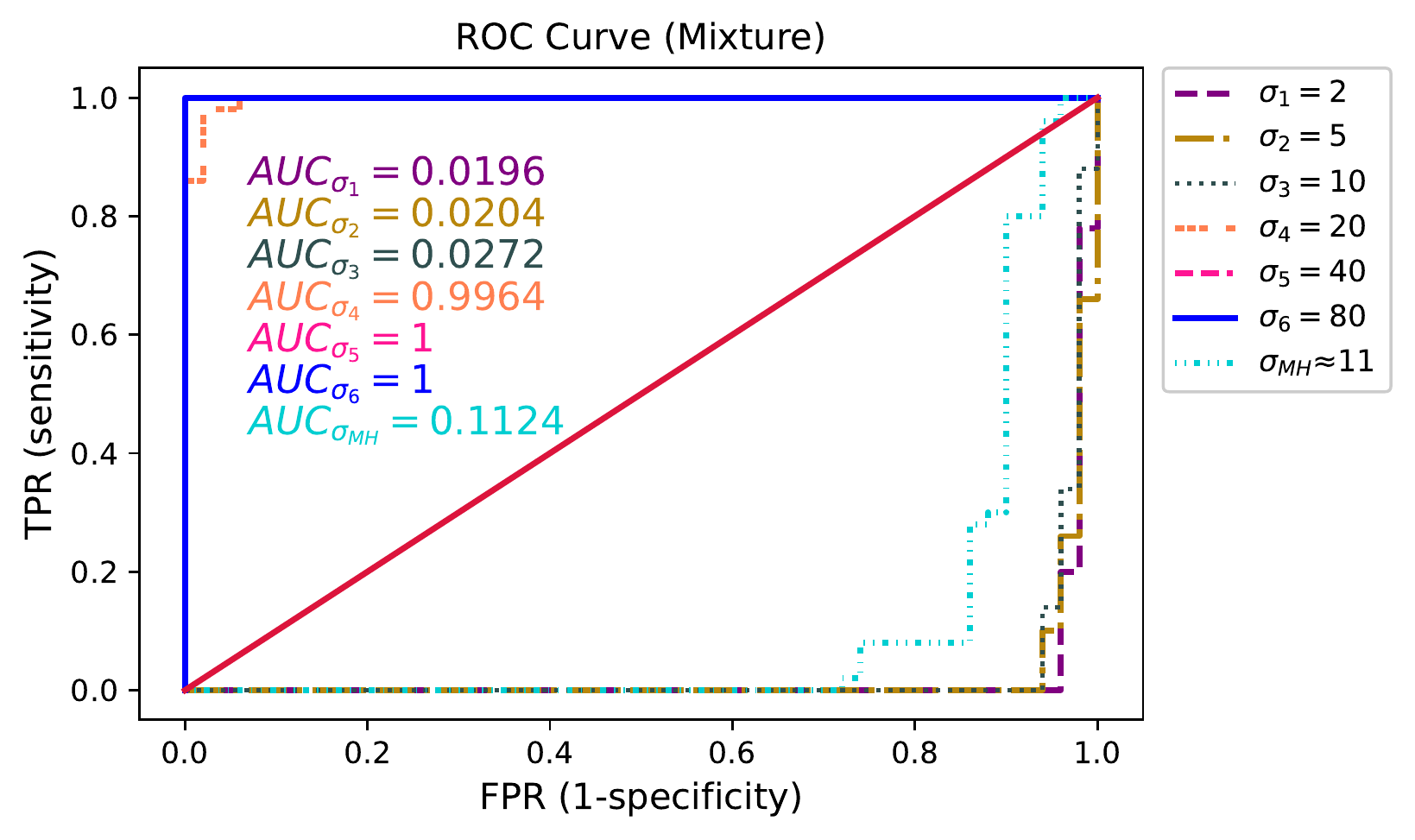}}
\subfloat[$\mathbf{y}_1,\ldots, \mathbf{y}_n\sim LN(\mathbf{0}_{60},B_{60})$]{\includegraphics[width=.34\linewidth]{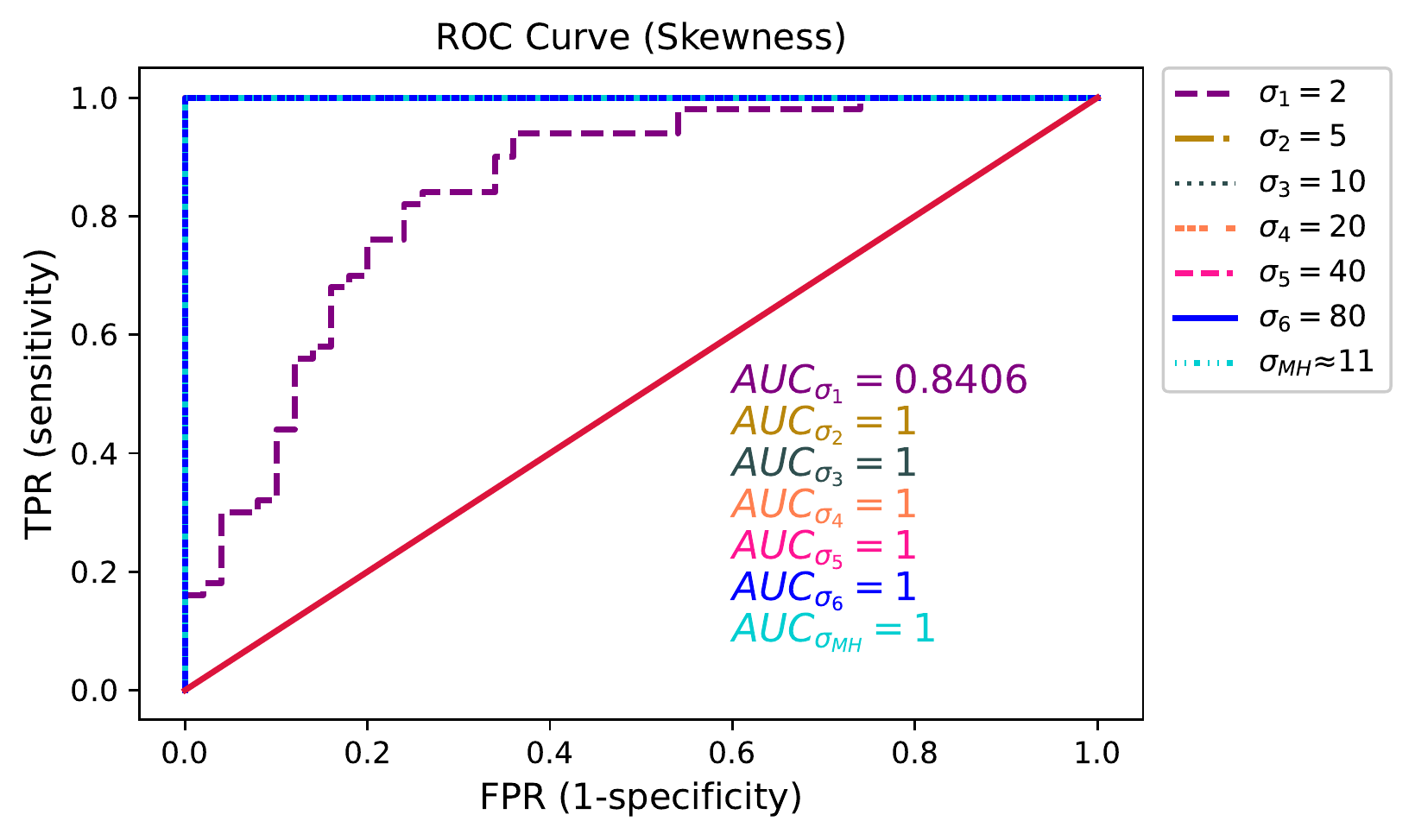}}
\subfloat[$\mathbf{y}_1,\ldots, \mathbf{y}_n\sim LG(\mathbf{0}_{60},I_{60})$]{\includegraphics[width=.34\linewidth]{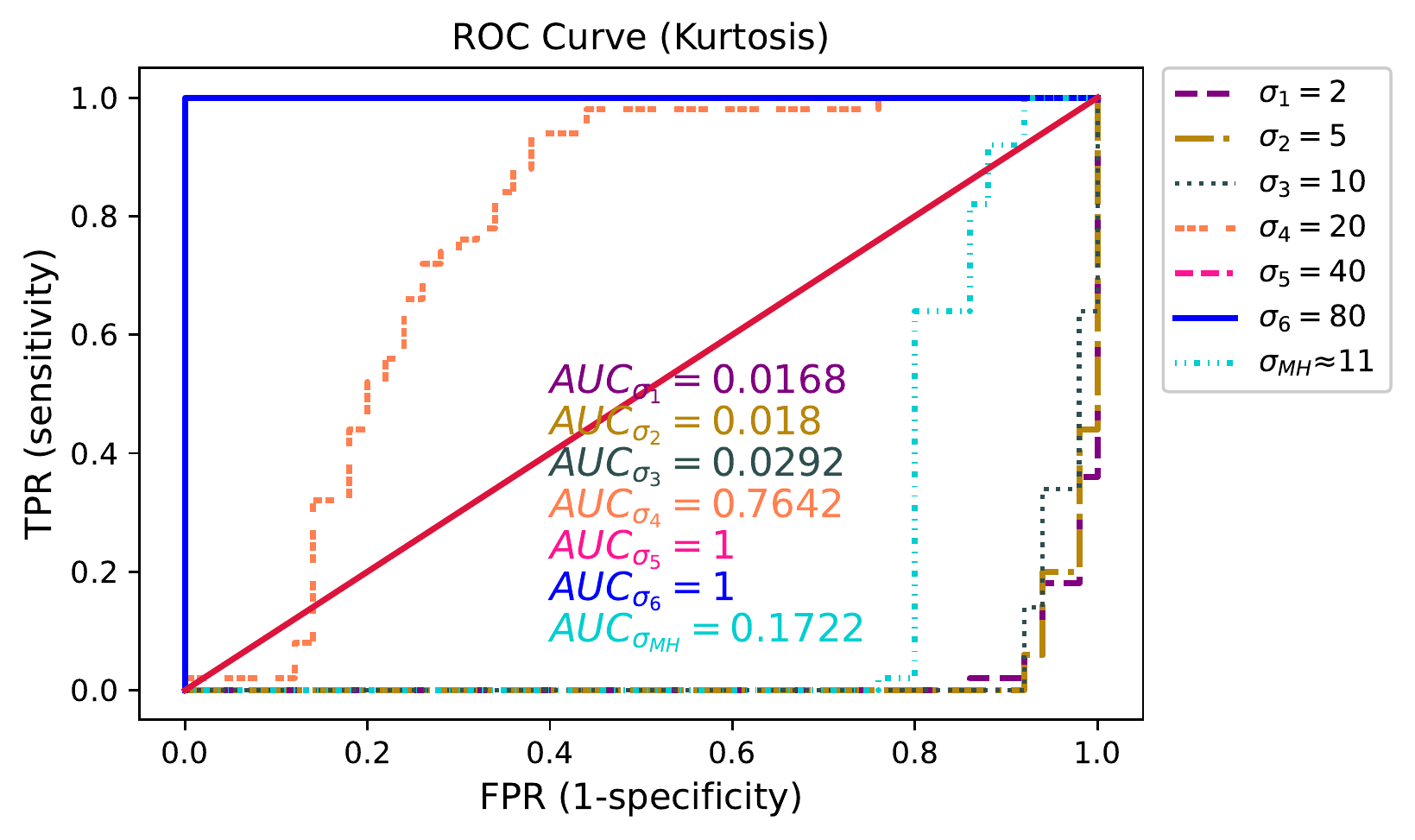}}
\caption{The ROC curves and AUC values of the BNP-MMD test for $\mathbf{x}_1,\ldots, \mathbf{x}_{n}\sim N(\mathbf{0}_{60},I_{60})$, using a range of bandwidth parameters including $\sigma=2,5,10,20,40,80$, as well as the median heuristic $\sigma_{MH}$.}
\label{ROC-variousSig}%
\end{figure}
In this figure, the solid line represents the average of the RB and the filled area around the line indicates a $95\%$ confidence interval of the RB over the 100 samples. Figure \ref{variousa}-a clearly shows that by choosing $H\neq F_2$, the test wrongly accepts the null hypothesis. It is because the prior does not support the null hypothesis mentioned earlier when presenting the RB ratio in Section \ref{sec-BNP-test}. On the other hand, when $H= F_2$, Figure \ref{variousa}-b shows good performance for the test at $a=n/2$.
Failing to reject $\mathcal{H}_0$ for small values of $a$ is due to the lack of sufficient support from the null hypothesis by the prior. We remark that the value of $a$ determines the concentration of the prior $F^{pri}$ around $H$, thus it is obvious that for small values of $a$, the test does not perform well. It should also be noted that for any choices of $H$ in Figure \ref{variousa}, the ability of the test to evaluate the null hypothesis is reduced by letting $a$ go to infinity, which can be concluded by Corollary \ref{cor-pos-consis}(i).
\begin{figure}[ht]
\centering
\subfloat[$H=LG(\mathbf{0}_{60},I_{60})$]{\includegraphics[width=.25\linewidth]{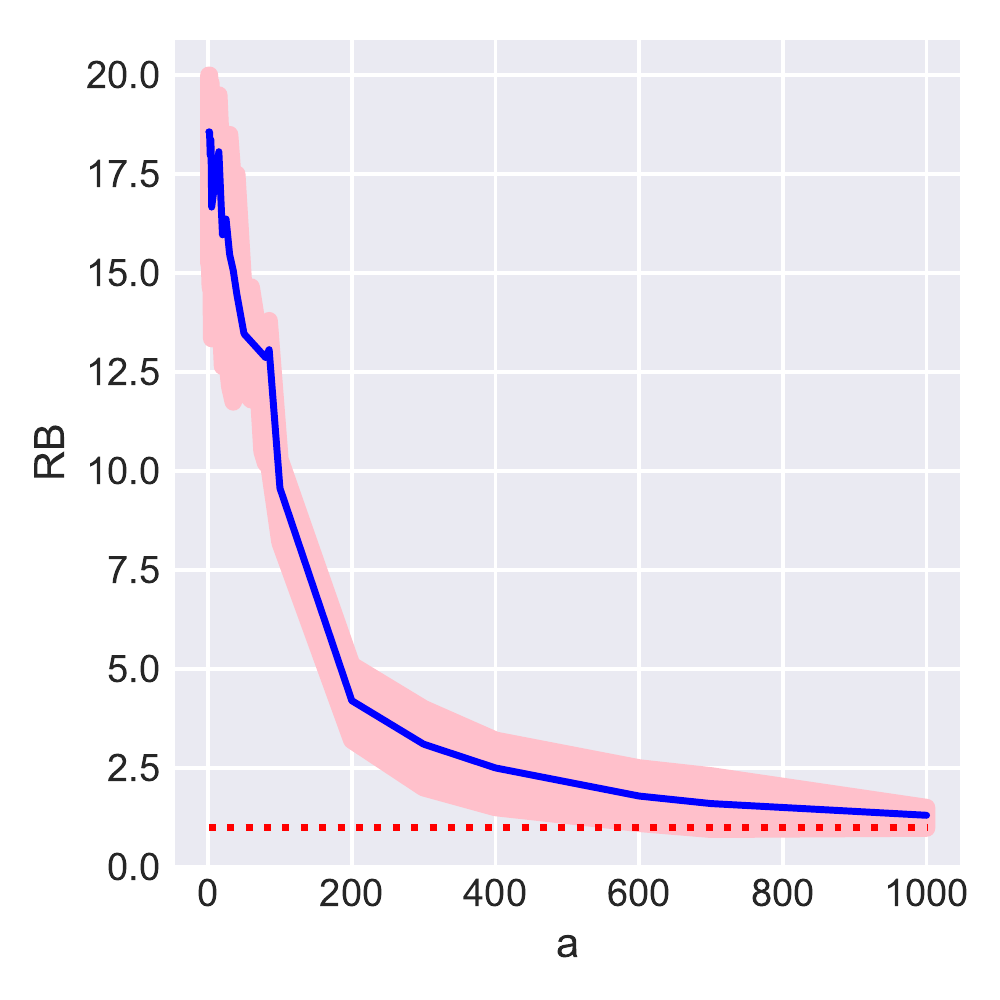}}
\hspace{1.5cm}
\subfloat[$H= F_2$]{\includegraphics[width=.25\linewidth]{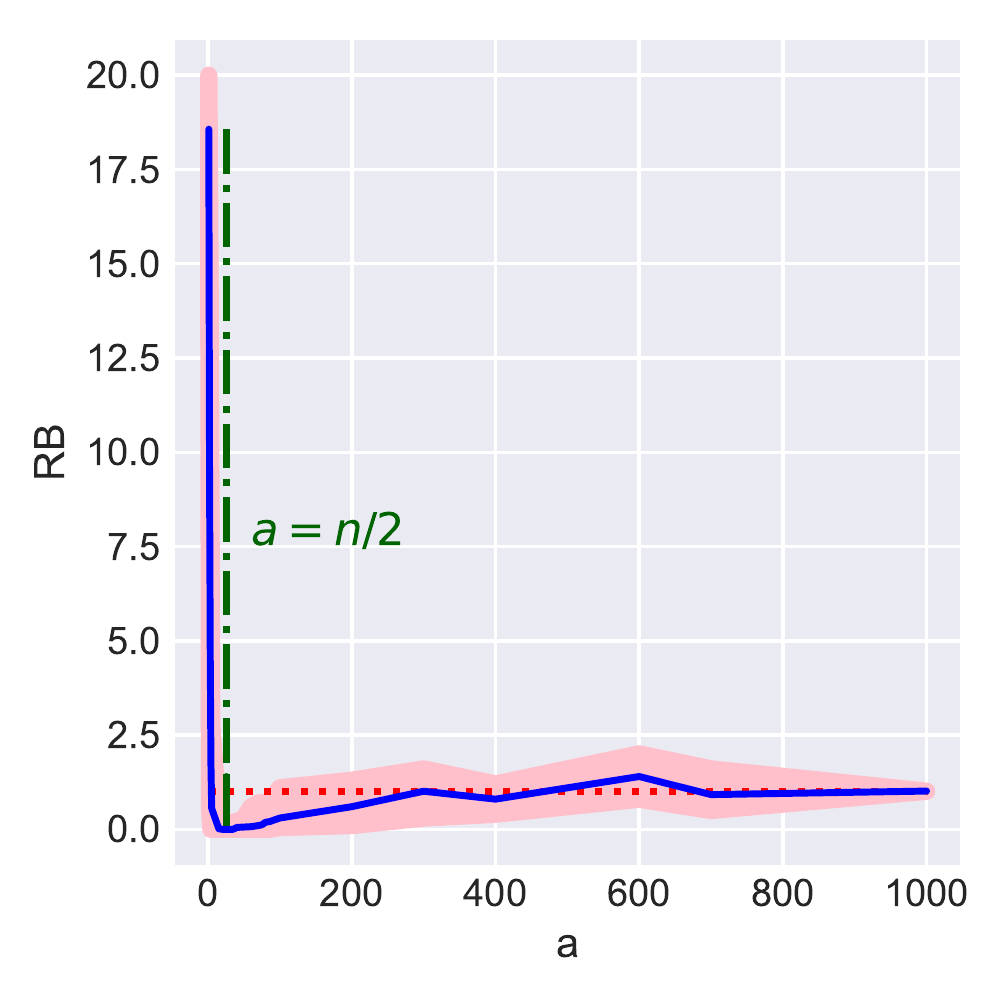}}
\caption{The solid line represents the average of the RB and the pink area represents a $95\%$ confidence interval of the RB over the 100 samples with various choices of $H$ and $a$ for the heavy tail example. The lower and upper bounds are the $2.5\%$ and $97.5\%$ quantiles of the RB, respectively. The red dotted line represents $RB=1$.}
\label{variousa}%
\end{figure}

Now, to conduct a more comprehensive investigation, we present the average of RB and its relevant strength over the 100 samples in Table \ref{examples50} for $n=30, 50$. Furthermore, we present the results of the BNP-energy test by \cite{Al-Labadi} in Table \ref{examples50}, which demonstrate its weak performance in certain scenarios. Additional results in the power comparison can be found in Section 6.1 of the supplementary material.
\begin{table}[htp] \centering
\setlength{\tabcolsep}{1 mm}
\caption{The average of RB, the average of its strength (Str
), and the relevant AUC
out of 100 replications based on using $a=25$, $\ell=1000$, $M=20$, $\epsilon=10^{-3}$ in \eqref{random-stopping}, and bandwidth parameter $\sigma=80$ in RBF kernel for two sample of data with $n=30,50$.}\label{examples50}
\scalebox{0.65}{
\begin{tabular}[c]{cl|llll|llll|llll|llll}\hline
\multirow{5}{*}{Example}&\multirow{5}{*}{$d$}& \multicolumn{8}{c} {BNP} & \multicolumn{8}{c} {FNP}  \\\cmidrule(lr){3-10}\cmidrule(lr){11-18}  
&&\multicolumn{4}{c}{MMD}&\multicolumn{4}{c}{Energy}&\multicolumn{4}{c}{MMD}&\multicolumn{4}{c}{Energy}\\\cmidrule(lr){3-6}\cmidrule(lr){7-10}\cmidrule(lr){11-14}\cmidrule(lr){15-18}
&&\multicolumn{2}{c}{RB(Str)}&\multicolumn{2}{c}{AUC}&\multicolumn{2}{c}{RB(Str)}&\multicolumn{2}{c}{AUC}&\multicolumn{2}{c}{P.value}&\multicolumn{2}{c}{AUC}&\multicolumn{2}{c}{P.value}&\multicolumn{2}{c}{AUC}\\\cmidrule(lr){3-4}\cmidrule(lr){5-6}\cmidrule(lr){7-8}\cmidrule(lr){9-10}\cmidrule(lr){11-12}\cmidrule(lr){13-14}\cmidrule(lr){15-16}\cmidrule(lr){17-18}  
&&\multicolumn{1}{c}{30}&\multicolumn{1}{c}{50}&\multicolumn{1}{c}{30}&\multicolumn{1}{c}{50}&\multicolumn{1}{c}{30}&\multicolumn{1}{c}{50}&\multicolumn{1}{c}{30}&\multicolumn{1}{c}{50}&\multicolumn{1}{c}{30}&\multicolumn{1}{c}{50}&\multicolumn{1}{c}{30}&\multicolumn{1}{c}{50}&\multicolumn{1}{c}{30}&\multicolumn{1}{c}{50}&\multicolumn{1}{c}{30}&\multicolumn{1}{c}{50}\\\hline
No differences&1&$2.08(0.62) $&$2.41(0.67) $  & \multicolumn{2}{!{\hspace*{-0.4pt}\tikzmark{start1}}c!{\tikzmark{end1}}}{} &$1.78(0.59) $&$1.91(0.55)$  & \multicolumn{2}{!{\hspace*{-0.4pt}\tikzmark{start2}}c!{\tikzmark{end2}}}{}&$ 0.50$&$0.45 $ & \multicolumn{2}{!{\hspace*{-0.4pt}\tikzmark{start3}}c!{\tikzmark{end3}}}{}&$0.50$& $ 0.49 $ &\multicolumn{2}{!{\hspace*{-0.4pt}\tikzmark{start4}}c!{\tikzmark{end4}}}{} \\
&5&$4.06(0.77) $&$ 6.91(0.76)$ &  \multicolumn{2}{!{\hspace*{-0.4pt}\tikzmark{start11}}c!{\tikzmark{end11}}}{} &$3.46(0.65) $ &$5.99(0.73) $ & \multicolumn{2}{!{\hspace*{-0.4pt}\tikzmark{start22}}c!{\tikzmark{end22}}}{} &$0.48 $&$0.50 $ &\multicolumn{2}{!{\hspace*{-0.4pt}\tikzmark{start33}}c!{\tikzmark{end33}}}{}&$0.54 $ &$0.52 $ &\multicolumn{2}{!{\hspace*{-0.4pt}\tikzmark{start44}}c!{\tikzmark{end44}}}{} \\
&10&$6.21(0.78) $&$10.74(0.79) $ & \multicolumn{2}{!{\hspace*{-0.4pt}\tikzmark{start111}}c!{\tikzmark{end111}}}{} &$5.92(0.67) $ &$10.42(0.76) $ & \multicolumn{2}{!{\hspace*{-0.4pt}\tikzmark{start222}}c!{\tikzmark{end222}}}{} &$0.50 $&$0.51 $ &  \multicolumn{2}{!{\hspace*{-0.4pt}\tikzmark{start333}}c!{\tikzmark{end333}}}{} &$0.54 $&$0.47 $ &  \multicolumn{2}{!{\hspace*{-0.4pt}\tikzmark{start444}}c!{\tikzmark{end444}}}{} \\
&20&$9.62(0.80) $&$16.02(0.83) $ &\multicolumn{2}{!{\hspace*{-0.4pt}\tikzmark{start1111}}c!{\tikzmark{end1111}}}{} &$8.24(0.73) $ &$14.76(0.78) $ & \multicolumn{2}{!{\hspace*{-0.4pt}\tikzmark{start2222}}c!{\tikzmark{end2222}}}{} &$0.46 $&$0.50 $& \multicolumn{2}{!{\hspace*{-0.4pt}\tikzmark{start3333}}c!{\tikzmark{end3333}}}{} &$0.51 $&$0.50 $& \multicolumn{2}{!{\hspace*{-0.4pt}\tikzmark{start4444}}c!{\tikzmark{end4444}}}{} \\
&40&$13.07(0.88) $&$18.85(0.97) $ & \multicolumn{2}{!{\hspace*{-0.4pt}\tikzmark{start11111}}c!{\tikzmark{end11111}}}{} &$11.56(0.75) $ &$17.58(0.84) $ & \multicolumn{2}{!{\hspace*{-0.4pt}\tikzmark{start22222}}c!{\tikzmark{end22222}}}{} &$0.51 $&$0.49 $ & \multicolumn{2}{!{\hspace*{-0.4pt}\tikzmark{start33333}}c!{\tikzmark{end33333}}}{}&$0.53 $ &$0.46 $ &\multicolumn{2}{!{\hspace*{-0.4pt}\tikzmark{start44444}}c!{\tikzmark{end44444}}}{} \\ 
&60&$14.09(0.87) $&$19.71(1)$&\multicolumn{2}{!{\hspace*{-0.4pt}\tikzmark{start111111}}c!{\tikzmark{end111111}}}{} &$13.38(0.81)$&$18.51(0.93) $&\multicolumn{2}{!{\hspace*{-0.4pt}\tikzmark{start222222}}c!{\tikzmark{end222222}}}{}&$0.52 $ &$0.46 $ &\multicolumn{2}{!{\hspace*{-0.4pt}\tikzmark{start333333}}c!{\tikzmark{end333333}}}{} &$0.50 $&$0.48 $& \multicolumn{2}{!{\hspace*{-0.4pt}\tikzmark{start444444}}c!{\tikzmark{end444444}}}{} \\ 
&80&$ 15.2(0.89)$&$19.57(1) $&\multicolumn{2}{!{\hspace*{-0.4pt}\tikzmark{start1111111}}c!{\tikzmark{end1111111}}}{} &$14.16(0.87) $ &$19.10(1) $ &\multicolumn{2}{!{\hspace*{-0.4pt}\tikzmark{start2222222}}c!{\tikzmark{end2222222}}}{} &$0.46 $&$0.47 $  &\multicolumn{2}{!{\hspace*{-0.4pt}\tikzmark{start3333333}}c!{\tikzmark{end3333333}}}{} &$0.53 $&$0.56 $ &\multicolumn{2}{!{\hspace*{-0.4pt}\tikzmark{start4444444}}c!{\tikzmark{end4444444}}}{} \\
&100&$15.83(0.91) $&$19.74(1) $ &\multicolumn{2}{!{\hspace*{-0.4pt}\tikzmark{start11111111}}c!{\tikzmark{end11111111}}}{} &$14.84(0.92) $ &$19.31(1) $ &\multicolumn{2}{!{\hspace*{-0.4pt}\tikzmark{start22222222}}c!{\tikzmark{end22222222}}}{} &$0.48 $&$0.46 $ &\multicolumn{2}{!{\hspace*{-0.4pt}\tikzmark{start33333333}}c!{\tikzmark{end33333333}}}{} &$0.49 $&$0.55 $&\multicolumn{2}{!{\hspace*{-0.4pt}\tikzmark{start44444444}}c!{\tikzmark{end44444444}}}{} \\\hline
Mean shift&1&$0.76(0.24) $&$0.40(0.09) $&$0.82 $ &$0.96 $ &$0.67(0.21) $ &$0.45(0.11) $ &$0.87 $&$0.90 $ &$ 0.15$&$0.05 $ &$ 0.86$&$0.91 $ &$ 0.19$&$0.12 $&$0.79 $ &$0.86 $  \\
&5&$0.21(0.03) $&$0.07(0) $ &$0.99$&$0.99 $ &$0.28(0.04) $ &$0.09(0.01) $ &$0.98 $&$1 $ &$0.01 $&$0.002 $&$ 1$ &$0.98 $&$0.02 $ &$0.004 $& $0.97 $&$0.97 $  \\
&10&$0.09(0.01) $&$0.05(0) $ &$1 $ &$1 $ & $0.17(0.05) $& $0.02(0) $& $0.98 $&$1 $&$0.001 $&$0.001 $& $1 $& $1 $&$ 0.006$&$0.004 $&$ 0.98$ &$1 $  \\
&20&$ 0.09(0.01)$&$0(0) $ &$ 1$ &$1 $ &$0.09(0.01) $ &$0(0) $ &$1 $&$1 $ &$0.001 $&$0.001 $&$1 $ &$1 $&$0.004 $ &$0.004 $&$1 $ &$1 $  \\
&40&$0.08(0) $&$0(0) $ &$1 $ &$1 $ &$0.06(0.02) $ & $0(0) $&$1 $&1 &$ 0.001$& $0.001 $&$ 1$& $1 $&$ 0.004$& $0.004 $&$1 $&$1 $  \\ 
&60&$0.09(0.03) $&$0(0)$ &$1 $ &$1$ &$0.07(0.04) $&$0(0)$ &$1 $ &$1 $ &$ 0.001$&$0.001 $&$ 1$ &$1 $& $ 0.004$&$0.004 $&$ 1$ &$ 1$  \\ 
&80&$ 0.06(0.02)$&$0(0) $ &$ 1$ &$1 $ &$0.05(0.03) $ & $0(0) $&$1 $ &1&$ 0.001$ &$0.001 $&$ 1$& $1 $&$ 0.004$& $0.004 $&$1 $ &$1 $ \\
&100&$ 0.04(0.01)$&$0(0) $ &$1 $ &$1 $ &$0.03(0) $ &$0(0) $ &$1 $ &1&$0.001 $& $0.001 $&$ 1$&$1 $ &$ 0.004$&$0.004 $&$ 1$ &$1 $  \\\hline
Skewness&1&$0.01(0) $&$0(0) $ &$0.99 $ &$1 $ &$0.07(0) $ &$0(0) $ &$0.99 $&1 &$0.009 $&$0.001 $&$0.98 $ &$1 $ &$0.007 $&$0.004 $ &$0.94 $&$1 $  \\
&5&$0(0) $&$0(0) $ &$1 $ &$1 $ &$0(0) $ &$0(0) $ &$1 $&1&$0.001 $ &$0.001 $ &$1$&$1 $ &$0.004 $&$0.004 $ &$1 $&$1 $  \\
&10&$0(0) $&$0(0) $ &$1 $ &$1 $ &$0(0) $ &$0(0) $ &$1 $ &1&$0.001$&$0.001 $&$1$ &$1 $ &$0.004$&$ 0.004$&$1$ &$1 $  \\
&20&$0(0)$&$0(0) $ &$1 $ &$1 $ &$0(0) $ &$0(0) $ &$1 $ &1&$0.001$&$0.001 $ &$1$&$1 $&$0.004$ &$0.004 $ &$1$&$1 $  \\
&40&$0(0)$&$0(0) $ &$ 1$ &$1 $ &$0(0) $ &$0(0) $ &$1 $ &1&$0.001$&$0.001 $ &$1$&$ 1$ &$0.004$&$ 0.004$ &$1$&$1 $  \\ 
&60&$0(0)$&$0(0)$ &$1 $ &$1$ &$0(0)$&$0(0)$ &$1 $ &$1 $&$0.001$ &$0.001 $ &$1$&$1 $ &$0.004$&$0.004 $ &$1$&$1 $  \\ 
&80&$0(0)$&$0(0) $ &$1 $ &$1 $ &$0(0) $ &$0(0) $ &$1 $&1&$0.001$ &$0.001 $ &$1$&$1 $ &$0.004$&$0.004 $&$1$ &$1 $  \\
&100&$0(0)$&$0(0) $ &$1 $ &$1 $ &$ 0(0)$ &$0(0) $ &$1 $&$1 $&$0.001$ &$0.001 $&$1$ &$1 $&$0.004$ &$0.004 $&$1$ &$1 $  \\\hline
Mixture&1&$0.06(0)$&$0(0) $ &$ 0.90$ &$0.97 $ &$0.19(0.03) $ &$0.04(0) $ &$0.97 $&$1 $&$0.43$ &$0.38 $&$0.58$ &$0.57 $&$0.29$ &$0.17 $&$0.69$ &$0.81 $  \\
&5&$0(0)$&$0(0) $ & $1 $& $1 $&$0(0) $ &$0(0) $ &$1 $ &$1 $&$0.15$&$0.09 $ &$0.84$&$0.91 $ &$0.06$&$0.01 $ &$0.95$&$1 $  \\
&10&$0(0)$&$0(0) $ &$ 1$ &$1 $ &$0(0) $ &$0(0) $ &$1 $&$1 $&$0.03$ &$ 0.007$ &$0.95$&$0.98 $ &$0.02$&$0.007 $&$0.96$ &$1 $ \\
&20&$0(0)$&$0(0) $ &$1 $ &$1 $ &$0(0) $ &$0(0) $ &$1 $&$1 $&$0.002$ &$0.001 $&$0.96$ &$1 $&$0.01$ &$0.006 $ &$1$&$1 $\\
&40&$0(0)$&$0(0) $ &$ 1$ &$1 $ &$0(0) $ &$ 0(0)$ &$1 $&$1 $&$0.001$ &$0.001 $& $1$&$1 $&$0.01$ &$0.006 $&$1$ &$1 $\\ 
&60&$0(0)$&$0(0)$ &$ 1$ &$1$ &$0(0)$&$0(0)$ &$1 $ &$1 $ &$0.001$&$0.001 $ &$1$&$1 $ &$0.006$&$0.009 $&$1$ &$1 $\\
&80&$0(0)$&$0(0) $ &$1 $ &$1 $ &$0(0) $ &$0(0) $ &$1 $&$1 $&$0.001$ &$0.001 $& $1$&$1 $ &$0.008$&$0.006 $& $1$&$1 $\\
&100&$0(0)$&$0(0) $ & $1 $&$ 1$ &$0(0) $ &$0(0) $ &$1 $&$1 $& $0.001$&$0.001 $ &$1$&$1 $ &$0.004$&$0.006 $&$1$ &$1 $\\\hline
Variance shift&1&$0.87(0.29)$& $0.85(0.19) $&$0.71 $ &$0.83 $ &$1.10(0.36) $ &$1.08(0.33) $ &$0.53 $&$0.63$ &$0.46$&$0.38 $ &$0.54$&$0.57 $ &$0.33$&$0.21 $ &$0.65$&$0.77 $\\ 
&5&$0.55(0.12)$&$0.56(0.15) $ &$0.99 $ &$0.99 $ &$1.06(0.35) $ &$0.99(0.32) $ &$ 0.89$&$0.98$&$0.34$ &$0.20 $ &$0.65$&$0.80 $ &$0.20$&$0.07 $ &$0.82$&$0.93 $\\
&10&$0.44(0.11)$&$0.27(0.05) $ &$ 0.99$ &$1 $ &$0.87(0.24) $ &$0.80(0.25) $ &$0.97 $ &$1$&$0.14$&$0.03 $ &$0.85$&$0.97 $&$0.10$ &$0.02 $&$0.89$ &$0.97 $\\
&20&$0.34(0.07)$&$0.08(0) $ &$1 $ &$1 $ &$0.65(0.17) $ &$0.60(0.13) $ &$0.99 $ &$1$&$0.01$&$0.001 $ &$0.95$&$1 $&$0.03$ &$0.006 $ &$0.95$&$1 $\\
&40&$0.13(0.01)$&$0.02(0) $ &$ 1$ &$1 $ &$0.61(0.18) $ &$0.58(0.14) $ &$1 $&1&$ 0.001$ &$0.001 $&$1$ &$1 $ &$0.01 $&$0.004 $ &$0.98$&$1 $\\ 
&60&$0.12(0.01)$&$0.01(0)$ &$1 $ &$1$ &$0.47(0.10)$&$0.45(0.11)$ &$1 $ &$1 $ &$0.001$&$0.001 $ &$1$&$1 $ &$0.006$&$0.004 $&$1$ &$1 $\\
&80&$0.17(0.01)$&$0(0) $ &$ 1$ &$1 $ &$0.54(0.12) $ &$0.47(0.11) $ &$1 $&1& $0.001$&$0.001 $ &$1$&$1 $ &$0.005$&$0.004 $ &$1$&$1 $\\
&100&$0.14(0.01)$& $0(0) $&$ 1$ &$1 $ &$0.45(0.10) $ &$0.41(0.08) $ &$1 $&1&$0.001$ &$0.001 $& $1$&$1 $ &$0.004$&$0.004 $ &$1$&$1 $ \\\hline
Heavy tail&1&$0.93(0.28)$&$0.66(0.20) $ &$0.89 $ &$0.92 $ &$1.19(0.41) $ &$1.10(0.38) $ &$0.70 $ &$0.78$&$0.43$&$0.39 $ &$0.57$&$0.56 $ &$0.39$&$0.36 $ &$0.59$&$0.62 $\\
&5&$0.32(0.06)$&$0.37(0.08) $ &$0.99 $ &$0.99 $ &$0.77(0.24) $ &$0.78(0.23) $ &$0.93 $&$0.99$& $0.20$&$0.11 $ &$0.79$&$0.89 $ &$0.03$&$0.006 $&$0.97$ &$0.99$\\
&10&$0.35(0.08)$&$0.13(0.02) $ &$0.99 $ &$1 $ &$0.61(0.16) $ &$0.68(0.19) $ &$0.98 $&1&$0.06$ &$0.007 $&$0.92$ &$0.98 $& $0.09$&$0.01 $&$0.90$ &$0.97 $\\
&20&$0.15(0.02)$&$0(0) $ &$ 1$ &$1 $ &$0.48(0.12) $ &$0.46(0.12) $ &$1 $&$1 $&$0.002$ &$0.001 $&$0.96$ &$1 $ &$0.02$&$0.005 $&$0.96$ &$1 $\\
&40&$0.07(0.01)$&$0(0) $ &$1 $ &$1 $ &$0.25(0.04) $ &$0.18(0.04)$ &$1 $ &1&$0.001$&$0.001 $ &$1$&$1 $ &$0.005$&$0.004 $&$1$ &$1 $\\ 
&60&$0.02(0)$&$0(0)$ &$1 $ &$1$ &$0.22(0.03)$&$0.14(0.01)$ &$1 $ &$1 $ &$0.001$&$0.001 $ &$1$&$1 $ &$0.004$&$0.004 $&$1$ &$1 $\\
&80&$0.01(0)$&$0(0) $ &$1 $ &$1 $ &$0.13(0.01) $ &$0.15(0.02) $ &$1 $ &1&$0.001$&$0.001 $ &$1$&$1 $ &$0.004$&$0.004 $ &$1$&$1 $\\
&100&$0.04(0)$&$0(0) $ & $ 1$&$1 $ &$0.14(0.01) $ &$0.09(0.01) $ &$1 $&1& $0.001$&$0.001 $&$1$ &$1 $& $0.004$&$0.004 $&$1$ &$1 $\\\hline
Kurtosis&1&$0.47(0.12) $&$0.19(0.04) $ &$0.89 $ &$0.98 $ &$1.09(0.37) $ &$0.88(0.28) $ &$0.77 $ &$0.90$&$0.28 $&$0.23 $ &$ 0.74$&$0.72 $ &$0.18 $&$0.11 $ &$0.79 $&$0.88 $\\
&5&$0.16(0.03) $&$0.06(0.01) $ &$1 $ &$1 $ &$0.63(0.18) $ &$0.41(0.09) $ &$0.96 $ &$0.99$&$0.04 $&$0.01 $ &$0.94 $&$0.98 $ &$0.03 $&$0.008 $&$0.97 $ &$0.96 $\\
&10&$0.02(0) $&$0(0) $ &$1 $ &$1 $ &$0.35(0.08) $ &$0.32(0.06) $ &$0.97 $ &1&$ 0.001$&$0.001 $ &$ 1$&$1 $ &$0.007 $&$0.004 $ &$ 0.96$&$1 $\\
&20&$0(0) $&$0(0) $ &$ 1$ &$1 $ &$0.20(0.03) $ &$0.18(0.02) $ &$1 $&1&$ 0.001$ &$0.001 $ &$ 1$&$1 $ &$ 0.004$&$0.004 $ &$ 1$&$1 $\\
&40&$ 0(0)$&$0(0) $ &$ 1$ &$1 $ &$0.06(0.01) $ &$0.06(0) $ &$1 $ &1&$ 0.001$&$0.001 $ &$ 1$&$1 $ &$ 0.004$&$0.004 $ &$ 1$&$1 $\\ 
&60&$ 0(0)$&$0(0)$ &$ 1$ &$1$ &$0.05(0)$&$0.04(0) $ &$1 $ &$1 $ &$ 0.001$&$0.001 $ &$ 1$&$1 $ &$ 0.004$&$0.004 $ &$ 1$&$1 $\\
&80&$ 0(0)$&$0(0) $ &$1 $ &$1 $ &$0.05(0) $ &$0.03(0) $ &$1 $ &1&$ 0.001$&$0.001 $ &$ 1$&$ 1$ &$ 0.004$&$0.004 $ &$ 1$&$1 $\\
&100&$ 0(0)$&$0(0) $ &$ 1$ &$1 $ &$0.02(0) $ &$0(0) $ &$1 $ &1&$ 0.001$&$0.001 $ &$ 1$&$1 $ &$ 0.004$&$0.004 $ &$ 1$&$1 $\\
\\\bottomrule
\end{tabular}
\HatchedCell{start1}{end1}{%
pattern color=black!70,pattern=north east lines}
\HatchedCell{start2}{end2}{%
pattern color=black!70,pattern=north east lines}
\HatchedCell{start3}{end3}{%
pattern color=black!70,pattern=north east lines}
\HatchedCell{start4}{end4}{%
pattern color=black!70,pattern=north east lines}
\HatchedCell{start11}{end11}{%
pattern color=black!70,pattern=north east lines}
\HatchedCell{start22}{end22}{%
pattern color=black!70,pattern=north east lines}
\HatchedCell{start33}{end33}{%
pattern color=black!70,pattern=north east lines}
\HatchedCell{start44}{end44}{%
pattern color=black!70,pattern=north east lines}
\HatchedCell{start111}{end111}{%
pattern color=black!70,pattern=north east lines}
\HatchedCell{start222}{end222}{%
pattern color=black!70,pattern=north east lines}
\HatchedCell{start333}{end333}{%
pattern color=black!70,pattern=north east lines}
\HatchedCell{start444}{end444}{%
pattern color=black!70,pattern=north east lines}
\HatchedCell{start1111}{end1111}{%
pattern color=black!70,pattern=north east lines}
\HatchedCell{start2222}{end2222}{%
pattern color=black!70,pattern=north east lines}
\HatchedCell{start3333}{end3333}{%
pattern color=black!70,pattern=north east lines}
\HatchedCell{start4444}{end4444}{%
pattern color=black!70,pattern=north east lines}
\HatchedCell{start11111}{end11111}{%
pattern color=black!70,pattern=north east lines}
\HatchedCell{start22222}{end22222}{%
pattern color=black!70,pattern=north east lines}
\HatchedCell{start33333}{end33333}{%
pattern color=black!70,pattern=north east lines}
\HatchedCell{start44444}{end44444}{%
pattern color=black!70,pattern=north east lines}
\HatchedCell{start111111}{end111111}{%
pattern color=black!70,pattern=north east lines}
\HatchedCell{start222222}{end222222}{%
pattern color=black!70,pattern=north east lines}
\HatchedCell{start333333}{end333333}{%
pattern color=black!70,pattern=north east lines}
\HatchedCell{start444444}{end444444}{%
pattern color=black!70,pattern=north east lines}
\HatchedCell{start1111111}{end1111111}{%
pattern color=black!70,pattern=north east lines}
\HatchedCell{start2222222}{end2222222}{%
pattern color=black!70,pattern=north east lines}
\HatchedCell{start3333333}{end3333333}{%
pattern color=black!70,pattern=north east lines}
\HatchedCell{start4444444}{end4444444}{%
pattern color=black!70,pattern=north east lines}
\HatchedCell{start11111111}{end11111111}{%
pattern color=black!70,pattern=north east lines}
\HatchedCell{start22222222}{end22222222}{%
pattern color=black!70,pattern=north east lines}
\HatchedCell{start33333333}{end33333333}{%
pattern color=black!70,pattern=north east lines}
\HatchedCell{start44444444}{end44444444}{%
pattern color=black!70,pattern=north east lines}
}
\end{table}%

To compare the BNP and FNP tests, the $p$-values of the frequentists counterparts corresponding to each Bayesian test are presented in Table \ref{examples50} using $\mathrm{R}$ packages \textbf{energy}\footnote{\url{https://CRAN.R-project.org/package=energy}} and \textbf{maotai}\footnote{\url{https://CRAN.R-project.org/package=maotai}}. AUC values of all tests are also given to facilitate comparison between tests.
Generally, the proposed test reflects better performances than its frequentist counterparts in lower dimensions. For instance, in the variance shift example, when $d=5$ and $ n=30$, the average of the $RB$ and its strength for the semi-BNP-MMD test are $0.55$ and $0.12$, respectively, which shows strong evidence to reject the null. While the average of the $p$-value corresponding to the MMD frequentist test is $0.34$, which shows a failure to reject the null hypothesis. The AUC value of the semi-BNP test is also $0.99$ which indicates a better ability than its frequentist counterpart with an AUC of $0.65$. To examine the large sample property, additional results for $n=500,1000$ are presented in Section 6.1 of the supplementary material, revealing the relatively poor performance of the BNP-Energy test in comparison to other tests.

\subsection{The Semi-BNP GAN}
According to the results reported in the previous subsection, the semi-BNP estimator suggests a test that outperforms other competing tests in many scenarios. Therefore, we expect that embedding this estimator in GANs as the discriminator makes an accurate comparison to distinguish real and fake data. We use the database of handwritten digits with 10 modes, bone marrow biopsy histopathology, human faces, and brain MRI images to analyze the model performance. 
Following \citet{Li}, we consider the Gaussian neural network for the generator with four hidden layers each having rectified linear units activation function and a sigmoid function for the output layer. There are numerous methods to choose network parameters. We adopt the Bayesian optimization method, used in \citet{Li}, to determine the number of nodes in hidden layers and tuning parameters of the network thanks to its good performance. We also set mini-batch sizes of $n_{mb}=1,000$ and a mixture of six Gaussian kernels corresponding to the bandwidth parameters $2, 5, 10, 20, 40,$ and $ 80$ to train networks discussed in this section. 
\subsubsection{MNIST Dataset \citep{lecun1998mnist}:}
The MNIST dataset includes 60,000 handwritten digits of 10 numbers from 0 to 9 each having 784 ($28\times28$) dimensions. This dataset is split into 50000 training and 10000 testing images and is a good example to demonstrate the performance of the method in dealing with the mode collapse problem. We use the training set to train the network.
A sample from the training MNIST dataset is shown in Figure \ref{mnist-pos-fre}-a. Following $r_{mb}=40,000$ iterations, we generate samples from the trained semi-BNP GAN using Algorithm 2 from the supplementary material, as depicted in Figure \ref{mnist-pos-fre}-b. 
The results of \citet{Li}\footnote{The implementation codes for the GAN proposed by \citet{Li} is available at \url{https://www.dropbox.com/s/anf9z1zyqi7379n/Generative-Moment-Matching-Networks-master.zip?file_subpath=\%2FREADME.md} } are also presented by Figure \ref{mnist-pos-fre}-c as the frequentist counterpart of our semi-BNP procedure. Based on these preliminary results, we can see that our generated images can, at least, replicate the results of \citet{Li} and in some cases produce sharper images. This result can also be deduced from the presented values of certain score functions in Section 6.2 of the supplementary material.
\begin{figure}[ht]
\centering
\subfloat[Training data]{\includegraphics[width=.24\linewidth]{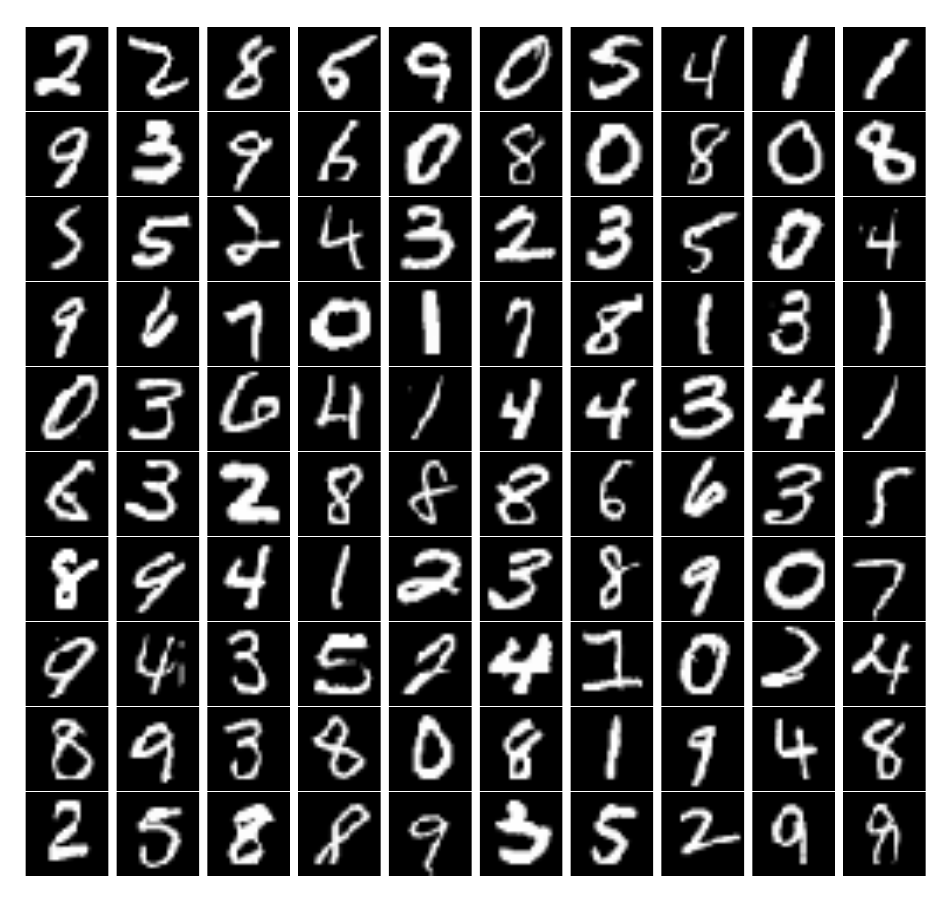}}
\subfloat[Semi-BNP-MMD GAN]{\includegraphics[width=.24\linewidth]{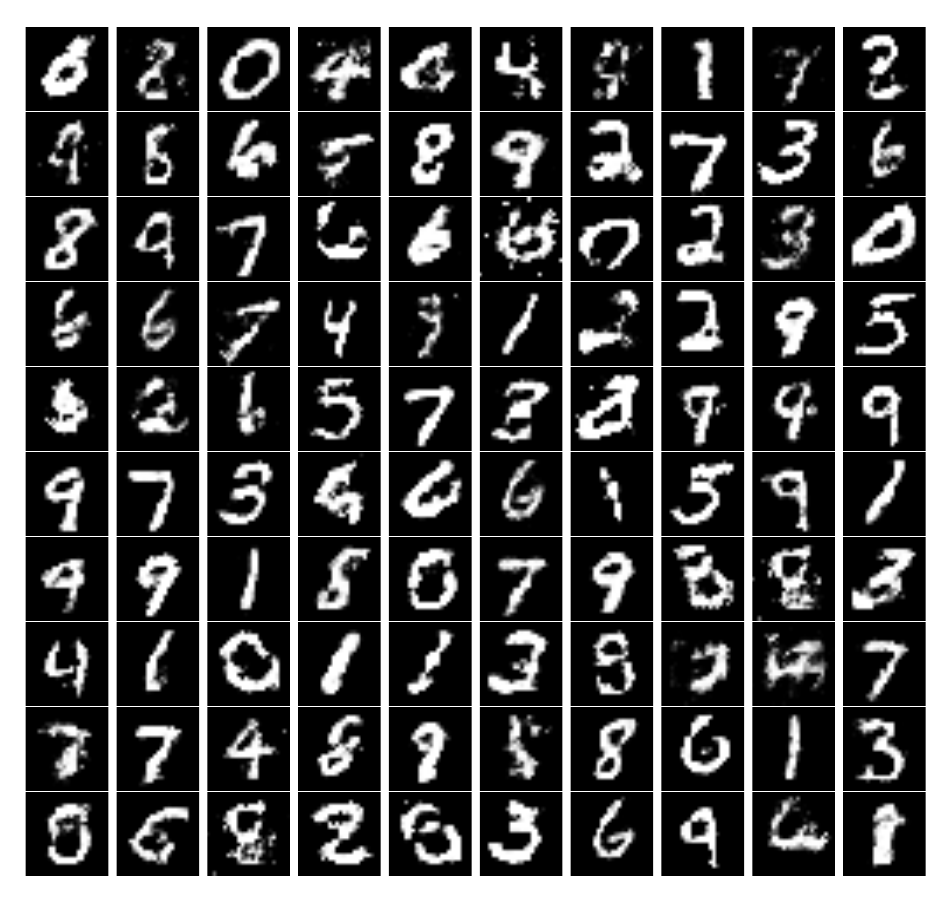}}
\subfloat[FNP-MMD GAN]{\includegraphics[width=.24\linewidth]{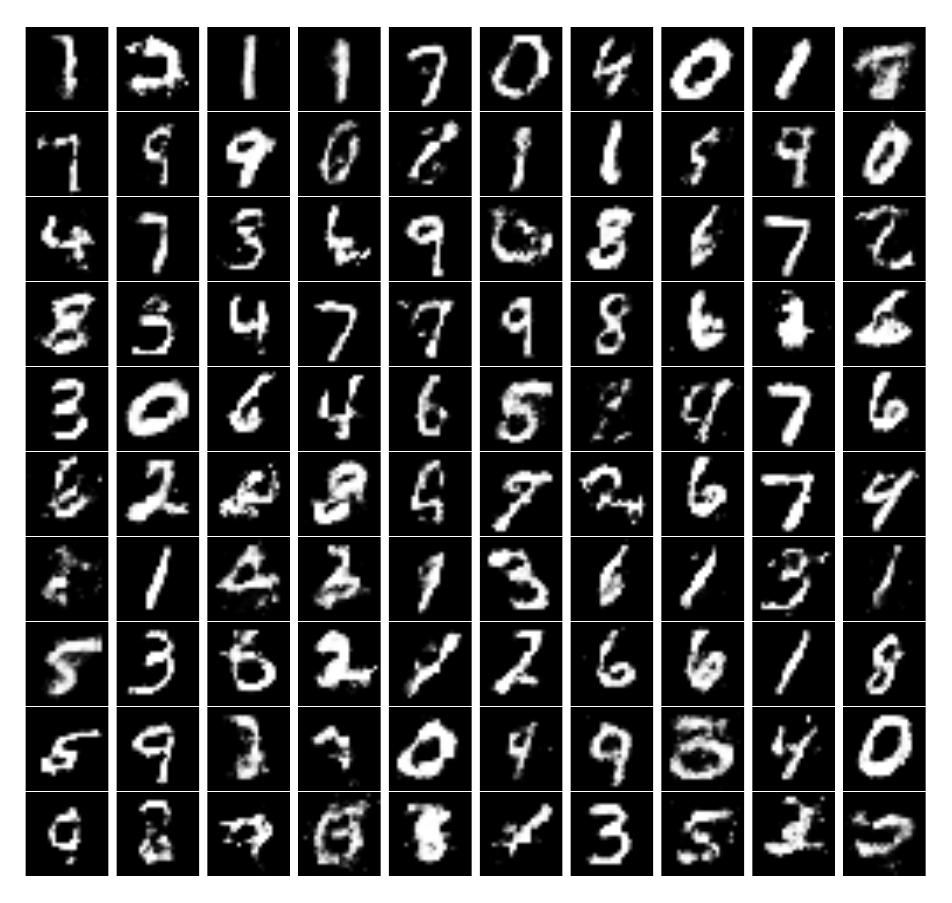}}
\caption{Generated samples of sizes ($10\times10$) from semi-BNP-MMD and MMD-FNP GAN for the MNIST dataset using a mixture of Gaussian kernels in 40,000 iterations.}\label{mnist-pos-fre}
\end{figure}

On the other hand, unlike the semi-BNP test, our experimental results demonstrate that the semi-BNP GAN, using a mixture of Gaussian kernels, outperforms the approach that considers only a single Gaussian kernel. To investigate this matter further, we present several samples of the trained generator using a Gaussian kernel with different values of $\sigma$, as well as the median heuristic $\sigma_{MH}$, in Figure \ref{mnist-pos-Vsig}. Note that the value of $\sigma_{MH}$ is updated in each iteration, and therefore, no specific value is reported for it in this figure. While increasing the value of $\sigma$ enhances the diversity of the generated images, it is evident that the resolution of the images in Figure \ref{mnist-pos-Vsig} does not reach the image quality achieved by the mixture kernel.

In contrast to using MMD kernel-based measures, it may also be interesting to consider the energy distance in learning GANs from a BNP perspective. To address this concern, we embed the two-sample BNP-energy test of \cite{Al-Labadi} in
training GANs as a discriminator and showing the generated samples in Figure \ref{mnist-energy}-a. This image clearly shows the inefficiency of the two-sample BNP test of \cite{Al-Labadi} in training the generator. The main issue in this test procedure is treating $F_{G_{\boldsymbol{\omega}}}$ as unknown distribution to place a DP prior on it which is contrary to update parameter $\boldsymbol{\omega}$ in the parameterized generative neural network $G_{\boldsymbol{\omega}}$.

One may also be interested in considering the semi-BNP-energy procedure in learning GANs which makes more sense to compare the semi-BNP-MMD results. To do this, we use the energy distance instead of the MMD in Algorithm 2 in the supplementary material. The results are presented in Figure \ref{mnist-energy}-b and show blurry and unclear images with no variety, which reflect the inefficiency of using the energy distance compared to the MMD kernel-based measure. More experiments are given in Section 6.2 of the supplementary material.

\begin{figure}[ht]
\centering
\subfloat[$\sigma=2$]{\includegraphics[width=.24\linewidth]{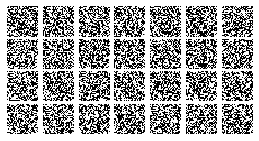}}
\hspace{0cm}
\subfloat[$\sigma=5$]{\includegraphics[width=.24\linewidth]{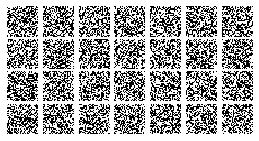}}
\hspace{0cm}
\subfloat[$\sigma=10$]{\includegraphics[width=.24\linewidth]{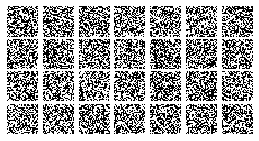}}
\subfloat[$\sigma=20$]{\includegraphics[width=.24\linewidth]{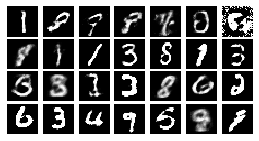}}
\qquad
\subfloat[$\sigma=40$]{\includegraphics[width=.24\linewidth]{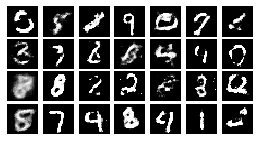}}
\hspace{0cm}
\subfloat[$\sigma=80$]{\includegraphics[width=.24\linewidth]{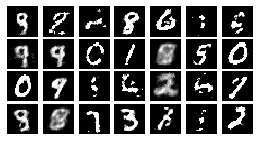}}
\hspace{0cm}
\subfloat[$\sigma_{MH}$]{\includegraphics[width=.24\linewidth]{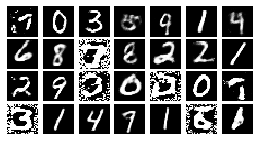}}
\caption{Generated samples from semi-BNP-MMD for the MNIST dataset using a single Gaussian kernel with various values of bandwidth parameter $\sigma$ in 40,000 iterations.}\label{mnist-pos-Vsig}
\end{figure}

\begin{figure}[ht]
\centering
\subfloat[]{\includegraphics[width=.24\linewidth]{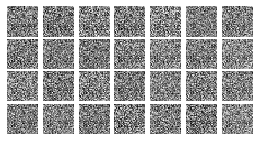}}
\hspace{1cm}
\subfloat[]{\includegraphics[width=.24\linewidth]{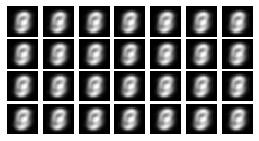}}
\caption{Generated samples from BNP-Energy GAN (a) and semi-BNP-Energy GAN (b) for the MNIST dataset in 40,000 iterations. }\label{mnist-energy}
\end{figure}

\section{Conclusion}\label{sec-conclusion}

Our semi-BNP approach effectively estimates the MMD measure between an unknown distribution and an intractable parametric distribution. It outperforms frequentist counterparts and even surpasses a recent BNP competitor in certain scenarios \citep{Al-Labadi}. This approach shows great potential in training GANs, where the proposed estimator serves as a discriminator, inducing a posterior distribution on the generator's parameter space. Stick-breaking representation lacks normalization terms and exhibits stochastic decrease, making it inefficient for simulations \citep{zarepour2012rapid}. Thus, exploring alternative DP approximations for MMD estimation presents an intriguing avenue for future research. 
Future work will focus on generating 3D medical images to further enhance results.
\section*{Acknowledgments}
The contribution of Michael Zhang was partially funded by the HKU-URC Seed Fund for Basic Research for New Staff.


\clearpage
\bibliographystyle{apalike}
\bibliography{GAN-proposal}

\begin{thebibliography}{69}
\providecommand{\natexlab}[1]{#1}
\providecommand{\url}[1]{\texttt{#1}}
\expandafter\ifx\csname urlstyle\endcsname\relax
  \providecommand{\doi}[1]{doi: #1}\else
  \providecommand{\doi}{doi: \begingroup \urlstyle{rm}\Url}\fi

\bibitem[Aeschbacher et~al.(2012)Aeschbacher, Beaumont, and
  Futschik]{aeschbacher2012novel}
S.~Aeschbacher, M.~A. Beaumont, and A.~Futschik.
\newblock A novel approach for choosing summary statistics in approximate
  {B}ayesian computation.
\newblock \emph{Genetics}, 192\penalty0 (3):\penalty0 1027--1047, 2012.

\bibitem[Al-Labadi(2021)]{al2021two}
L.~Al-Labadi.
\newblock The two-sample problem via relative belief ratio.
\newblock \emph{Computational Statistics}, 36\penalty0 (3):\penalty0
  1791--1808, 2021.

\bibitem[Al-Labadi and Evans(2018)]{al2018prior}
L.~Al-Labadi and M.~Evans.
\newblock Prior-based model checking.
\newblock \emph{Canadian Journal of Statistics}, 46\penalty0 (3):\penalty0
  380--398, 2018.

\bibitem[Al-Labadi and Zarepour(2017)]{labadi2014two}
L.~Al-Labadi and M.~Zarepour.
\newblock Two-sample {K}olmogorov-{S}mirnov test using a {B}ayesian
  nonparametric approach.
\newblock \emph{Mathematical Methods of Statistics}, 26\penalty0 (3):\penalty0
  212--225, 2017.

\bibitem[Al-Labadi et~al.(2021{\natexlab{a}})Al-Labadi, Fazeli~Asl, and
  Saberi]{al2021bayesian}
L.~Al-Labadi, F.~Fazeli~Asl, and Z.~Saberi.
\newblock A {B}ayesian semiparametric {G}aussian copula approach to a
  multivariate normality test.
\newblock \emph{Journal of Statistical Computation and Simulation}, 91\penalty0
  (3):\penalty0 543--563, 2021{\natexlab{a}}.

\bibitem[Al-Labadi et~al.(2021{\natexlab{b}})Al-Labadi, Fazeli~Asl, and
  Saberi]{al2021necessary}
L.~Al-Labadi, F.~Fazeli~Asl, and Z.~Saberi.
\newblock A necessary bayesian nonparametric test for assessing multivariate
  normality.
\newblock \emph{Mathematical Methods of Statistics}, 30\penalty0
  (3-4):\penalty0 64--81, 2021{\natexlab{b}}.

\bibitem[Al-Labadi et~al.(2022{\natexlab{a}})Al-Labadi, {Fazeli Asl}, and
  Saberi]{Al-Labadi}
L.~Al-Labadi, F.~{Fazeli Asl}, and Z.~Saberi.
\newblock A {B}ayesian nonparametric multi-sample test in any dimension.
\newblock \emph{AStA Advances in Statistical Analysis}, 106\penalty0
  (2):\penalty0 217--242, 2022{\natexlab{a}}.

\bibitem[Al-Labadi et~al.(2022{\natexlab{b}})Al-Labadi, Fazeli~Asl, and
  Saberi]{al2022test}
L.~Al-Labadi, F.~Fazeli~Asl, and Z.~Saberi.
\newblock A test for independence via {B}ayesian nonparametric estimation of
  mutual information.
\newblock \emph{Canadian Journal of Statistics}, 50\penalty0 (3):\penalty0
  1047--1070, 2022{\natexlab{b}}.

\bibitem[Al-Labadi et~al.(2023)Al-Labadi, Alzaatreh, and Evans]{al2023measure}
L.~Al-Labadi, A.~Alzaatreh, and M.~Evans.
\newblock How to measure evidence: Bayes factors or relative belief ratios?
\newblock \emph{arXiv preprint arXiv:2301.08994}, 2023.

\bibitem[Alex et~al.(2017)Alex, KP, Chennamsetty, and
  Krishnamurthi]{alex2017generative}
V.~Alex, M.~S. KP, S.~S. Chennamsetty, and G.~Krishnamurthi.
\newblock Generative adversarial networks for brain lesion detection.
\newblock In \emph{Medical Imaging 2017: Image Processing}, volume 10133, pages
  113--121. SPIE, 2017.

\bibitem[Arjovsky et~al.(2017)Arjovsky, Chintala, and
  Bottou]{arjovsky2017wasserstein}
M.~Arjovsky, S.~Chintala, and L.~Bottou.
\newblock Wasserstein generative adversarial networks.
\newblock In \emph{International Conference on Machine Learning}, pages
  214--223. PMLR, 2017.

\bibitem[Beaumont et~al.(2002)Beaumont, Zhang, and
  Balding]{beaumont2002approximate}
M.~A. Beaumont, W.~Zhang, and D.~J. Balding.
\newblock Approximate {B}ayesian computation in population genetics.
\newblock \emph{Genetics}, 162\penalty0 (4):\penalty0 2025--2035, 2002.

\bibitem[Bharti et~al.(2023)Bharti, Naslidnyk, Key, Kaski, and
  Briol]{bharti2023optimally}
A.~Bharti, M.~Naslidnyk, O.~Key, S.~Kaski, and F.-X. Briol.
\newblock Optimally-weighted estimators of the maximum mean discrepancy for
  likelihood-free inference.
\newblock \emph{arXiv preprint arXiv:2301.11674}, 2023.

\bibitem[Bi{\'n}kowski et~al.(2018)Bi{\'n}kowski, Sutherland, Arbel, and
  Gretton]{binkowski2018demystifying}
M.~Bi{\'n}kowski, D.~J. Sutherland, M.~Arbel, and A.~Gretton.
\newblock Demystifying {MMD} {GANs}.
\newblock In \emph{International Conference on Learning Representations}, 2018.

\bibitem[Borgwardt and Ghahramani(2009)]{Borgwardt}
K.~M. Borgwardt and Z.~Ghahramani.
\newblock Bayesian two-sample tests.
\newblock \emph{arXiv preprint arXiv:0906.4032v1}, 2009.

\bibitem[Briol et~al.(2019)Briol, Barp, Duncan, and
  Girolami]{briol2019statistical}
F.-X. Briol, A.~Barp, A.~B. Duncan, and M.~Girolami.
\newblock Statistical inference for generative models with maximum mean
  discrepancy.
\newblock \emph{arXiv preprint arXiv:1906.05944}, 2019.

\bibitem[Che et~al.(2016)Che, Li, Jacob, Bengio, and Li]{che2016mode}
T.~Che, Y.~Li, A.~P. Jacob, Y.~Bengio, and W.~Li.
\newblock Mode regularized generative adversarial networks.
\newblock \emph{arXiv preprint arXiv:1612.02136}, 2016.

\bibitem[Ch{\'e}rief-Abdellatif and Alquier(2020)]{cherief2020mmd}
B.-E. Ch{\'e}rief-Abdellatif and P.~Alquier.
\newblock {MMD-B}ayes: Robust {B}ayesian estimation via maximum mean
  discrepancy.
\newblock In \emph{Symposium on Advances in Approximate Bayesian Inference},
  pages 1--21. PMLR, 2020.

\bibitem[Ch{\'e}rief-Abdellatif and Alquier(2022)]{cherief2022finite}
B.-E. Ch{\'e}rief-Abdellatif and P.~Alquier.
\newblock Finite sample properties of parametric {MMD} estimation: {R}obustness
  to misspecification and dependence.
\newblock \emph{Bernoulli}, 28\penalty0 (1):\penalty0 181--213, 2022.

\bibitem[Dellaporta et~al.(2022)Dellaporta, Knoblauch, Damoulas, and
  Briol]{dellaporta2022robust}
C.~Dellaporta, J.~Knoblauch, T.~Damoulas, and F.-X. Briol.
\newblock Robust {B}ayesian inference for simulator-based models via the {MMD}
  posterior bootstrap.
\newblock In \emph{International Conference on Artificial Intelligence and
  Statistics}, pages 943--970. PMLR, 2022.

\bibitem[Dziugaite et~al.(2015)Dziugaite, Roy, and
  Ghahramani]{dziugaite2015training}
G.~K. Dziugaite, D.~M. Roy, and Z.~Ghahramani.
\newblock Training generative neural networks via maximum mean discrepancy
  optimization.
\newblock In \emph{Proceedings of the Thirty-First Conference on Uncertainty in
  Artificial Intelligence}, pages 258--267, 2015.

\bibitem[Edmonds and Karp(1972)]{edmonds1972theoretical}
J.~Edmonds and R.~M. Karp.
\newblock Theoretical improvements in algorithmic efficiency for network flow
  problems.
\newblock \emph{Journal of the ACM (JACM)}, 19\penalty0 (2):\penalty0 248--264,
  1972.

\bibitem[Evans(2015)]{Evans15}
M.~Evans.
\newblock \emph{Measuring statistical evidence using relative belief}.
\newblock CRC Press, Boca Raton, FL, 2015.

\bibitem[Ferguson(1973)]{Ferguson}
T.~S. Ferguson.
\newblock A {B}ayesian analysis of some nonparametric problems.
\newblock \emph{The Annals of Statistics}, 1\penalty0 (2):\penalty0 209--230,
  1973.

\bibitem[Fong et~al.(2019)Fong, Lyddon, and Holmes]{fong2019scalable}
E.~Fong, S.~Lyddon, and C.~Holmes.
\newblock Scalable nonparametric sampling from multimodal posteriors with the
  posterior bootstrap.
\newblock In \emph{International Conference on Machine Learning}, pages
  1952--1962. PMLR, 2019.

\bibitem[Ford and Fulkerson(1956)]{ford1956maximal}
L.~R. Ford and D.~R. Fulkerson.
\newblock Maximal flow through a network.
\newblock \emph{Canadian journal of Mathematics}, 8:\penalty0 399--404, 1956.

\bibitem[García-Donato and Chen(2005)]{Gonzalo}
G.~García-Donato and M.-H. Chen.
\newblock Calibrating {B}ayes factor under prior predictive distributions.
\newblock \emph{Statistica Sinica}, 15\penalty0 (2):\penalty0 359--380, 2005.

\bibitem[Genton(2001)]{genton2001classes}
M.~G. Genton.
\newblock Classes of kernels for machine learning: {A} statistics perspective.
\newblock \emph{Journal of machine learning research}, 2\penalty0
  (Dec):\penalty0 299--312, 2001.

\bibitem[Goodfellow et~al.(2014)Goodfellow, Pouget-Abadie, Mirza, Xu,
  Warde-Farley, Ozair, Courville, and Bengio]{Goodfellow}
I.~Goodfellow, J.~Pouget-Abadie, M.~Mirza, B.~Xu, D.~Warde-Farley, S.~Ozair,
  A.~Courville, and Y.~Bengio.
\newblock Generative adversarial nets.
\newblock \emph{Advances in Neural Information Processing Systems},
  27:\penalty0 2672--2680, 2014.

\bibitem[Gretton et~al.(2012{\natexlab{a}})Gretton, Borgwardt, Rasch,
  Sch{\"o}lkopf, and Smola]{Gretton}
A.~Gretton, K.~M. Borgwardt, M.~J. Rasch, B.~Sch{\"o}lkopf, and A.~Smola.
\newblock A kernel two-sample test.
\newblock \emph{The Journal of Machine Learning Research}, 13\penalty0
  (1):\penalty0 723--773, 2012{\natexlab{a}}.

\bibitem[Gretton et~al.(2012{\natexlab{b}})Gretton, Sejdinovic, Strathmann,
  Balakrishnan, Pontil, Fukumizu, and Sriperumbudur]{gretton2012optimal}
A.~Gretton, D.~Sejdinovic, H.~Strathmann, S.~Balakrishnan, M.~Pontil,
  K.~Fukumizu, and B.~K. Sriperumbudur.
\newblock Optimal kernel choice for large-scale two-sample tests.
\newblock \emph{Advances in Neural Information Processing Systems}, 25,
  2012{\natexlab{b}}.

\bibitem[Holmes et~al.(2015)Holmes, Caron, Griffin, and Stephens]{Holmes}
C.~C. Holmes, F.~Caron, J.~E. Griffin, and D.~A. Stephens.
\newblock Two-sample {B}ayesian nonparametric hypothesis testing.
\newblock \emph{Bayesian Analysis}, 10:\penalty0 297--320, 2015.

\bibitem[Hopcroft and Karp(1973)]{hopcroft1973n}
J.~E. Hopcroft and R.~M. Karp.
\newblock An $n^{5/2}$ algorithm for maximum matchings in bipartite graphs.
\newblock \emph{SIAM Journal on Computing}, 2\penalty0 (4):\penalty0 225--231,
  1973.

\bibitem[Huang et~al.(2008)Huang, Mattar, Berg, and
  Learned-Miller]{huang2008labeled}
G.~B. Huang, M.~Mattar, T.~Berg, and E.~Learned-Miller.
\newblock Labeled faces in the wild: A database for studying face recognition
  in unconstrained environments.
\newblock In \emph{Workshop on Faces in 'Real-Life' Images: Detection,
  Alignment, and Recognition}, 2008.

\bibitem[Huber(1992)]{huber1992robust}
P.~J. Huber.
\newblock Robust estimation of a location parameter.
\newblock In \emph{Breakthroughs in statistics: Methodology and distribution},
  pages 492--518. Springer, 1992.

\bibitem[Ishwaran and Zarepour(2002)]{Ishwaran}
H.~Ishwaran and M.~Zarepour.
\newblock Exact and approximate sum representations for the {D}irichlet
  process.
\newblock \emph{Canadian Journal of Statistics}, 30\penalty0 (2):\penalty0
  269--283, 2002.

\bibitem[Jeffreys(1961)]{Jeffreys}
H.~Jeffreys.
\newblock \emph{Theory of probability}.
\newblock Clarendon Press, Oxford, third edition, 1961.

\bibitem[Jia et~al.(2022)Jia, Xi, Li, and Shao]{jia2022finding}
S.~Jia, Y.~Xi, D.~Li, and H.~Shao.
\newblock Finding complete minimum driver node set with guaranteed control
  capacity.
\newblock \emph{Neurocomputing}, 2022.

\bibitem[Jitkrittum et~al.(2016)Jitkrittum, Szab{\'o}, Chwialkowski, and
  Gretton]{jitkrittum2016interpretable}
W.~Jitkrittum, Z.~Szab{\'o}, K.~P. Chwialkowski, and A.~Gretton.
\newblock Interpretable distribution features with maximum testing power.
\newblock \emph{Advances in Neural Information Processing Systems}, 29, 2016.

\bibitem[Kass and Raftery(1995)]{Kass}
R.~E. Kass and A.~E. Raftery.
\newblock Bayes factors.
\newblock \emph{Journal of the American Statistical Association}, 90\penalty0
  (430):\penalty0 773--795, 1995.

\bibitem[Key et~al.(2021)Key, Fernandez, Gretton, and Briol]{key2021composite}
O.~Key, T.~Fernandez, A.~Gretton, and F.-X. Briol.
\newblock Composite goodness-of-fit tests with kernels.
\newblock \emph{arXiv preprint arXiv:2111.10275}, 2021.

\bibitem[LeCun(1998)]{lecun1998mnist}
Y.~LeCun.
\newblock The {MNIST} database of handwritten digits.
\newblock \emph{http://yann. lecun. com/exdb/mnist/}, 1998.

\bibitem[Li et~al.(2017)Li, Chang, Cheng, Yang, and P{\'o}czos]{li2017mmd}
C.-L. Li, W.-C. Chang, Y.~Cheng, Y.~Yang, and B.~P{\'o}czos.
\newblock {MMD-GAN}: Towards deeper understanding of moment matching network.
\newblock \emph{Advances in Neural Information Processing Systems}, 30, 2017.

\bibitem[Li et~al.(2015)Li, Swersky, and Zemel]{Li}
Y.~Li, K.~Swersky, and R.~Zemel.
\newblock Generative moment matching networks.
\newblock In \emph{International Conference on Machine Learning}, pages
  1718--1727. PMLR, 2015.

\bibitem[Lin et~al.(2018)Lin, Khetan, Fanti, and Oh]{Lin}
Z.~Lin, A.~Khetan, G.~Fanti, and S.~Oh.
\newblock Pacgan: The power of two samples in generative adversarial networks.
\newblock \emph{Advances in Neural Information Processing Systems}, 31, 2018.

\bibitem[Lov{\'a}sz and Plummer(1986)]{lovasz1986plummer}
L.~Lov{\'a}sz and M.~D. Plummer.
\newblock Matching theory.
\newblock \emph{Annals of Discrete Mathematics}, 29, 1986.

\bibitem[Lyddon et~al.(2018)Lyddon, Walker, and
  Holmes]{lyddon2018nonparametric}
S.~Lyddon, S.~Walker, and C.~C. Holmes.
\newblock Nonparametric learning from {B}ayesian models with randomized
  objective functions.
\newblock \emph{Advances in Neural Information Processing Systems}, 31, 2018.

\bibitem[Lyddon et~al.(2019)Lyddon, Holmes, and Walker]{lyddon2019general}
S.~P. Lyddon, C.~Holmes, and S.~Walker.
\newblock General {B}ayesian updating and the loss-likelihood bootstrap.
\newblock \emph{Biometrika}, 106\penalty0 (2):\penalty0 465--478, 2019.

\bibitem[Nickparvar(2021)]{msoudnickparvar2021}
M.~Nickparvar.
\newblock Brain tumor {MRI} dataset, 2021.
\newblock URL \url{https://www.kaggle.com/dsv/2645886}.

\bibitem[Niu et~al.(2023)Niu, Meier, and Briol]{niu2023discrepancy}
Z.~Niu, J.~Meier, and F.-X. Briol.
\newblock Discrepancy-based inference for intractable generative models using
  quasi-monte carlo.
\newblock \emph{Electronic Journal of Statistics}, 17\penalty0 (1):\penalty0
  1411--1456, 2023.

\bibitem[Nowozin et~al.(2016)Nowozin, Cseke, and Tomioka]{nowozin2016f}
S.~Nowozin, B.~Cseke, and R.~Tomioka.
\newblock {f-GAN}: Training generative neural samplers using variational
  divergence minimization.
\newblock \emph{Advances in Neural Information Processing Systems}, 29, 2016.

\bibitem[Oates et~al.(2022)]{oates2022minimum}
C.~Oates et~al.
\newblock Minimum kernel discrepancy estimators.
\newblock \emph{arXiv preprint arXiv:2210.16357}, 2022.

\bibitem[Pacchiardi and Dutta(2021)]{pacchiardi2021generalized}
L.~Pacchiardi and R.~Dutta.
\newblock Generalized bayesian likelihood-free inference using scoring rules
  estimators.
\newblock \emph{arXiv preprint arXiv:2104.03889}, 2021.

\bibitem[Park et~al.(2016)Park, Jitkrittum, and Sejdinovic]{park2016k2}
M.~Park, W.~Jitkrittum, and D.~Sejdinovic.
\newblock {K2-ABC}: Approximate {B}ayesian computation with kernel embeddings.
\newblock In \emph{Artificial Intelligence and Statistics}, pages 398--407.
  PMLR, 2016.

\bibitem[Robert et~al.(2011)Robert, Cornuet, Marin, and Pillai]{robert2011lack}
C.~P. Robert, J.-M. Cornuet, J.-M. Marin, and N.~S. Pillai.
\newblock Lack of confidence in approximate {B}ayesian computation model
  choice.
\newblock \emph{Proceedings of the National Academy of Sciences}, 108\penalty0
  (37):\penalty0 15112--15117, 2011.

\bibitem[Salimans et~al.(2016)Salimans, Goodfellow, Zaremba, Cheung, Radford,
  and Chen]{salimans2016improved}
T.~Salimans, I.~Goodfellow, W.~Zaremba, V.~Cheung, A.~Radford, and X.~Chen.
\newblock Improved techniques for training {GAN}s.
\newblock \emph{Advances in Neural Information Processing Systems}, 29, 2016.

\bibitem[Sch{\"o}lkopf et~al.(2002)Sch{\"o}lkopf, Smola, Bach,
  et~al.]{scholkopf2002learning}
B.~Sch{\"o}lkopf, A.~J. Smola, F.~Bach, et~al.
\newblock \emph{Learning with kernels: {S}upport vector machines,
  regularization, optimization, and beyond}.
\newblock MIT Press, 2002.

\bibitem[Schrab et~al.(2021)Schrab, Kim, Albert, Laurent, Guedj, and
  Gretton]{schrab2021mmd}
A.~Schrab, I.~Kim, M.~Albert, B.~Laurent, B.~Guedj, and A.~Gretton.
\newblock Mmd aggregated two-sample test.
\newblock \emph{arXiv preprint arXiv:2110.15073}, 2021.

\bibitem[Schrab et~al.(2022)Schrab, Kim, Guedj, and
  Gretton]{schrab2022efficient}
A.~Schrab, I.~Kim, B.~Guedj, and A.~Gretton.
\newblock Efficient aggregated kernel tests using incomplete $ u $-statistics.
\newblock \emph{Advances in Neural Information Processing Systems},
  35:\penalty0 18793--18807, 2022.

\bibitem[Sejdinovic et~al.(2013)Sejdinovic, Sriperumbudur, Gretton, and
  Fukumizu]{sejdinovic2013equivalence}
D.~Sejdinovic, B.~Sriperumbudur, A.~Gretton, and K.~Fukumizu.
\newblock Equivalence of distance-based and {RKHS}-based statistics in
  hypothesis testing.
\newblock \emph{Annals of Statistics}, pages 2263--2291, 2013.

\bibitem[Sethuraman(1994)]{sethuraman1994constructive}
J.~Sethuraman.
\newblock A constructive definition of {D}irichlet priors.
\newblock \emph{Statistica Sinica}, pages 639--650, 1994.

\bibitem[Sutherland et~al.(2016)Sutherland, Tung, Strathmann, De, Ramdas,
  Smola, and Gretton]{sutherland2016generative}
D.~J. Sutherland, H.-Y. Tung, H.~Strathmann, S.~De, A.~Ramdas, A.~Smola, and
  A.~Gretton.
\newblock Generative models and model criticism via optimized maximum mean
  discrepancy.
\newblock \emph{arXiv preprint arXiv:1611.04488}, 2016.

\bibitem[Terenin and Draper(2017)]{terenin2017noninformative}
A.~Terenin and D.~Draper.
\newblock A noninformative prior on a space of distribution functions.
\newblock \emph{Entropy}, 19\penalty0 (8):\penalty0 391, 2017.

\bibitem[Tomczak and Welling(2016)]{tomczak2016improving}
J.~M. Tomczak and M.~Welling.
\newblock Improving variational auto-encoders using householder flow.
\newblock \emph{arXiv preprint arXiv:1611.09630}, 2016.

\bibitem[Wolterink et~al.(2017)Wolterink, Dinkla, Savenije, Seevinck, van~den
  Berg, and I{\v{s}}gum]{wolterink2017deep}
J.~M. Wolterink, A.~M. Dinkla, M.~H. Savenije, P.~R. Seevinck, C.~A. van~den
  Berg, and I.~I{\v{s}}gum.
\newblock Deep {MR} to {CT} synthesis using unpaired data.
\newblock In \emph{International Workshop on Simulation and Synthesis in
  Medical Imaging}, pages 14--23. Springer, 2017.

\bibitem[Yi et~al.(2019)Yi, Walia, and Babyn]{yi2019generative}
X.~Yi, E.~Walia, and P.~Babyn.
\newblock Generative adversarial network in medical imaging: A review.
\newblock \emph{Medical Image Analysis}, 58:\penalty0 101552, 2019.

\bibitem[Zarepour and Al-Labadi(2012)]{zarepour2012rapid}
M.~Zarepour and L.~Al-Labadi.
\newblock On a rapid simulation of the {D}irichlet process.
\newblock \emph{Statistics \& Probability Letters}, 82\penalty0 (5):\penalty0
  916--924, 2012.

\bibitem[Zhang(2021)]{Kaifeng}
K.~Zhang.
\newblock On mode collapse in generative adversarial networks.
\newblock In \emph{Artificial Neural Networks and Machine Learning -- ICANN
  2021}, pages 563--574, Cham, 2021. Springer International Publishing.
\newblock ISBN 978-3-030-86340-1.

\bibitem[Zhao et~al.(2022)Zhao, Lei, Zhao, Yang, Ling, Liu, Zhou, and
  Wang]{zhao2022predicting}
F.~Zhao, C.~Lei, Q.~Zhao, H.~Yang, G.~Ling, J.~Liu, H.~Zhou, and H.~Wang.
\newblock Predicting the property contour-map and optimum composition of
  {Cu-Co-Si} alloys via machine learning.
\newblock \emph{Materials Today Communications}, 30:\penalty0 103138, 2022.

\end{thebibliography}
\clearpage
\appendix
\begin{center}
	\Large{\bf Supplementary Material}
\end{center}
\section{Technical Proofs}
\subsection{Theoretical Properties of the DP Approximation given by \cite{Ishwaran}}\label{app-prob}
\begin{proposition}\label{proposition1}
	For a non-negative real value $a$ and fixed probability distribution $H$, let $F^{pri}_{1}:=F_1\sim DP(a,H)$ and $(J_{1,N},\ldots,J_{N,N})\sim \mbox{Dirichlet}(\frac{a}{N},\ldots,\frac{a}{N})$ be the weights in  the approximation of $F^{pri}$, given by \cite{Ishwaran}. Then, as $a\rightarrow{\infty}$,\\
	\noindent $i.$ 
	$J_{\ell,N}\xrightarrow{a.s.}\frac{1}{N},$ for any $\ell\in\lbrace 1,\ldots,N\rbrace$,\\
	\noindent $ii.$ $J_{\ell,N}J_{t,N}\xrightarrow{a.s.}\frac{1}{N^2},$ for any $\ell,t\in\lbrace 1,\ldots,N\rbrace,$
	where $\ell\neq t $.\\
\end{proposition}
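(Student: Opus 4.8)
The plan is to exploit the gamma representation of the Dirichlet weights recalled in the excerpt, namely $J_{\ell,N}=H_{\ell,N}/\sum_{i=1}^{N}H_{i,N}$ with $H_{i,N}\overset{i.i.d.}{\sim}\mathrm{Gamma}(a/N,1)$, and reduce the whole statement to a strong law of large numbers in the shape parameter. Since the claim is an almost sure statement as the \emph{continuous} parameter $a\to\infty$, I first fix a coupling across values of $a$: let $\{G_i(t)\}_{t\ge 0}$, $i=1,\dots,N$, be $N$ independent gamma subordinators with stationary independent increments and $G_i(t)\sim\mathrm{Gamma}(t,1)$, and set $H_{i,N}=G_i(a/N)$. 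This realizes all the $H_{i,N}$ simultaneously on one probability space while preserving the marginal law required by \cite{Ishwaran}.

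The key lemma to establish is that $G_i(t)/t\xrightarrow{a.s.}1$ as $t\to\infty$ for each fixed $i$. I would prove this by applying the strong law to the integer increments $G_i(k)-G_i(k-1)\overset{i.i.d.}{\sim}\mathrm{Exp}(1)$, giving $G_i(k)/k\to 1$ a.s. along integers, and then upgrade to the continuous limit by sandwiching: monotonicity of $t\mapsto G_i(t)$ yields
\[
\frac{G_i(\lfloor t\rfloor)}{\lceil t\rceil}\le \frac{G_i(t)}{t}\le \frac{G_i(\lceil t\rceil)}{\lfloor t\rfloor},
\]
and since $\lfloor t\rfloor/\lceil t\rceil\to 1$ both bounds tend to $1$. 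Because $a/N\to\infty$ exactly when $a\to\infty$ (with $N$ finite and fixed), this yields $H_{i,N}/(a/N)\xrightarrow{a.s.}1$ for every $i\in\{1,\dots,N\}$, and the finite intersection of these $N$ almost sure events is again almost sure.

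With that in hand, part (i) follows by writing
\[
J_{\ell,N}=\frac{H_{\ell,N}/(a/N)}{\sum_{i=1}^{N}H_{i,N}/(a/N)},
\]
whose numerator converges a.s. to $1$ and whose denominator converges a.s. to $N$ (a finite sum of a.s. convergent terms), so $J_{\ell,N}\xrightarrow{a.s.}1/N$; positivity of the gamma variates guarantees the denominator is a.s. nonzero throughout. Part (ii) is then immediate: the product of two almost surely convergent sequences converges almost surely to the product of the limits, giving $J_{\ell,N}J_{t,N}\xrightarrow{a.s.}1/N^2$ for $\ell\ne t$.

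The main obstacle I anticipate is not the algebra but the careful treatment of almost sure convergence as the continuous parameter $a$ diverges: one must commit to a coupling of the $\mathrm{Gamma}(a/N,1)$ variables across all $a$ before an a.s. statement is even meaningful, and the subordinator construction together with the integer-sandwiching argument is what makes the strong law applicable. An alternative that sidesteps subordinators would first establish $L^2$ convergence from the $\mathrm{Beta}(a/N,a(N-1)/N)$ marginal of $J_{\ell,N}$, whose variance equals $\tfrac{(N-1)/N^2}{a+1}\to 0$, and then strengthen to almost sure convergence along a rapidly increasing subsequence of $a$ via Borel--Cantelli, interpolating by monotonicity; I would note this as a backup but prefer the cleaner subordinator route.
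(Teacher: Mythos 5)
Your proof is correct, but it takes a genuinely different route from the paper's. The paper works directly with the Dirichlet marginal: it computes $\mathrm{Var}(J_{\ell,N})=\tfrac{N-1}{N^2(a+1)}$, applies Chebyshev's inequality, restricts $a$ to the sequence $a=\kappa^2 c$ ($\kappa\in\mathbb{N}$, $c>0$ fixed) so the tail probabilities are summable, and invokes the first Borel--Cantelli lemma; part (ii) is then obtained set-theoretically, by noting that the event where the product fails to converge is contained in the union of the two null events where a factor fails to converge. You instead use the gamma representation $J_{\ell,N}=H_{\ell,N}/\sum_i H_{i,N}$, realize all values of $a$ simultaneously through independent gamma subordinators, prove $G_i(t)/t\to 1$ a.s. by the strong law applied to the $\mathrm{Exp}(1)$ increments plus a floor/ceiling sandwich, and then read off both (i) and (ii) from the algebra of almost sure limits. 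What your approach buys is a genuinely rigorous treatment of the \emph{continuous} limit $a\to\infty$: the paper's Borel--Cantelli argument, as written, only yields convergence along the chosen subsequence $a=\kappa^2c$ and leaves the coupling across values of $a$ implicit, whereas your subordinator construction makes the coupling explicit and the monotonicity of $t\mapsto G_i(t)$ bridges between integer and continuous parameter values. What the paper's approach buys is elementarity (no process-level construction, just a variance bound and Borel--Cantelli), and its variance computation is reused later in the paper's moment arguments (e.g., in the proof of Theorem 1), so it is not wasted effort there. One small caution on your closing remark: in the backup route you sketch, ``interpolating by monotonicity'' between a rapidly increasing subsequence of $a$ values would need care, since $J_{\ell,N}$ itself is \emph{not} monotone in $a$ under your coupling (only the numerator and denominator separately are); your primary subordinator argument does not suffer from this issue because the sandwich is applied to the monotone processes $G_i$ before forming the ratio.
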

\begin{proof}
	Recall 
	\begin{equation}\label{approx of DP}
		F^{pri}_{N}=\sum_{i=1}^{N}J_{i,N}\delta_{Y_{i}}.
	\end{equation}
	Since $E_{F^{pri}_1}(J_{\ell,N})=\frac{1}{N}$, for any $\ell\in\lbrace 1,\ldots,N\rbrace$ and  $\epsilon>0$, Chebyshev’s
	inequality implies
	\begin{align*}
		\operatorname {Pr}\left\lbrace \left| J_{\ell,N}-1/N\right|\geq\epsilon\right\rbrace \leq \frac{Var(J_{\ell,N})}{\epsilon^2},
	\end{align*}
	where, $Var_{F^{pri}_1}(J_{\ell,N})=\frac{N-1}{N^2(a+1)}$. Assuming $a=\kappa^2 c$ for $\kappa\in \mathbb{N}$ and a fixed positive number $c$, gives
	\begin{align*}
		\operatorname {Pr}\left\lbrace \left| J_{\ell,N}-1/N\right|\geq\epsilon\right\rbrace \leq \frac{1}{\kappa^2c\epsilon^2}.
	\end{align*}
	The convergence of series $\sum_{\kappa=0}^{\infty}\kappa^{-2}$ implies 
	$\sum_{\kappa=0}^{\infty} \operatorname {Pr}\left\lbrace \left| J_{\ell,N}-1/N\right|\geq\epsilon\right\rbrace<\infty$. By letting $ a\rightarrow\infty$, the first Borel Cantelli lemma concludes 
	$\left| J_{\ell,N}-1/N\right|\xrightarrow{a.s.}0$ and the result of (i) follows. To prove (ii), it is enough to show $\operatorname {Pr}\left\lbrace \lim_{a \to \infty} (J_{\ell,N}J_{t,N})\neq\frac{1}{N^2}\right\rbrace=0$. To prove this for the probability space $ {\displaystyle (\Omega ,{\mathcal {F}},\operatorname {Pr} )}$, let
	\begin{align*}
		A&=\left\lbrace \omega\in\Omega:\, \lim_{a \to \infty} \left(J_{\ell,N}(\omega)J_{t,N}(\omega)\right)\neq\frac{1}{N^2} \right\rbrace,\hspace{2mm}
		B=\left\lbrace \omega\in\Omega:\, \lim_{a \to \infty} \left(J_{\ell,N}(\omega)\right)\neq\frac{1}{N} \right\rbrace,\nonumber\\
		C&=\left\lbrace \omega\in\Omega:\, \lim_{a \to \infty} \left(J_{t,N}(\omega)\right)\neq\frac{1}{N} \right\rbrace,
	\end{align*}
	where, $\operatorname {Pr}(B)$ and $\operatorname {Pr}(C)$ are zero by (i). Since $A\subseteq B\cup C$, then,
	\begin{align*}
		1-\operatorname {Pr}\left\lbrace \omega\in\Omega:\, \lim_{a \to \infty} \left(J_{\ell,N}(\omega)J_{t,N}(\omega)\right)=\frac{1}{N^2} \right\rbrace
		=\operatorname {Pr}(A)\leq \operatorname {Pr}(B)+\operatorname {Pr}(C)=0,
	\end{align*}
	which concludes the result.
\end{proof}
\subsection{Proof of Theorem 1}
\begin{proof}
	For samples $\lbrace \mathbf{V}_{\ell} \rbrace_{\ell=1}^{N}$ and  $\lbrace \mathbf{Y}_{\ell} \rbrace_{\ell=1}^{m}$, respectively, from $H$ and $F_2$, the triangle inequality implies
	\begin{align*}
		\left| \mathrm{MMD}^{2}_{\mathrm{BNP}}(F_{1,N}^{pri},F_{2,m})-\mathrm{MMD}^{2}(H_N,F_{2,m})\right|&\leq
		K\Bigg\lbrace\sum_{\ell,t=1}^{N}\left| J_{\ell,N}J_{t,N}-\frac{1}{N^2} \right| \nonumber\\
		&~~~~~~~~~~+\frac{2}{m} \sum_{\ell=1}^{N}\sum_{t=1}^{m}\left| J_{\ell,N}-\frac{1}{N} \right| \Bigg\rbrace.
	\end{align*}    
	By Proposition~\ref{proposition1}, which provides some theoretical properties of the DP approximation given in \eqref{approx of DP},  the right-hand side of the above inequality converges almost surely to 0 as $a\rightarrow\infty$ for fixed $N$. This convergence immediately concludes the proof of (i). To prove (ii),
	since $(J_{1,N},\ldots,J_{N,N})\sim \mbox{Dirichlet}(\frac{a}{N},\ldots,\frac{a}{N})$, $E_{F^{pri}_{1}}(J_{\ell,N})=\frac{1}{N}$ and
	\begin{align*}
		E_{F^{pri}_{1}}(J_{\ell,N}J_{t,N})=
		\begin{cases}
			\dfrac{a}{(a+1)N^2} & \text{if $\ell\neq t$,}\\
			\dfrac{a+N}{(a+1)N^2} & \text{if $\ell= t$}.\\
		\end{cases}       
	\end{align*}
	Applying these properties in definition of $\mathrm{MMD}^{2}_{\mathrm{BNP}}(F_{1,N}^{pri},F_{2,m})$ results in
	\begin{align}\label{pri-a-mmd}
		E_{F^{pri}_{1}}(\mathrm{MMD}^{2}_{\mathrm{BNP}}(F_{1,N}^{pri},F_{2,m})|\mathbf{V}_{1:N})&=\sum_{\ell=1}^{N}\sum_{t\neq\ell}^{N} \dfrac{ak(\mathbf{V}_{\ell},\mathbf{V}_{t})}{(a+1)N^2}+\sum_{\ell=1}^{N}\sum_{t=\ell}^{N} \frac{(a+N)k(\mathbf{V}_{\ell},\mathbf{V}_{t})}{(a+1)N^2} \nonumber\\
		&-\dfrac{2}{Nm}\sum_{\ell=1}^{N}\sum_{t=1}^{m} k(\mathbf{V}_{\ell},\mathbf{Y}_{t})
		+\dfrac{1}{m^2}\sum_{\ell,t=1}^{m} k(\mathbf{Y}_{\ell},\mathbf{Y}_{t}).
	\end{align} 
	Now, it is sufficient to compute the following conditional expectation,
	\begin{align}\label{cond-exp-pri}
		E(\mathrm{MMD}^{2}_{\mathrm{BNP}}(F_{1,N}^{pri},F_{2,m}))&= E_{H,F_2}(E_{F^{pri}_{1}}(\mathrm{MMD}^{2}_{\mathrm{BNP}}(F_{1,N}^{pri},F_{2,m})|\mathbf{V}_{1:N})).
	\end{align}
	Since sets $\lbrace V_i\rbrace_{i=1}^{N}$ and $\lbrace Y_i\rbrace_{i=1}^{m}$ include i.i.d. random variables, separately, replacing Equation \eqref{pri-a-mmd} in expectation \eqref{cond-exp-pri} implies:
	\begin{align}\label{exp-pri-m}
		\eqref{cond-exp-pri}&= \dfrac{a(N-1)}{(a+1)N} E_{H}[k(\mathbf{V}_{1},\mathbf{V}_{2})]
		+\dfrac{a+N}{(a+1)N}E_{H}[k(\mathbf{V}_{1},\mathbf{V}_{1})]
		-2 E_{H,F_2}[k(\mathbf{V}_{1},\mathbf{Y}_{1})]\nonumber\\
		&~~~+\dfrac{m-1}{m} E_{F_2}[k(\mathbf{Y}_{1},\mathbf{Y}_{2})]+\dfrac{1}{m} E_{F_2}[k(\mathbf{Y}_{1},\mathbf{Y}_{1})].
	\end{align}
	The proof of (ii) is concluded by letting $a\rightarrow\infty$, $N\rightarrow\infty$, and $m\rightarrow\infty$ in the above equation. Lastly, since $\frac{1}{m}<1$, $\frac{m-1}{m}<1$, $\frac{a(N-1)}{(a+1)N}<1$, and $ \frac{a+N}{(a+1)N}<2$, then, for any $N,m\in \mathbb{N}$ and $a\in \mathbb{R}^+$, 
	\begin{align*}
		\eqref{exp-pri-m}<E_{H}[k(\mathbf{V}_{1},\mathbf{V}_{2})]
		-2 E_{H,F_2}[k(\mathbf{V}_{1},\mathbf{Y}_{1})]
		+ E_{F_2}[k(\mathbf{Y}_{1},\mathbf{Y}_{2})]+3K,
	\end{align*}
	which concludes the proof of (iii).
\end{proof}

\subsection{Proof of  Theorem 2}
\begin{proof}
	Applying triangular inequality implies
	\begin{align}\label{con-empri-pos}
		\left| \mathrm{MMD}^{2}_{\mathrm{BNP}}(F_{1,N}^{pos},F_{2,m})-\mathrm{MMD}^{2}(H_N,F_{2,m})\right|&\leq
		\sum_{\ell,t=1}^{N}\left| J^{\ast}_{\ell,N}J^{\ast}_{t,N} k(\mathbf{V}^{\ast}_\ell,\mathbf{V}^{\ast}_t)-\frac{1}{N^2}  k(\mathbf{V}_\ell,\mathbf{V}_t)\right|\nonumber\\
		&+\frac{2}{m} \sum_{\ell=1}^{N}\sum_{t=1}^{m}\left| J^{\ast}_{\ell,N}k(\mathbf{V}^{\ast}_\ell,\mathbf{Y}_t)-\frac{1}{N} k(\mathbf{V}_\ell,\mathbf{Y}_t)\right|,
	\end{align} 
	where, samples $\lbrace \mathbf{V}^{\ast}_{\ell} \rbrace_{\ell=1}^{N}$ and  $\lbrace \mathbf{Y}_{\ell} \rbrace_{\ell=1}^{m}$ are generated from $H^{\ast}$ and $F_2$, respectively. Similar to Proposition \ref{proposition1}, it can be shown that $J^{\ast}_{\ell,N}\rightarrow 1/N$ and $J^{\ast}_{\ell,N}J^{\ast}_{t,N}\rightarrow 1/N^2$, as $a\rightarrow\infty$, using conjugacy property of DP. On the other hand, since $H^{\ast}\rightarrow H$ as $a\rightarrow\infty$, the chance of sampling from $H$ and $F_{1,n}$ tends, respectively, to one and zero, which implies $V_i^{\ast}\rightarrow V_i$, where $V_i\sim H$, for $i=1,2$. Applying the continuous mapping theorem implies $k(\mathbf{V}^{\ast}_{1},\mathbf{V}^{\ast}_{2})\rightarrow k(\mathbf{V}_{l},\mathbf{V}_{t})$ and $k(\mathbf{V}^{\ast}_{l},\mathbf{Y}_{t})\rightarrow k(\mathbf{V}_{l},\mathbf{Y}_{t})$, which completes the proof of (i)(a).
	To prove (i)(b), it follows from the proof of Theorem 1: 
	\begin{align}
		\label{exp-pos-m}
		E(\mathrm{MMD}^{2}_{\mathrm{BNP}}(F_{1,N}^{pos},F_{2,m}))&= h_1(a,n,N) E_{H^{\ast}}[k(\mathbf{V}^{\ast}_{1},\mathbf{V}^{\ast}_{2})]
		+h_2(a,n,N) E_{H^{\ast}}[k(\mathbf{V}^{\ast}_{1},\mathbf{V}^{\ast}_{1})]\nonumber\\
		&~~~~-2 E_{H^{\ast},F_2}[k(\mathbf{V}^{\ast}_{1},\mathbf{Y}_{1})]+\frac{m-1}{m} E_{F_2}[k(\mathbf{Y}_{1},\mathbf{Y}_{2})]\nonumber\\
		&~~~~+\frac{1}{m} E_{F_2}[k(\mathbf{Y}_{1},\mathbf{Y}_{1})],
	\end{align}
	where $h_1(a,n,N)=\frac{(a+n)(N-1)}{(a+n+1)N}$ and $h_2(a,n,N)=\frac{a+n+N}{(a+n+1)N}$. Since $k(\cdot,\cdot)$ is bounded above by $K$, the dominated convergence theorem implies $E_{H^{\ast}}[k(\mathbf{V}^{\ast}_{1},\mathbf{V}^{\ast}_{2})]\rightarrow E_{H}[k(\mathbf{V}_{1},\mathbf{V}_{2})]$ and $E_{H^{\ast},F_2}[k(\mathbf{V}^{\ast}_{1},\mathbf{Y}_{1})]\rightarrow E_{H,F_2}[k(\mathbf{V}_{1},\mathbf{Y}_{1})]$. Since $h_1(a,n,N)\rightarrow 1$ and $h_2(a,n,N)\rightarrow 0$  as $a\rightarrow\infty$, $N\rightarrow\infty$; and, $m/(m-1)\rightarrow 1$ and $1/m\rightarrow 0$ , as $ m\rightarrow\infty$, the results follow.
	
	To prove (ii)(a) and (ii)(b), $F_{1,n}\rightarrow F_1$, and then $H^{\ast}\rightarrow F_1$ as $n\rightarrow\infty$ by the Glivenko-Cantelli theorem. It indicates that the probability of sampling from $H$ and $F_{n1}$ tends, respectively, to zero and one. Therefore, $V_i^{\ast}\rightarrow X_i$ as $n\rightarrow\infty$, where $X_i\sim F_1$, for $i=1,2$. The proof of (ii)(a) is completed with the same strategy as the proof of (i)(a) by letting $n\rightarrow\infty$ in \eqref{con-empri-pos}. The proof of (ii)(b) is also concluded with a similar argument that in (i)(b), when $n\rightarrow\infty$ in \eqref{exp-pos-m}.
\end{proof}
\subsection{Proof of Corollary 3}
\begin{proof}
	The proofs are immediately followed by Theorem 1 and Theorem 2.    
\end{proof}
\subsection{Proof of Lemma 4}
\begin{proof}
	The proof of Lemma 4(i) relies on the proof given in \citet[Theorem 9]{dellaporta2022robust} which is expanded for infinite stick-breaking representation, while we consider the finite DP approximation given in \eqref{approx of DP}. By employing a similar technique as in the previously mentioned theorem, we have 
	\begin{align*}
		E\left(\mathrm{MMD}(F,F_{G_{\boldsymbol{\omega}^{\ast}}})\right)&=E_F\left(E_{F^{pos}}\mathrm{MMD}(F,F_{G_{\boldsymbol{\omega}^{\ast}}})|\textbf{X}_{1:n}\right)\\
		&\leq
		\min\limits_{\boldsymbol{\omega}\in\mathcal{W}}\mathrm{MMD}(F,F_{G_{\boldsymbol{\omega}}})
		+2E_{F}\left(\mathrm{MMD}(F_n,F)\right)
		+2E_{F^{pos}}\left(\mathrm{MMD}(F^{pos}_{N},H^{\ast})\right)\\
		&~~~+2E_{F}\left(E_{H}(\mathrm{MMD}(F_n,H^{\ast})|\textbf{X}_{1:n})\right).
	\end{align*}
	Building on the results of \citet[Lemma 7]{dellaporta2022robust}, we can establish that 
	\begin{align*}
		E_{F^{pos}}\left(\mathrm{MMD}^{2}(F^{pos}_{N},H^{\ast})\right)\leq \sum_{\ell=1}^{N}E_{F^{pos}}[J_{\ell,N}^{\ast^2}]E_{H^{\ast}}[k(\mathbf{V}^{\ast}_{\ell},\mathbf{V}^{\ast}_{\ell})]\leq \dfrac{(a+n+N)K}{(a+n+1)N},
	\end{align*}
	where the right-hand side of the above inequality follows from the fact that $k(\cdot,\cdot)\leq K$ and $E_{F^{pos}}[J_{\ell,N}^{\ast^2}]=\frac{a+n+N}{(a+n+1)N^2}$. Now, the Jensen's inequality implies 
	\begin{align*}
		E_{F^{pos}}\left(\mathrm{MMD}(F^{pos}_{N},H^{\ast})\right)\leq \sqrt{\dfrac{(a+n+N)K}{(a+n+1)N}}.
	\end{align*}
	On the other hand, \citet[Lemma 7.1]{cherief2022finite} and \citet[Lemma 8]{dellaporta2022robust}, respectively, imply that
	\begin{align*}
		E_{F}\left(\mathrm{MMD}(F_n,F)\right)\leq\dfrac{K}{\sqrt{n}},E_{F}\left(E_{H}(\mathrm{MMD}(F_n,H^{\ast})|\textbf{X}_{1:n})\right)\leq\dfrac{2aK}{a+n},
	\end{align*}
	which concludes the
	proof of (i). To establish (ii), we adopt the approach used in the proof of \citet[Corollary 5]{dellaporta2022robust}. Initially, we employ \citet[Lemma 3.3]{cherief2022finite} to bound $\mathrm{MMD}(F_0,F_{G_{\boldsymbol{\omega}^{\ast}}})$ by $2\epsilon+\mathrm{MMD}(F,F_{G_{\boldsymbol{\omega}^{\ast}}})$, resulting in:
	\begin{align*}
		E\left(\mathrm{MMD}(F_0,F_{G_{\boldsymbol{\omega}^{\ast}}})\right)\leq2\epsilon+E\left(\mathrm{MMD}(F,F_{G_{\boldsymbol{\omega}^{\ast}}})\right).
	\end{align*}
	Applying the result in (i) to the right-hand side of the above inequality implies:
	\begin{align*}
		E\left(\mathrm{MMD}(F_0,F_{G_{\boldsymbol{\omega}^{\ast}}})\right)\leq 2\epsilon+\min\limits_{\boldsymbol{\omega}\in\mathcal{W}}\mathrm{MMD}(F,F_{G_{\boldsymbol{\omega}}})+\dfrac{2K}{\sqrt{n}}+\dfrac{4aK}{a+n}+2\sqrt{\dfrac{(a+n+N)K}{(a+n+1)N}}.
	\end{align*}
	Finally, we employ \citet[Lemma 3.3]{cherief2022finite} once again, but this time to bound $\mathrm{MMD}(F,F_{G_{\boldsymbol{\omega}}})$ by $2\epsilon+\mathrm{MMD}(F_0,F_{G_{\boldsymbol{\omega}}})$ for any $\boldsymbol{\omega}\in\mathcal{W}$, thereby completing the proof of (ii).
\end{proof}
\subsection{Proof of Lemma 5}
\begin{proof}
	Let $\mathcal{L}_{\mathrm{BNP}}(\boldsymbol{\omega})=\mathrm{MMD}(F^{pos}_{N},F_{G_{\boldsymbol{\omega}}})$, $\mathcal{L}_{n,m}(\boldsymbol{\omega})=\mathrm{MMD}(F_{n},F_{G_{\boldsymbol{\omega}},m})$, and $\mathcal{L}(\boldsymbol{\omega})=\mathrm{MMD}(F,F_{G_{\boldsymbol{\omega}}})$. Then, for $\boldsymbol{\omega}^{\ast}\in \mathcal{W}$, \citet[Theorem 7]{Gretton} implies 
	\begin{align}\label{prob1-lem}
		\operatorname{Pr}\left(|\mathcal{L}_{n,m}(\boldsymbol{\omega}^{\ast})-\mathcal{L}(\boldsymbol{\omega}^{\ast})|>h(N,m,K,\epsilon)\right)<2\exp{\frac{-\epsilon^2nm}{2K(n+m)}}.
	\end{align}
	Hence, with a probability at least $1-2\exp{\frac{-\epsilon^2nm}{2K(n+m)}}$, 
	\begin{align}\label{prob4-lem}
		|\mathcal{L}_{n,m}(\boldsymbol{\omega}^{\ast})-\mathcal{L}(\boldsymbol{\omega}^{\ast})|\leq h(n,m,K,\epsilon).
	\end{align}
	On the other hand, the triangle inequality implies
	\begin{align}\label{prob5-lem}
		|\mathcal{L}_{\mathrm{BNP}}(\boldsymbol{\omega}^{\ast})-\mathcal{L}(\boldsymbol{\omega}^{\prime})|\leq 
		|\mathcal{L}_{n,m}(\boldsymbol{\omega}^{\ast})-\mathcal{L}(\boldsymbol{\omega}^{\ast})|+
		|\mathcal{L}_{\mathrm{BNP}}(\boldsymbol{\omega}^{\ast})-\mathcal{L}_{n,m}(\boldsymbol{\omega}^{\ast})|+
		|\mathcal{L}(\boldsymbol{\omega}^{\ast})-\mathcal{L}(\boldsymbol{\omega}^{\prime})|.
	\end{align}
	Finally, the proof of (i) is concluded by considering inequality \eqref{prob4-lem} in \eqref{prob5-lem}. To prove (ii), Markov's inequality implies
	\begin{align*}
		\mathrm{Pr} \left( \mathrm{MMD}(F,F_{G_{\boldsymbol{\omega}^{\ast}}})\geq\epsilon\right)\leq 
		\dfrac{E\left( \mathrm{MMD}(F,F_{G_{\boldsymbol{\omega}^{\ast}}})\right)}{\epsilon}.
	\end{align*}
	The result follows by substituting the bounds from Lemma 4(i) into the right-hand side of the above inequality. 
\end{proof}
\section{Computational Algorithms}
\subsection{Implementing the Semi-BNP GOF Kernel-based Test}
Recall
\begin{align}\label{random-stopping}
	N=\inf\left\lbrace j:\, \frac{H_{j,j}}{\sum_{i=1}^{j}H_{i,j}}<\epsilon\right\rbrace.
\end{align}
{\setstretch{.5}
	\begin{breakablealgorithm}\label{alg-test}
		\caption{Pseudocode of semi-BNP two-sample MMD kernel test}
		\begin{algorithmic}[1]
			\State Initialize $a$, $\ell$, $M$, $i_0$, and $\epsilon$  in Equation \eqref{random-stopping} to determine $N$.
			\State $H\gets F_2$
			\Statex \underline{STEP 1: Computing the BNP MMD}
			\For{$r\gets 0$ to $\ell$}{}{} 
			\State Generate an approximate sample of $F_1\sim DP(a,H)$  by using  $\sum_{i=1}^{N}J_{i,N}\delta_{\mathbf{V}_{i}}$, where $\lbrace J_{i,N}\rbrace_{i=1}^{N}\sim \mbox{Dirichlet}(\frac{a}{N},\cdots,\frac{a}{N})$, and  $\lbrace\mathbf{V}_{i}\rbrace_{i=1}^{N}\sim H$.
			\State Generate an approximate sample of $F_1|\mathbf{x}_{1:n}\sim DP(a+n,H^{\ast})$  by using  $\sum_{i=1}^{N}J^{\ast}_{i,N}\delta_{\mathbf{V}^{\ast}_{i}}$, where $\lbrace J^{\ast}_{i,N}\rbrace_{i=1}^{N}\sim \mbox{Dirichlet}(\frac{a+n}{N},\cdots,\frac{a+n}{N})$,  and $\lbrace\mathbf{V}^{\ast}_{i}\rbrace_{i=1}^{N}\sim H^{\ast}$.
			\State Use 
			samples generated in steps 4 and 5 to compute
			\Statex $\mathrm{MMD}^2_{\mathrm{BNP}}(F^{pri}_{1,N},F_{2,m})$ and $\mathrm{MMD}^2_{\mathrm{BNP}}(F^{pos}_{1,N},F_{2,m})$, respectively.
			\EndFor
			\State \textbf{return} $\lbrace \mathrm{MMD}^2_{\mathrm{BNP}_r}(F^{pri}_{1,N},F_{2,m})\rbrace_{r=1}^{\ell}$ and $\lbrace \mathrm{MMD}^2_{\mathrm{BNP}_r}(F^{pos}_{1,N},F_{2,m})\rbrace_{r=1}^{\ell}$
			\Statex \underline{STEP 2: Estimating RB and Str}
			\State {\footnotesize$\widehat{\Pi}_{\mathrm{MMD}^2}(\cdot|\mathbf{x}_{1:n})\gets ECDF(\lbrace \mathrm{MMD}^2_{\mathrm{BNP}_r}(F^{pos}_{1,N},F_{2,m})\rbrace_{r=1}^{\ell})$} \Comment The ECDF of  posterior-based MMD
			\State {\footnotesize$\widehat{\Pi}_{\mathrm{MMD}^2}(\cdot)\gets ECDF(\lbrace \mathrm{MMD}^2_{\mathrm{BNP}_r}(F^{pri}_{1,N},F_{2,m})\rbrace_{r=1}^{\ell})$} \Comment The ECDF of prior-based MMD
			\State {\footnotesize$\widehat{d}_{i_0/M}\gets quantile(\lbrace \mathrm{MMD}^2_{\mathrm{BNP}_r}(F^{pri}_{1,N},F_{2,m})\rbrace_{r=1}^{\ell},i_0/M)$} \Comment The estimation of the $i_0/M$-th quantile of $\mathrm{MMD}^2_{\mathrm{BNP}}(F^{pri}_{1,N},F_{2,m})$
			\State {\footnotesize$ \widehat{RB}_{\mathrm{MMD}^2}(0|\mathbf{x}_{1:n})\gets\frac{\widehat{\Pi}_{\mathrm{MMD}^2}(\hat{d}_{i_0/M}|\mathbf{x}_{1:n})}{\widehat{\Pi}_{\mathrm{MMD}^2}(\hat{d}_{i_0/M})}$}
			\State {\footnotesize$\widehat{Str}\gets 0$}
			\For{$i\gets 0$ to $M-1$}{}{} 
			\State {\footnotesize$\widehat{d}_{i/M}\gets quantile(\lbrace \mathrm{MMD}^2_{\mathrm{BNP}_r}(F^{pri}_{1,N},F_{2,m})\rbrace_{r=1}^{\ell},i/M)$}  
			\State {\footnotesize$\widehat{d}_{(i+1)/M}\gets quantile(\lbrace \mathrm{MMD}^2_{\mathrm{BNP}_r}(F^{pri}_{1,N},F_{2,m})\rbrace_{r=1}^{\ell},(i+1)/M)$}
			\State {\footnotesize$ \widehat{RB}_{\mathrm{MMD}^2}(\widehat{d}_{i/M}|\mathbf{x}_{1:n})\gets\frac{\widehat{\Pi}_{\mathrm{MMD}^2}(\widehat{d}_{(i+1)/M}|\mathbf{x}_{1:n})-\widehat{\Pi}_{\mathrm{MMD}^2}(\widehat{d}_{i/M}|\mathbf{x}_{1:n})}{\widehat{\Pi}_{\mathrm{MMD}^2}(\widehat{d}_{(i+1)/M})-\widehat{\Pi}_{\mathrm{MMD}^2}(\widehat{d}_{i/M})}$}
			\If{$\widehat{RB}_{\mathrm{MMD}^2}(\widehat{d}_{i/M}|\mathbf{x}_{1:n})\leq \widehat{RB}_{\mathrm{MMD}^2}(0|\mathbf{x}_{1:n})$}
			\State {\footnotesize$\widehat{Str}(0|\mathbf{x}_{1:n})\gets\widehat{Str}+[\widehat{\Pi}_{\mathrm{MMD}^2}(\widehat{d}_{(i+1)/M}|\mathbf{x}_{1:n})-\widehat{\Pi}_{\mathrm{MMD}^2}(\widehat{d}_{i/M}|\mathbf{x}_{1:n})]$}
			\EndIf
			\EndFor
			\State \textbf{return} {\footnotesize$\widehat{RB}_{\mathrm{MMD}^2}, \widehat{Str}$}
		\end{algorithmic}
	\end{breakablealgorithm}
}
\subsection{Training the Semi-BNP GAN}
{\setstretch{.5}
	\begin{breakablealgorithm}\label{alg-train}
		\caption{Pseudocode of training a GAN using the semi-BNP approach}
		\begin{algorithmic}[1]
			\State Set $a=0$ to employ a non-informative prior leading DP posterior $DP(n,F_n)$.
			\State Initialize $\epsilon$ in Equation \eqref{random-stopping} to determine $N$ using conjugacy  property of DP.
			\State $r_{mn}\gets\text{Number of training iteration}$, $n_{mb}\gets\text{Mini-batch size}$\\
			$\boldsymbol{\omega}_{0}\gets$ An initial parameter for generator $G_{\boldsymbol{\omega}}$, $\lbrace \mathbf{x}_{\ell}\rbrace_{\ell=1}^{n}\gets$ real dataset
			\For{$i$ $\gets 0$ to $r_{mb}$}{}{} 
			\State Generate a random sample $\lbrace \mathbf{x}^{mb}_{\ell}\rbrace_{\ell=1}^{n_{mb}}$ from real dataset $\lbrace \mathbf{x}_{\ell}\rbrace_{\ell=1}^{n}$
			\State Generate a sample of noise vector $\lbrace \mathbf{u}_{\ell}\rbrace_{\ell=1}^{n_{mb}}$ from uniform distribution $U(-1,1)$
			\State Generate a sample from $F_{G_{\boldsymbol{\omega}_{i}}}$, distribution of $G_{\boldsymbol{\omega}_{i}}$, as $\lbrace \mathbf{y}_{\ell}=G_{\boldsymbol{\omega}_{i}}( \mathbf{u}_{\ell})\rbrace_{\ell=1}^{n_{mb}}$
			\State Generate a sample of size $N$  from $F^{pos}=F|\lbrace \mathbf{x}^{mb}_{\ell}\rbrace_{\ell=1}^{n_{mb}}$  using  $\sum_{i=1}^{N}J^{\ast}_{i,N}\delta_{\mathbf{v}^{\ast}_{i}}$  by replacing $F_1$ by $F$, and $\lbrace \mathbf{x}^{mb}_{\ell}\rbrace_{\ell=1}^{n_{mb}}$ by $\mathbf{x}$ in step (4) of Algorithm \ref{alg-test}.
			\State Use generated samples in steps 9 and 10 to compute $\mathrm{MMD}^{2}_{\mathrm{BNP}}(F^{pos}_{N},F_{G_{\boldsymbol{\omega}_{i}},N})$.
			\State 
			Compute the gradient: {\scriptsize \begin{align*}
					\frac{\partial \mathrm{MMD}_{\mathrm{BNP}}(F^{pos}_{N},F_{G_{\boldsymbol{\omega}_{i}},m})}{\partial \boldsymbol{\omega}_i}=\frac{1}{2\sqrt{\mathrm{MMD}_{\mathrm{BNP}}^{2}(F^{pos}_{N},F_{G_{\boldsymbol{\omega}},m})}}\frac{\partial \mathrm{MMD}_{\mathrm{BNP}}^{2}(F^{pos}_{N},F_{G_{\boldsymbol{\omega}},m})}{\partial\boldsymbol{\omega}}.
			\end{align*}}
			\State Use backpropagation for calculating partial derivatives $\frac{\partial \mathbf{G_{\boldsymbol{\omega}_{i}}( \mathbf{u}_{\ell})}}{\partial \boldsymbol{\omega}_i} $ in the previous step to update parameter $\boldsymbol{\omega}_{i}$.
			\EndFor
			\State \textbf{return}  $\boldsymbol{\omega}^{\ast}$ \Comment An optimized parameter for $G_{\boldsymbol{\omega}}$ that minimizes the cost function.
		\end{algorithmic}
	\end{breakablealgorithm}
}
\subsection{Hypothesis Testing Evaluation}\label{app-alg}
{\setstretch{.5}
	\begin{breakablealgorithm}\label{alg-roc}
		\caption{Pseudocode of plotting ROC and computing AUC in semi-BNP test}
		\begin{algorithmic}[1]
			\State Initialize $a$, $N$, $\ell$, and $M$.
			\State $r\gets 100$
			\State $RB^{\dagger}|\mathcal{H}_0\gets$ Compute $RB$ for $r$ sample of sizes $n$ generated under the null hypothesis.
			\State $RB|\mathcal{H}_1\gets$ Compute $RB$ for $r$ sample of sizes $n$ generated under the alternative hypothesis.
			\State $T\gets$ A sequence of numbers between 0 to $20^{\ddagger}$ with length $L$.
			\Comment The discrimination threshold for the semi-BNP test. 
			\State $TP\gets$ A vector whose each component represents the number of components of the vector $RB|\mathcal{H}_1$ which is less than each component of $T$.
			\State $FN\gets$ A vector whose each component represents the number of components of the vector $RB|\mathcal{H}_1$ which is greater than each component of $T$.
			\State $FP\gets$ A vector whose each component represents the number of components of the vector $RB|\mathcal{H}_0$ which is less than each component of $T$.
			\State $TN\gets$ A vector whose each component represents the number of components of the vector $RB|\mathcal{H}_0$ which is greater than each component of $T$.
			\State Compute the confusion matrix as:
			\begin{small}
				\begin{equation*}
					\begin{pmatrix}
						TNR:=\frac{TN}{TN+FP}& FNR:= \frac{FN}{FN+TP} \\
						\text{(1-Type I error)} & \text{(Type II error)} \\
						&  \\
						&  \\
						FPR:= \frac{FP}{FP+TN} & TPR:=\frac{TP}{TP+FN}\\
						\text{(Type I error)} & \text{(1-Type II error)}
					\end{pmatrix}.
				\end{equation*}
			\end{small} 
			\State ROC~$\gets$ Drawing a linear plot of $TPR$ against $FPR$.
			\State AUC~$\gets$ Computing the area under the ROC.
			\State \textbf{return}  ROC and AUC. 
		\end{algorithmic}
		\centering
		\begin{tablenotes}
			\small
			\item \fontsize{8}{8}\selectfont{$^{\dagger}$ It should be changed to the $p$-value in the FNP test.}
			\item $^{\ddagger}$ It should be changed to 1 in the FNP test.
		\end{tablenotes}
	\end{breakablealgorithm}
}
\section{Relative Belief Ratio: A Bayesian Measure of Evidence}\label{RB-sec}
The RB ratio \citep{Evans15} is a form of Bayesian evidence in hypothesis testing problems 
and has shown excellent performance in many statistical hypothesis testing procedures \citep{Al-Labadi,al2021bayesian,al2022test}. The RB ratio is defined by the ratio of the posterior density 
to the prior density 
at a particular parameter of interest in the population distribution whose correctness is under investigation. Precisely,
for a statistical model $(\mathfrak{X},\mathcal{F})$ with $\mathcal{F}=\lbrace f_{\theta}:\, \theta\in\Theta \rbrace$, let $\pi$ be a prior on the parameter space $\Theta$ and $\pi(\theta\,|\,x)$ be the posterior distribution of $\theta$ after observing the data $x$. 
Consider a parameter of interest, $\psi=\Psi(\theta)$ such that $\Psi$ satisfies regularity conditions so that the prior density $\pi_{\Psi}$ and the posterior density $\pi_{\Psi}(\cdot\,|\,x)$ of $\psi$ exist with respect to some support measure on the range space for $\Psi$.
When $\pi_{\Psi}$ and  $\pi_{\Psi}(\cdot\,|\,x)$ are continuous at $\psi$, the RB ratio for a value $\psi$ is given by
\begin{equation*}
	RB_{\Psi}(\psi\,|\,x)=\pi_{\Psi}(\psi\,|\,x)/\pi_{\Psi}(\psi). \label{relbel}%
\end{equation*} 
Otherwise for a sequence $N_{\delta}(\psi\,)$, the neighborhoods of $\psi$ that converge
nicely to $\psi$ as $\delta\rightarrow0$, the RB ratio is defined by $RB_{\Psi}(\psi\,|\,x)=\lim_{\delta\rightarrow0}%
\Pi_{\Psi}(N_{\delta}(\psi\,)|\,x)/\Pi_{\Psi}(N_{\delta}(\psi\,)),$ where $\Pi_{\Psi}$ and $\Pi_{\Psi}(\cdot|\,x)$ are the marginal prior and the marginal posterior probability measures, respectively.

Note that $RB_{\Psi}(\psi\,|\,x)$ measures the change in the belief of $\psi$ being the true value \textit{a priori} to \textit{a posteriori}. Therefore, it is a measure of evidence. 
If $RB_{\Psi}(\psi\,|\,x)$ $>1$, then the probability of $\psi$ being the true value from a priori to a posteriori is increased, consequently there is evidence based on the data that $\psi$ is the true value. If $RB_{\Psi}(\psi\,|\,x)<1$, then the probability of  $\psi$ being the true value from a priori to a posteriori is decreased. Accordingly, there is evidence against based on the data that $\psi$ being the true value. For the case $RB_{\Psi}(\psi\,|\,x)=1$ there is no
evidence in either direction. For the null hypothesis $\mathcal{H}_{0}:\Psi(\theta)=\psi_{0}$, it is obvious $RB_{\Psi}(\psi_{0}\,|\,x)$ measures the evidence in favor of or against $\mathcal{H}_{0}$. In this scenario where evidence for the null hypothesis is plausible, the frequentist notion of controlling the probability of falsely rejecting $\mathcal{H}_0$ (type I error) does not apply.

The possibility of calibrating RB ratios is a desirable feature that makes it attractive in hypothesis testing problems. After computing the RB ratio, it is very critical to know whether the obtained value represents strong or weak evidence for
or against $\mathcal{H}_{0}$. A typical
calibration of $RB_{\Psi}(\psi_{0}\,|\,x)$ is given by the \textit{strength of  evidence}
\begin{equation}
	Str_{\Psi}(\psi_0\,|\,x)=\Pi_{\Psi}\left[RB_{\Psi}(\psi\,|\,x)\leq RB_{\Psi}(\psi_{0}\,|\,x)\,|\,x\right].
	\label{strength}%
\end{equation}
The value in \eqref{strength} indicates that the posterior probability that the true value of $\psi$ has a RB ratio no greater than that of the hypothesized value $\psi_{0}.$ When $RB_{\Psi}(\psi_{0}\,|\,x)<1$, there is evidence
against $\psi_{0},$ then a small value of (\ref{strength}) indicates
strong evidence against $\psi_{0}$ because the posterior probability of the true value having RB ratio bigger is large. On the other hand, a large value for \eqref{strength}    indicates weak evidence against $\psi_{0}$.
Similarly, when $RB_{\Psi}(\psi_{0}\,|\,x)>1$, there is  evidence in favor
of $\psi_{0},$ then a small value of (\ref{strength}) indicates  weak
evidence in favor of $\psi_{0}$, while a large value of \eqref{strength} indicates
strong evidence in favor of $\psi_{0}$.

The RB can be considered as a strong alternative to the Bayes factor (BF) criteria. The BF is defined as the ratio of the marginal likelihood of data under the null hypothesis to the alternative hypothesis in Bayesian hypothesis testing problems. However, computing the BF often involves intractable calculations of marginal likelihoods, which typically require computationally burdensome methods such as MCMC.
The tests proposed by \citet{Holmes} and \citet{Borgwardt} are two examples of BNP tests that utilize marginal likelihood computation, and their practical usage in high-dimensional statistics is low due to this computational issue.

On the other hand, the construction of tests using the BF relies on assigning a prior $\pi_0$ to the null hypothesis $\mathcal{H}_0$, a prior $\pi_1$ to the alternative hypothesis $\mathcal{H}_1$, and a discrete probability mass $p_0$ for $\mathcal{H}_0$. However, practitioners often face challenges in eliciting these prior components within the overall prior $\pi = p_{0}\pi_{0} + (1-p_{0})\pi_1$. 
Another concern of using BFs is their calibration to indicate whether weak or strong evidence is attained. For example, \citet{Jeffreys} and \citet{Kass} proposed similar rules to calibrate BFs but \citet{Gonzalo} pointed out that such rules are inappropriate to calibrate BFs as they ignore the randomness of the data and, again, lead to improper inference\footnote{A comprehensive study that explains why the RB ratio is a more appropriate measure of evidence than the BF can also be found in \cite{al2023measure}.}.
\section{Radial Basis Function Kernels Family}\label{RBF-sec}
The construction of MMD-based procedures is proposed based on considering a kernel function with feature space corresponding to a universal RKHS. The radial basis function (RBF) kernel is the most well-known kernel family satisfying the above situation. For two vectors $\mathbf{X},\mathbf{Y}\in \mathbb{R}^d$, the RBF kernel is represented by
\begin{align*}
	k(\mathbf{X},\mathbf{Y})=h(||\mathbf{X}-\mathbf{Y}||/\sigma),
\end{align*}
where, $h$ is a function from the positive real numbers $\mathbb{R}^+$ to $\mathbb{R}^+$, $||\cdot||$ represents the $L^2$-norm, and $\sigma$ is the bandwidth parameter that indicates the kernel size. There are many functions assigned to $h$, for example, the Gaussian, exponential, rational quadratic kernels, and Matern, represented by
\begin{align*}
	h_1(x)=\exp{(-\frac{x^2}{2})},\,h_2(x)=\exp{(-x)},\,
	h_3(x)=\left( 1+\frac{x^2}{2\alpha}\right)^{-\alpha},\,
	h_4(x)=(1+\sqrt{2\nu}x)e^{-\sqrt{2\nu}x},
\end{align*}
respectively; where, $\alpha$ in $h_3$ is a positive-valued scale-mixture parameter, and the $\nu$ in $h_4$ is a parameter that controls the smoothness of the kernel results \citep{zhao2022predicting,genton2001classes}.

One of the simplest kernel functions above is the Gaussian kernel, which is mostly used in machine learning problems and only depends on bandwidth parameter $\sigma$. The Gaussian kernel tends to 0 and 1 when $\sigma\rightarrow 0$ and $\sigma\rightarrow\infty$, respectively. Both situations lead to $\mathrm{MMD}^2$ being zero. Hence, the choice of the parameter $\sigma$ has a crucial effect on the performance of this kernel.  Numerous methods are proposed to choose the value of $\sigma$, however, there is no definitive optimization method for this problem. The median heuristic is one of the first methods used in choosing $\sigma$ empirically and will be denoted in our experimental results by $\sigma_{MH}$. More precisely, for two samples $\lbrace \mathbf{X}_i\rbrace_{i=1}^{n}$ and $\lbrace  \mathbf{Y}_i\rbrace_{i=1}^{m}$, the $\sigma_{MH}$ is considered as the median of $\lbrace||\mathbf{X}_i-\mathbf{Y}_j||^2:\, 1\leq i\leq n, 1\leq j\leq m\rbrace$, which is mostly used in kernel-based tests \citep{scholkopf2002learning}. Selecting $\sigma$ based on maximizing the power of two-sample problems is another strategy considered by  \citet{jitkrittum2016interpretable}. The selection of the MMD bandwidth on held-out data to maximize power was first proposed by \cite{gretton2012optimal} for linear-time estimates and by \cite{sutherland2016generative} for quadratic-time estimates. Recently, bandwidth selection without data splitting has been proposed for quadratic \citep{schrab2021mmd} and linear \citep{schrab2022efficient} MMD estimates.
Regarding the choice of $\sigma$ in kernel-based GANs, a common idea is assigning several fixed values to $\sigma$ and then considering the mixture of their corresponding Gaussian kernel. This strategy has received much attention and shown an acceptable performance in training GANs\footnote{For further details, see \citet{Li} and \cite{li2017mmd}.}. 
\section{Training Evaluation}
\subsection{Traditional Approaches}\label{sec-mcs}
Evaluating the quality of samples generated by GANs is considered to assess the mode collapse problem \citep{Kaifeng}. The inception score, proposed by \citet{salimans2016improved}, is one common tool used to evaluate GANs. Let $\mathbf{Y}$ represent a sample generated by the generator $G_{\boldsymbol{\omega}}$ and $z$ be the label given to $\mathbf{Y}$ by the discriminator. For instance, if $\mathbf{Y}$ can not be distinguished from the real dataset, $z=1$; otherwise, $z=0$. Then, the inception score is given by
\begin{align*}
	IS&=\exp\big\lbrace E_{\mathbf{Y}}\big[D_{KL}(p(z|\mathbf{Y}),E_{\mathbf{Y}}[p(z|\mathbf{Y})]) \big]\big\rbrace\nonumber\\
	&=\exp\big\lbrace H(E_{\mathbf{Y}}[p(z|\mathbf{Y})])-E_{\mathbf{Y}}(H(p(z|\mathbf{Y})))\big\rbrace
\end{align*}
where $p(z|Y)$ is the probability that $\mathbf{Y}$ takes label $z$ by the discriminator, $D_{KL}(\cdot,\cdot)$ denotes the Kullback-Leibler divergence, and $H(\cdot)$ denotes the entropy. Higher values of $IS$ indicate greater sample diversity. The lowest value of $IS$ is achieved if and only if for any $\mathbf{Y}$ generated by $G_{\boldsymbol{\omega}}$, $p(z|Y)=E_Y[p(z|Y)]$. It means the probability that the discriminator gives label $z$ to $\mathbf{Y}$ is the same, for any $\mathbf{Y}$ generated by the generator.

If a generated sample with low quality, the entropy of $E_Y[p(z|Y)]$ and $p(z|Y)$ can still be, respectively, high and low, which leads to a good inception score. \citet{che2016mode} also mentioned this issue and proposed the mode score function to deal with this issue by
\begin{align}\label{MS1}
	MS&=\exp\big\lbrace E_{\mathbf{Y}}\big[D_{KL}(p(z|\mathbf{Y}),p(z))\big]-D_{KL}(E_{\mathbf{Y}}[p(z|\mathbf{Y})],p(z))  \big\rbrace,
\end{align}
where $p(z)$ is the distribution of labels in the training data. The first part of \eqref{MS1} assesses the quality of the generated sample and the last part deals to assess the variety of the generated sample. The higher values of $MS$ again indicate greater diversity and higher quality for the generated sample. 
However, \citet{che2016mode} pointed out that the above score does not work well when training datasets are unlabeled. 

Despite using Kullback-Leibler divergence, \citet{Kaifeng} designed a matching score to evaluate the sample qualification as follows. For a real dataset $ U=\lbrace \mathbf{X}_1,\ldots,\mathbf{X}_n \rbrace$, let  $\boldsymbol{\omega}^{\ast}$ be a parameter of $G_{\boldsymbol{\omega}}$ that optimized the desired GAN objective function. Then, for any similarity function $s(\cdot,\cdot)$, the matching score between the real and generated sample is given by
\begin{align}\label{MCS}
	MCS=\frac{1}{n}\max\limits_{t\in \mathcal{T}} \sum_{i=1}^{n} s(\mathbf{X}_i,\mathbf{Y}_{t(i)}(\boldsymbol{\omega}^{\ast})),
\end{align}
where $\mathcal{T}$ is all permutations of $n$ elements in $\lbrace 1,\ldots,n\rbrace$ and $ V=\lbrace\mathbf{Y}_{t(1)}(\boldsymbol{\omega}^{\ast}),\ldots,\mathbf{Y}_{t(n)}(\boldsymbol{\omega}^{\ast})\rbrace$ drawn from the trained generator $G_{\boldsymbol{\omega}^{\ast}}$. A larger matching score guarantees more modes in the generated manifold. Since the computation of $n!$ terms in \eqref{MCS} is time-consuming, \citet{Kaifeng} applied the maximum bipartite matching (MBM) algorithm to find the optimal permutation of realistic samples to the corresponding permutation of the real dataset and then uses the cosine similarity,
\begin{equation*}
	s(\mathbf{X}_i,\mathbf{Y}_{t(i)}(\boldsymbol{\omega}^{\ast}))=\frac{\sum_{j=1}^{d}(\mathbf{X}_{ij}\mathbf{Y}_{t(i)j}(\boldsymbol{\omega}^{\ast}))}{\sqrt{\sum_{j=1}^{d}\mathbf{X}_{ij} \sum_{j=1}^{d}\mathbf{Y}_{t(i)j}(\boldsymbol{\omega}^{\ast})}},  
\end{equation*}
where $\mathbf{Y}_{t(i)}\in \mathbb{R}^d$ and $\mathbf{Y}_{t(i)j}$ denotes the $j$-th element of the vector $\mathbf{Y}_{t(i)}$.
The Ford–Fulkerson (FF), Edmonds–Karp (EK), and Hopcroft–Karp (HK) are among the most famous matching algorithms to compute this permutation \citep{ford1956maximal,edmonds1972theoretical,hopcroft1973n}. A particular consideration that should be taken into account is the running time of these algorithms. For example, the running time of the FF, EK, and HK algorithms are $O(|U\cup V|f)$, $O(|U\cup V||E|^2)$, and $O(\sqrt{|U\cup V|}|E|)$, respectively, where $f$ is the maximum flow in the graph, $E$ is the set of all edges connecting the nodes in the set $U$ to the nodes in the set $V$, and $|\cdot|$ denotes the number of components in the relevant set. 

\subsection{An MMD Matching Score Function}
We first revisit 
the MBM method used in the matching score function \eqref{MCS} proposed by \citet{Kaifeng} who argued that considering $n!$ permutations in \eqref{MCS} is time-consuming, an optimal permutation chosen by the MBM algorithm is instead considered to compute $MCS$. To continue the discussion, we need to briefly review some of the main concepts in the bipartite graph theory.

Let a bipartite graph be denoted by $\mathcal{B}=(U,V, E)$, where $E$ is the set of all edges connecting the nodes in the set $U$ to the nodes in the set $V$. A bipartite matching is a subset $E_{MBM}\subseteq E$ for $\mathcal{B}$ such that no edges in $E_{MBM}$ share an endpoint \citep{lovasz1986plummer}. An MBM is a bipartite matching with the maximum number of edges such that if an edge is added to its edges set, the bipartite graph is no longer a matching. It should be noted that more than one maximum matching can exist for a bipartite graph $\mathcal{B}$ and then MBMs are not unique in such graphs \citep{jia2022finding}. For instance, when the number of nodes in sets $U$ and $V$ is the same, there could be $n!$ MBMs for bipartite graph $\mathcal{B}$. 

Now, consider $U$ as the set of the real dataset $ \mathbf{X}_1,\ldots,\mathbf{X}_n $ and $V$ as the set of $ \mathbf{Y}_{1}(\boldsymbol{\omega}^{\ast}),\ldots,$ $\mathbf{Y}_{n}(\boldsymbol{\omega}^{\ast})$, drawn from the trained generator $G_{\boldsymbol{\omega}^{\ast}}$, in the matching score procedure given by Section \ref{sec-mcs}. Since each permutation of nodes in $V$ must be compared to the elements of $U$, there are $n!$ MBMs between $U$ and $V$. 
To be clearer, all MBM graphs are given for $n=3$ by Figure \ref{MBMs}.   
It is worth mentioning that MBM algorithms mentioned in Section \ref{sec-mcs} often randomly output one of $n!$ possible MBMs. Hence, we prefer to use the term ``random permutation'' as opposed to using the term ``optimal permutation" in the procedure proposed by \citet{Kaifeng}. 
On the other hand, the MBM may not be a particularly informative score to demonstrate the similarity between the two samples. For example, for $i=1,\cdots,n$, let $\mathbf{X}_i$ be a handwritten image for the number $i$. Also, assume that samples  $\mathbf{Y}_i(\boldsymbol{\omega}^{\ast})$'s, produced by the trained generator, have high resolution and great diversity. However, a randomly chosen MBM may connect none of the generated data to its corresponding data, or very few $\mathbf{Y}_i(\boldsymbol{\omega}^{\ast})$ to the corresponding $\mathbf{X}_i(\boldsymbol{\omega}^{\ast})$. In this case, $s(\mathbf{X}_i,\mathbf{Y}_{t(i)}(\boldsymbol{\omega}^{\ast}))$ in \eqref{MCS} might have a low value leading to a poor $MCS$, while the observed generated samples may in fact exhibit good performance in terms of diversity and resolution. 

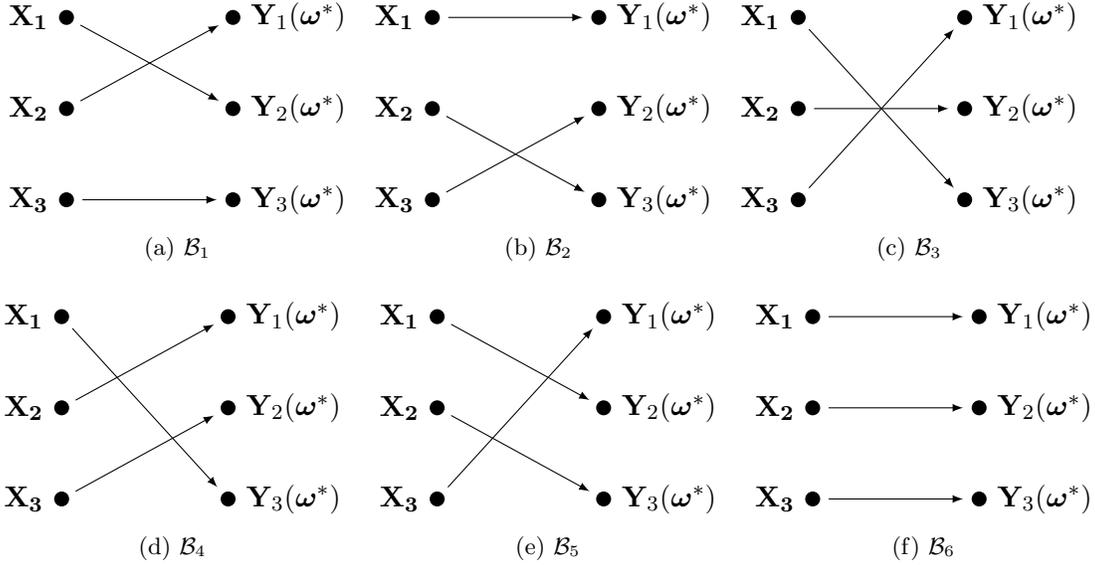
\begin{figure}[ht]\label{MBMs}
	\centering
	\subfloat[$\mathcal{B}_1$]{\begin{tikzpicture}[
			mydot/.style={
				circle,
				fill,
				inner sep=2pt
			},
			>=latex,
			shorten >= 3pt,
			shorten <= 3pt
			]
			\node[mydot,label={left:$\mathbf{X_1}$}] (a1) {}; 
			\node[mydot,below=of a1,label={left:$\mathbf{X_2}$}] (a2) {}; 
			\node[mydot,below=of a2,label={left:$\mathbf{X_3}$}] (a3) {};

			\node[mydot,right=2cm of a1,label={right:$\mathbf{Y}_{1}(\boldsymbol{\omega}^{\ast})$}] (b1) {}; 
			\node[mydot,below=of b1,label={right:$\mathbf{Y}_{2}(\boldsymbol{\omega}^{\ast})$}] (b2) {}; 
			\node[mydot,below=of b2,label={right:$\mathbf{Y}_{3}(\boldsymbol{\omega}^{\ast})$}] (b3) {};

			\path[->] (a1) edge (b2);
			\path[->] (a2) edge (b1);
			\path[->] (a3) edge (b3);
	\end{tikzpicture}}
	\subfloat[$\mathcal{B}_2$]{
		\begin{tikzpicture}[
			mydot/.style={
				circle,
				fill,
				inner sep=2pt
			},
			>=latex,
			shorten >= 3pt,
			shorten <= 3pt
			]
			\node[mydot,label={left:$\mathbf{X_1}$}] (a1) {}; 
			\node[mydot,below=of a1,label={left:$\mathbf{X_2}$}] (a2) {}; 
			\node[mydot,below=of a2,label={left:$\mathbf{X_3}$}] (a3) {};

			\node[mydot,right=2cm of a1,label={right:$\mathbf{Y}_{1}(\boldsymbol{\omega}^{\ast})$}] (b1) {}; 
			\node[mydot,below=of b1,label={right:$\mathbf{Y}_{2}(\boldsymbol{\omega}^{\ast})$}] (b2) {}; 
			\node[mydot,below=of b2,label={right:$\mathbf{Y}_{3}(\boldsymbol{\omega}^{\ast})$}] (b3) {};

			\path[->] (a1) edge (b1);
			\path[->] (a2) edge (b3);
			\path[->] (a3) edge (b2);
	\end{tikzpicture}}
	\subfloat[$\mathcal{B}_3$]{
		\begin{tikzpicture}[
			mydot/.style={
				circle,
				fill,
				inner sep=2pt
			},
			>=latex,
			shorten >= 3pt,
			shorten <= 3pt
			]
			\node[mydot,label={left:$\mathbf{X_1}$}] (a1) {}; 
			\node[mydot,below=of a1,label={left:$\mathbf{X_2}$}] (a2) {}; 
			\node[mydot,below=of a2,label={left:$\mathbf{X_3}$}] (a3) {};

			\node[mydot,right=2cm of a1,label={right:$\mathbf{Y}_{1}(\boldsymbol{\omega}^{\ast})$}] (b1) {}; 
			\node[mydot,below=of b1,label={right:$\mathbf{Y}_{2}(\boldsymbol{\omega}^{\ast})$}] (b2) {}; 
			\node[mydot,below=of b2,label={right:$\mathbf{Y}_{3}(\boldsymbol{\omega}^{\ast})$}] (b3) {};

			\path[->] (a1) edge (b3);
			\path[->] (a2) edge (b2);
			\path[->] (a3) edge (b1);
		\end{tikzpicture}
	}
	\\
	\subfloat[$\mathcal{B}_4$]{
		\begin{tikzpicture}[
			mydot/.style={
				circle,
				fill,
				inner sep=2pt
			},
			>=latex,
			shorten >= 3pt,
			shorten <= 3pt
			]
			\node[mydot,label={left:$\mathbf{X_1}$}] (a1) {}; 
			\node[mydot,below=of a1,label={left:$\mathbf{X_2}$}] (a2) {}; 
			\node[mydot,below=of a2,label={left:$\mathbf{X_3}$}] (a3) {};

			\node[mydot,right=2cm of a1,label={right:$\mathbf{Y}_{1}(\boldsymbol{\omega}^{\ast})$}] (b1) {}; 
			\node[mydot,below=of b1,label={right:$\mathbf{Y}_{2}(\boldsymbol{\omega}^{\ast})$}] (b2) {}; 
			\node[mydot,below=of b2,label={right:$\mathbf{Y}_{3}(\boldsymbol{\omega}^{\ast})$}] (b3) {};

			\path[->] (a1) edge (b3);
			\path[->] (a2) edge (b1);
			\path[->] (a3) edge (b2);
		\end{tikzpicture}
	}
	\subfloat[$\mathcal{B}_5$]{
		\begin{tikzpicture}[
			mydot/.style={
				circle,
				fill,
				inner sep=2pt
			},
			>=latex,
			shorten >= 3pt,
			shorten <= 3pt
			]
			\node[mydot,label={left:$\mathbf{X_1}$}] (a1) {}; 
			\node[mydot,below=of a1,label={left:$\mathbf{X_2}$}] (a2) {}; 
			\node[mydot,below=of a2,label={left:$\mathbf{X_3}$}] (a3) {};

			\node[mydot,right=2cm of a1,label={right:$\mathbf{Y}_{1}(\boldsymbol{\omega}^{\ast})$}] (b1) {}; 
			\node[mydot,below=of b1,label={right:$\mathbf{Y}_{2}(\boldsymbol{\omega}^{\ast})$}] (b2) {}; 
			\node[mydot,below=of b2,label={right:$\mathbf{Y}_{3}(\boldsymbol{\omega}^{\ast})$}] (b3) {};

			\path[->] (a1) edge (b2);
			\path[->] (a2) edge (b3);
			\path[->] (a3) edge (b1);
		\end{tikzpicture}
	}
	\subfloat[$\mathcal{B}_6$]{
		\begin{tikzpicture}[
			mydot/.style={
				circle,
				fill,
				inner sep=2pt
			},
			>=latex,
			shorten >= 3pt,
			shorten <= 3pt
			]
			\node[mydot,label={left:$\mathbf{X_1}$}] (a1) {}; 
			\node[mydot,below=of a1,label={left:$\mathbf{X_2}$}] (a2) {}; 
			\node[mydot,below=of a2,label={left:$\mathbf{X_3}$}] (a3) {};

			\node[mydot,right=2cm of a1,label={right:$\mathbf{Y}_{1}(\boldsymbol{\omega}^{\ast})$}] (b1) {}; 
			\node[mydot,below=of b1,label={right:$\mathbf{Y}_{2}(\boldsymbol{\omega}^{\ast})$}] (b2) {}; 
			\node[mydot,below=of b2,label={right:$\mathbf{Y}_{3}(\boldsymbol{\omega}^{\ast})$}] (b3) {};

			\path[->] (a1) edge (b1);
			\path[->] (a2) edge (b2);
			\path[->] (a3) edge (b3);
		\end{tikzpicture}
	}
	\caption{All possible MBMs between real and generated datasets with the same sample size $n=3$.}\label{MBMs}
\end{figure}

Instead of considering only a random MBM, it is more reasonable to consider several bipartite graphs constructed based on resampling from $\lbrace \mathbf{X}_i\rbrace_{i=1}^{n}$ and $\lbrace \mathbf{Y}_{i}(\boldsymbol{\omega}^{\ast})\rbrace_{i=1}^{n}$ with smaller sample sizes than $n$ and then collect a random MBM in each bipartite (mini-batch strategy). In this case, more matchings are considered, which provides more comparison for checking the quality of the generated samples.
However, the implementation of MBM algorithms will be time-consuming and also most of the data information will still be lost due to neglecting to consider all matchings.

To develop a stronger method for evaluating the differences between real and generated data manifolds, we propose using the MMD dissimilarity measure instead of using the cosine similarity measure as follows: For $i=1,\ldots,r_{mb}$, let $\lbrace \mathbf{X}_{i_j}\rbrace_{j=1}^{n_{mb}}$ and $\lbrace \mathbf{Y}_{i_j}(\boldsymbol{\omega}^{\ast})\rbrace_{j=1}^{n_{mb}}$ be two samples drawn, respectively,  from the real dataset $\mathbf{X}_{1},\ldots,\mathbf{X}_{n}$ and the generated dataset $\mathbf{Y}_{1}(\boldsymbol{\omega}^{\ast}),\ldots,$
$\mathbf{Y}_{n}(\boldsymbol{\omega}^{\ast})$  with the same sample size $n_{mb}<n$. 
Then, we define the MMD-based matching score  as  
\begin{align}\label{MMDS}
	MMDS=\max\limits_{i\in\lbrace 1,\ldots, r_{mb}\rbrace}\mathrm{MMD}^2(F_{n_{mb}}(i),F_{G_{\boldsymbol{\omega}^{\ast}},n_{mb}}(i)),
\end{align}
where, $\mathrm{MMD}^2(F_{n_{mb}}(i),F_{G_{\boldsymbol{\omega}^{\ast}},n_{mb}}(i))$ is the   MMD approximation given by Equation (2, main paper) using samples
$\lbrace \mathbf{X}_{i_j}\rbrace_{j=1}^{n_{mb}}$ and $\lbrace \mathbf{Y}_{i_j}(\boldsymbol{\omega}^{\ast})\rbrace_{j=1}^{n_{mb}}$ (mini-batch samples). 
Our proposed matching score returns the maximum value of the MMD approximation between a subset of the real and a subset of the generated dataset with the same size $n_{mb}$ (mini-batch sample size) over $r_{mb}$ resamplings (mini-batch iteration). According to Equation Equation (2, main paper), all components of mini-batch samples are compared together in the MMD measure, which provides a comprehensive assessment between subsets of the data in each iteration.
Eventually, it is obvious smaller values of $MMDS$ indicate better quality and more diversity of the generated samples.
\section{Additional Experiments}
\subsection{The Semi-BNP Test}
To further illustrate the difference in performance between the BNP and FNP tests, we conducted tests on two alternative distributions: $F_1=N(0,\sigma^2)$ for $\sigma^2\in[1,4]$ and $F_1=0.5N(-1+\upsilon,1)+0.5N(1-\upsilon,1)$ for $\upsilon\in[0,1]$. The corresponding results are reported in Figure \ref{variousVar} and \ref{variousNMIX} for univariate cases with $n=50$. Figure \ref{variousVar}(a) specifically shows that the proposed test exhibits a higher growth rate of the AUC when $\sigma^2$ is increased compared to the other tests. Additionally, Figure \ref{variousVar}(b) indicates that our test starts to detect differences earlier than other tests ($\sigma^2\geq 1.67$). Similar results can be found in Figure \ref{variousNMIX} for mixture distribution with various means.
\begin{figure}[ht]
	\centering
	\subfloat[]{\includegraphics[width=.35\linewidth,height=.35\linewidth]{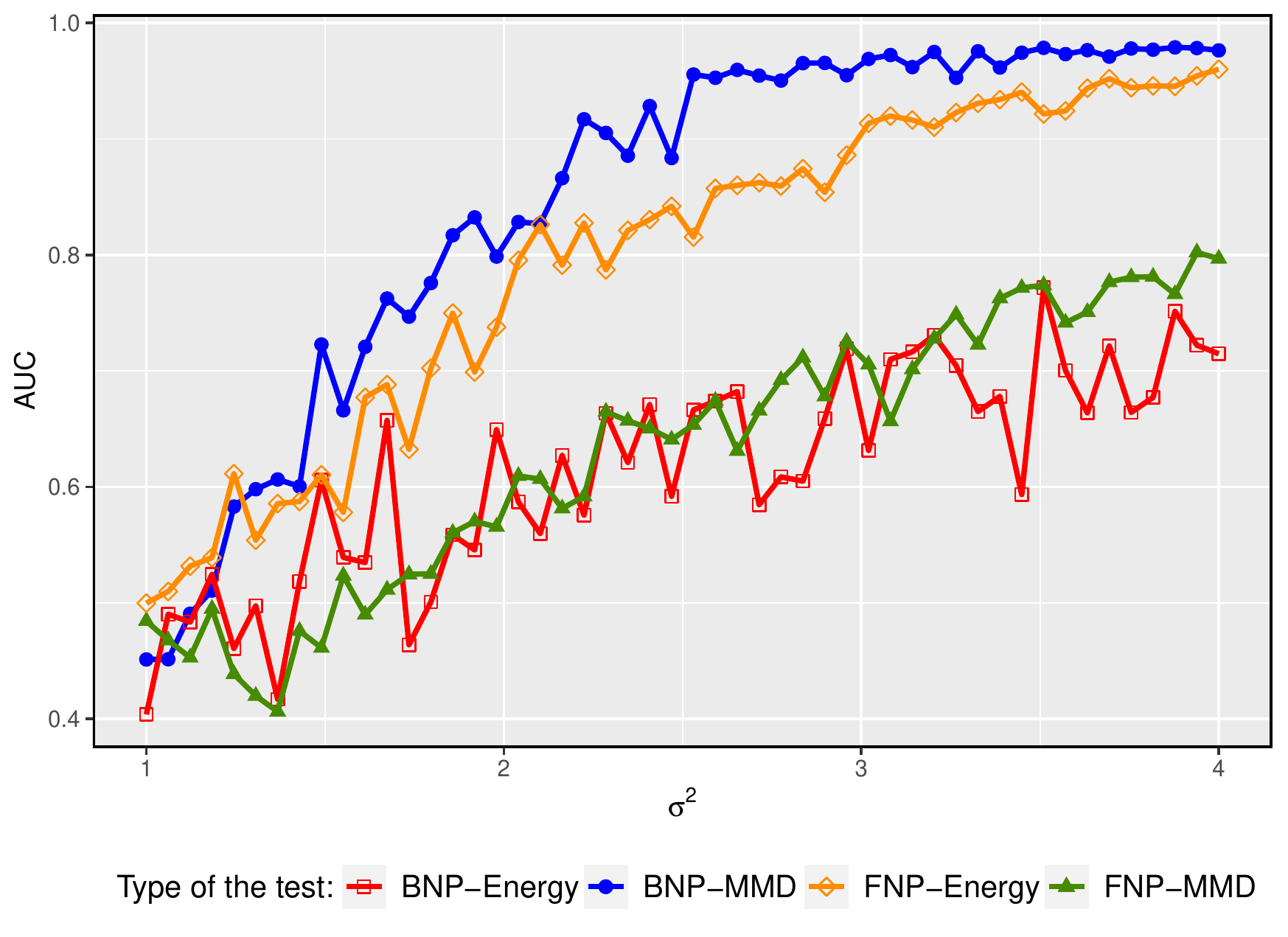}}
	\subfloat[]{\includegraphics[width=.35\linewidth,height=.35\linewidth]{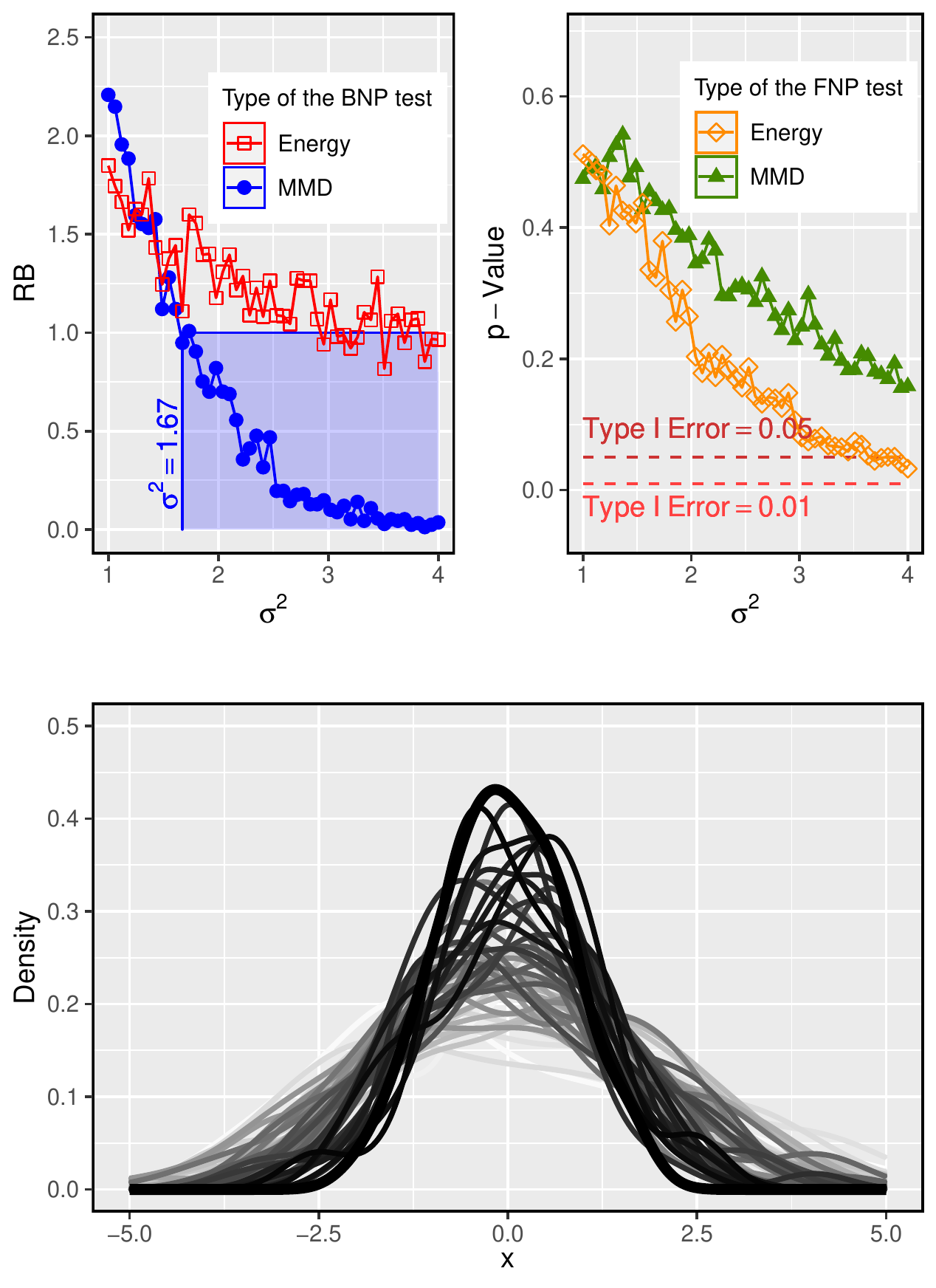}}
	\caption{(a) AUC values in testing alternative $F_1=N(0,\sigma^2)$ for $\sigma^2\in(1,4)$ in variance shift example. (b)-Top: Test critical values against different values of $\sigma^2$. (b)-Bottom: The lighter density corresponds to a larger value of $\sigma^2$.}
	\label{variousVar}%
\end{figure}

\begin{figure}[ht]
	\centering
	\subfloat[]{\includegraphics[width=.35\linewidth,height=.35\linewidth]{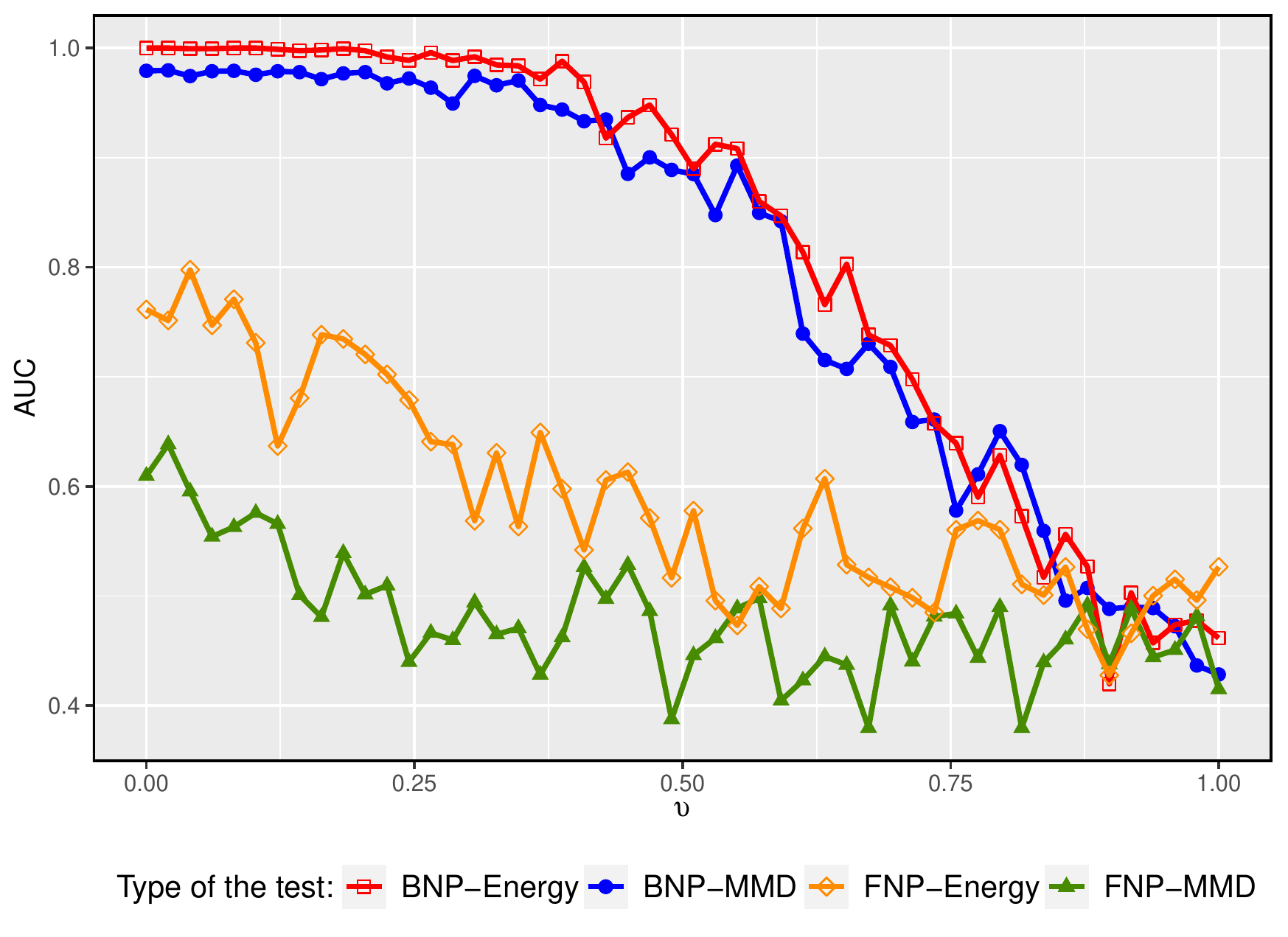}}
	\subfloat[]{\includegraphics[width=.35\linewidth,height=.35\linewidth]{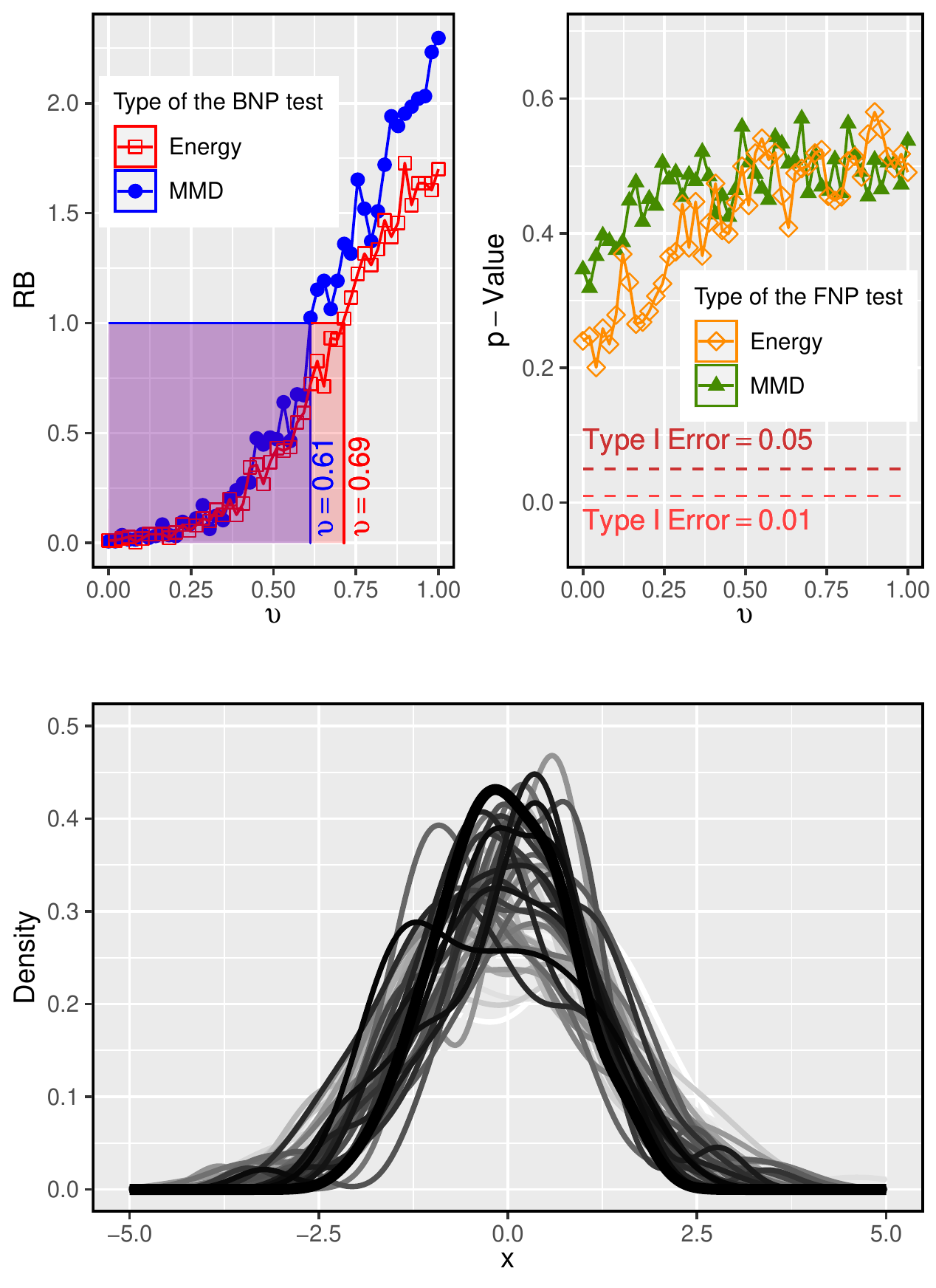}}
	\caption{(a) AUC values in testing alternative $F_1=0.5N(-1+\upsilon,1)+0.5N(1-\upsilon,1)$ for $\upsilon\in(0,1)$ in mixture example. (b)-Top: Test critical values against different values of $\sigma^2$. (b)-Bottom: The lighter density corresponds to a smaller value of $\upsilon$.}
	\label{variousNMIX}%
\end{figure}

Figure \ref{power-comparision} provides a more focused comparison between the semi-BNP test and its Bayesian competitor, the BNP energy test. This figure illustrates the proportion of rejecting $\mathcal{H}_0$ over the 100 samples for both Bayesian tests mentioned, across different data dimensions. The first row of Figure \ref{power-comparision} represents the type I error, while the remaining rows represent the test power. The figure demonstrates the effectiveness of the semi-BNP kernel-based test in detecting differences, especially in scenarios involving variance shift, heavy tail, and kurtosis examples, where the BNP-energy test does not perform optimally in high sample sizes.

Moreover, to conduct a comprehensive analysis of the large sample property of all the tests in comparison, we present Table \ref{examp500-1000} for sample sizes $n=500, 1000$. This table clearly demonstrates the weak performance of the BNP-Energy test in particular scenarios that are currently being mentioned.


\begin{figure}[htp]
	\centering
	{\includegraphics[width=1\linewidth]{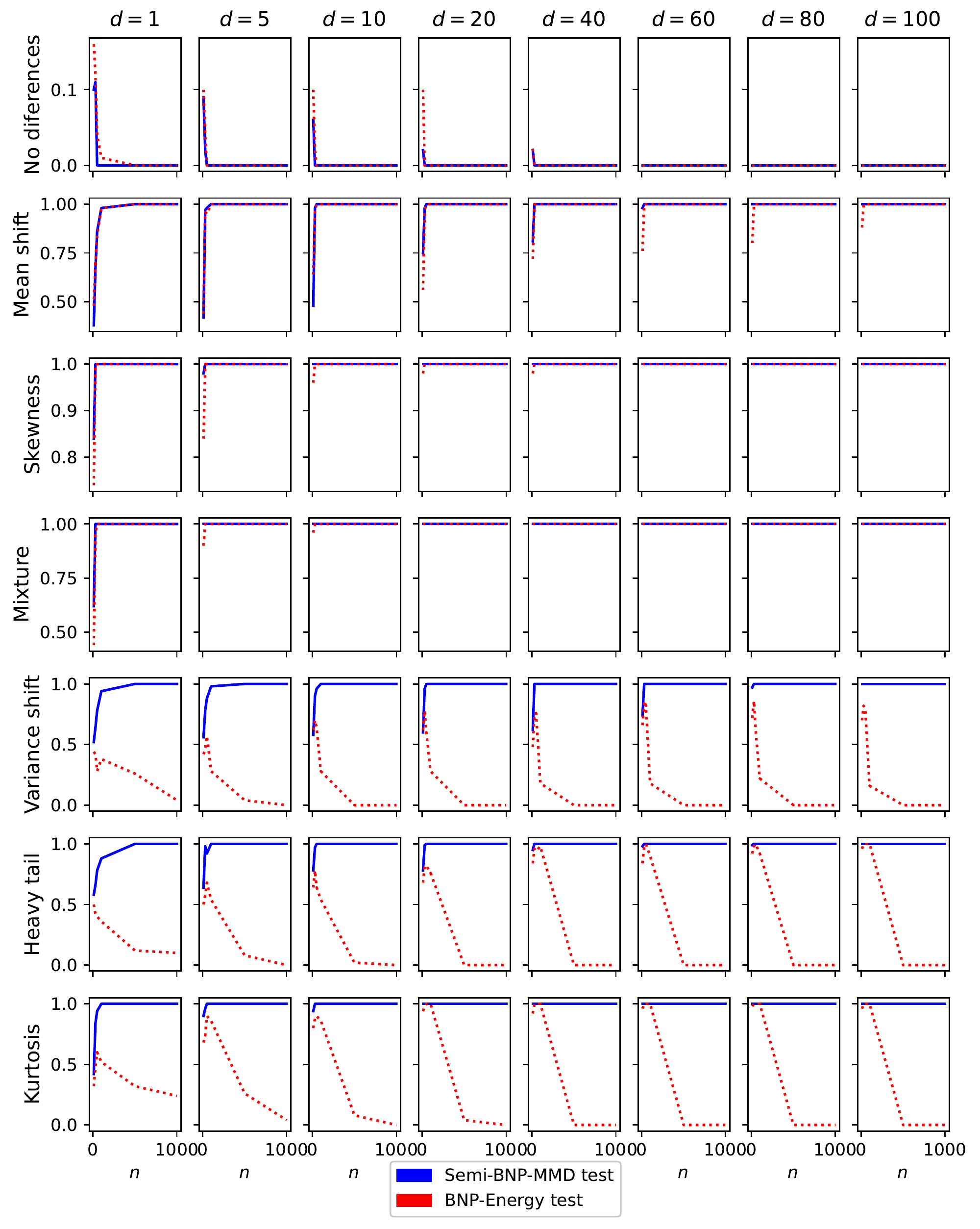}}
	\caption{The proportion of rejecting $\mathcal{H}_0$ out of 100 replications against sample of sizes $n=10,\ldots,1000$ based on using $a=25$, $\ell=1000$, $\epsilon=10^{-3}$ in \eqref{random-stopping}, $M=20$ for the semi-BNP-MMD (blue line) and BNP-energy (red dotted) tests.}
	\label{power-comparision}%
\end{figure}

\begin{table}[htp] \centering
	\setlength{\tabcolsep}{1 mm}
	\caption{The average of RB, the average of its strength (Str
		), and the relevant AUC
		out of 100 replications based on using $a=25$, $\ell=1000$, $\epsilon=10^{-3}$ in \eqref{random-stopping}, $M=20$, and bandwidth parameter $\sigma=80$ in RBF kernel for two sample of data with $n=500,1000$.}\label{examp500-1000}
	\scalebox{0.65}{
		\begin{tabular}[c]{cl|llll|llll|llll|llll}\hline
			\multirow{5}{*}{Example}&\multirow{5}{*}{$d$}& \multicolumn{8}{c} {BNP} & \multicolumn{8}{c} {FNP}  \\\cmidrule(lr){3-10}\cmidrule(lr){11-18}  
			&&\multicolumn{4}{c}{MMD}&\multicolumn{4}{c}{Energy}&\multicolumn{4}{c}{MMD}&\multicolumn{4}{c}{Energy}\\\cmidrule(lr){3-6}\cmidrule(lr){7-10}\cmidrule(lr){11-14}\cmidrule(lr){15-18}
			&&\multicolumn{2}{c}{RB(Str)}&\multicolumn{2}{c}{AUC}&\multicolumn{2}{c}{RB(Str)}&\multicolumn{2}{c}{AUC}&\multicolumn{2}{c}{P.value}&\multicolumn{2}{c}{AUC}&\multicolumn{2}{c}{P.value}&\multicolumn{2}{c}{AUC}\\\cmidrule(lr){3-4}\cmidrule(lr){5-6}\cmidrule(lr){7-8}\cmidrule(lr){9-10}\cmidrule(lr){11-12}\cmidrule(lr){13-14}\cmidrule(lr){15-16}\cmidrule(lr){17-18}  
			&&\multicolumn{1}{c}{500}&\multicolumn{1}{c}{1000}&\multicolumn{1}{c}{500}&\multicolumn{1}{c}{1000}&\multicolumn{1}{c}{500}&\multicolumn{1}{c}{1000}&\multicolumn{1}{c}{500}&\multicolumn{1}{c}{1000}&\multicolumn{1}{c}{500}&\multicolumn{1}{c}{1000}&\multicolumn{1}{c}{500}&\multicolumn{1}{c}{1000}&\multicolumn{1}{c}{500}&\multicolumn{1}{c}{1000}&\multicolumn{1}{c}{500}&\multicolumn{1}{c}{1000}\\\hline
			No diferences&1&$4.72(0.78)  $&$6.53(0.80)  $  & \multicolumn{2}{!{\hspace*{-0.4pt}\tikzmark{start1}}c!{\tikzmark{end1}}}{} &$3.75(0.60) $&$4.30(0.60)$  & \multicolumn{2}{!{\hspace*{-0.4pt}\tikzmark{start2}}c!{\tikzmark{end2}}}{}&$ 0.52$&$0.50 $ & \multicolumn{2}{!{\hspace*{-0.4pt}\tikzmark{start3}}c!{\tikzmark{end3}}}{}&$0.48$& $ 0.49 $ &\multicolumn{2}{!{\hspace*{-0.4pt}\tikzmark{start4}}c!{\tikzmark{end4}}}{} \\
			&5&$18.84(0.86) $&$ 19.65(0.93)$ &  \multicolumn{2}{!{\hspace*{-0.4pt}\tikzmark{start11}}c!{\tikzmark{end11}}}{} &$18.74(0.88) $ &$19.58(0.76) $ & \multicolumn{2}{!{\hspace*{-0.4pt}\tikzmark{start22}}c!{\tikzmark{end22}}}{} &$0.50 $&$0.51 $ &\multicolumn{2}{!{\hspace*{-0.4pt}\tikzmark{start33}}c!{\tikzmark{end33}}}{}&$0.51 $ &$0.44 $ &\multicolumn{2}{!{\hspace*{-0.4pt}\tikzmark{start44}}c!{\tikzmark{end44}}}{} \\
			&10&$19.98(0.92) $&$20(1) $ & \multicolumn{2}{!{\hspace*{-0.4pt}\tikzmark{start111}}c!{\tikzmark{end111}}}{} &$20(1) $ &$20(1) $ & \multicolumn{2}{!{\hspace*{-0.4pt}\tikzmark{start222}}c!{\tikzmark{end222}}}{} &$0.51 $&$0.50 $ &  \multicolumn{2}{!{\hspace*{-0.4pt}\tikzmark{start333}}c!{\tikzmark{end333}}}{} &$0.53 $&$0.48 $ &  \multicolumn{2}{!{\hspace*{-0.4pt}\tikzmark{start444}}c!{\tikzmark{end444}}}{} \\
			&20&$20(1) $&$20(1) $ &\multicolumn{2}{!{\hspace*{-0.4pt}\tikzmark{start1111}}c!{\tikzmark{end1111}}}{} &$20(1) $ &$20(1) $ & \multicolumn{2}{!{\hspace*{-0.4pt}\tikzmark{start2222}}c!{\tikzmark{end2222}}}{} &$0.53 $&$0.51 $& \multicolumn{2}{!{\hspace*{-0.4pt}\tikzmark{start3333}}c!{\tikzmark{end3333}}}{} &$0.51 $&$0.44 $& \multicolumn{2}{!{\hspace*{-0.4pt}\tikzmark{start4444}}c!{\tikzmark{end4444}}}{} \\
			&40&$20(1) $&$20(1) $ & \multicolumn{2}{!{\hspace*{-0.4pt}\tikzmark{start11111}}c!{\tikzmark{end11111}}}{} &$20(1) $ &$20(1) $ & \multicolumn{2}{!{\hspace*{-0.4pt}\tikzmark{start22222}}c!{\tikzmark{end22222}}}{} &$0.45 $&$0.52 $ & \multicolumn{2}{!{\hspace*{-0.4pt}\tikzmark{start33333}}c!{\tikzmark{end33333}}}{}&$0.51 $ &$0.50 $ &\multicolumn{2}{!{\hspace*{-0.4pt}\tikzmark{start44444}}c!{\tikzmark{end44444}}}{} \\ 
			&60&$20(1) $&$20(1)$&\multicolumn{2}{!{\hspace*{-0.4pt}\tikzmark{start111111}}c!{\tikzmark{end111111}}}{} &$20(1)$&$20(1) $&\multicolumn{2}{!{\hspace*{-0.4pt}\tikzmark{start222222}}c!{\tikzmark{end222222}}}{}&$0.51 $ &$0.50 $ &\multicolumn{2}{!{\hspace*{-0.4pt}\tikzmark{start333333}}c!{\tikzmark{end333333}}}{} &$0.50 $&$0.53 $& \multicolumn{2}{!{\hspace*{-0.4pt}\tikzmark{start444444}}c!{\tikzmark{end444444}}}{} \\ 
			&80&$ 20(1)$&$20(1) $&\multicolumn{2}{!{\hspace*{-0.4pt}\tikzmark{start1111111}}c!{\tikzmark{end1111111}}}{} &$20(1) $ &$20(1) $ &\multicolumn{2}{!{\hspace*{-0.4pt}\tikzmark{start2222222}}c!{\tikzmark{end2222222}}}{} &$0.49 $&$0.48 $  &\multicolumn{2}{!{\hspace*{-0.4pt}\tikzmark{start3333333}}c!{\tikzmark{end3333333}}}{} &$0.54 $&$0.49 $ &\multicolumn{2}{!{\hspace*{-0.4pt}\tikzmark{start4444444}}c!{\tikzmark{end4444444}}}{} \\
			&100&$20(1) $&$20(1) $ &\multicolumn{2}{!{\hspace*{-0.4pt}\tikzmark{start11111111}}c!{\tikzmark{end11111111}}}{} &$20(1) $ &$20(1) $ &\multicolumn{2}{!{\hspace*{-0.4pt}\tikzmark{start22222222}}c!{\tikzmark{end22222222}}}{} &$0.49 $&$0.48 $ &\multicolumn{2}{!{\hspace*{-0.4pt}\tikzmark{start33333333}}c!{\tikzmark{end33333333}}}{} &$0.51 $&$0.50 $&\multicolumn{2}{!{\hspace*{-0.4pt}\tikzmark{start44444444}}c!{\tikzmark{end44444444}}}{} \\\hline
			Mean shift&1&$0(0) $&$0(0) $&$ 1$ &$ 1$ &$0(0) $ &$0(0) $ &$0.98 $&$0.98 $ &$0.001 $&$0.001 $ &$1 $&$1 $ &$0.004  $&$0.004 $&$1 $ &$1 $  \\
			&5&$0(0) $&$0(0) $ &$1 $&$ 1$ &$0(0) $ &$0(0) $ &$1 $&$1 $ &$0.001 $&$0.001 $&$1 $ &$1 $&$0.004 $ &$0.004 $& $1  $&$1 $  \\
			&10&$0(0) $&$0(0) $ &$1 $ &$1 $ & $0(0) $& $0(0) $& $1  $&$1  $&$0.001  $&$0.001 $& $1 $& $1 $&$0.004 $&$0.004 $&$1 $ &$1 $  \\
			&20&$0(0) $&$0(0) $ &$1 $ &$1 $ &$0(0) $ &$0(0) $ &$1 $&$1 $ &$0.001 $&$0.001 $&$1 $ &$1 $&$0.004 $ &$0.004 $&$1 $ &$1 $  \\
			&40&$0(0) $&$0(0) $ &$1 $ &$1 $ &$0(0) $ & $0(0) $&$1 $&$1 $ &$0.001  $& $0.001 $&$1 $& $1 $&$0.004 $& $0.004 $&$1 $&$1 $  \\ 
			&60&$0(0) $&$0(0) $ &$1 $ &$1 $ &$0(0) $&$0(0) $ &$1 $ &$1 $ &$ 0.001 $&$0.001 $&$1 $ &$1 $& $0.004 $&$0.004 $&$1 $ &$1 $  \\ 
			&80&$0(0)  $&$0(0) $ &$1 $ &$1 $ &$0(0)  $ & $0(0)  $&$ 1 $ &$1 $ &$0.001  $ &$0.001  $&$1  $& $1  $&$0.004 $& $0.004 $&$1 $ &$1 $ \\
			&100&$0(0)  $&$0(0) $ &$1 $ &$1 $ &$0(0) $ &$0(0) $ &$1 $ &$1 $ &$0.001  $& $0.001 $&$1 $&$1 $ &$0.004 $&$0.004 $&$1 $ &$1 $  \\\hline
			Skewness&1&$0(0) $&$0(0) $ &$1 $ &$1 $ &$0(0) $ &$0(0) $ &$1 $&$1 $  &$0.001 $&$0.001 $&$1 $ &$1  $ &$0.004 $&$0.004 $ &$1  $&$1 $  \\
			&5&$0(0) $&$0(0) $ &$1 $ &$1 $ &$0(0) $ &$0(0) $ &$1 $&$1 $ &$0.001 $ &$0.001 $ &$1 $&$1 $ &$0.004 $&$0.004 $ &$1 $&$1 $  \\
			&10&$0(0) $&$0(0) $ &$1 $ &$1 $ &$0(0) $ &$0(0) $ &$1 $ &$1 $ &$0.001 $&$0.001  $&$1 $ &$1  $ &$0.004 $&$0.004  $&$1 $ &$1 $  \\
			&20&$0(0) $&$0(0) $ &$1 $ &$1 $ &$0(0) $ &$0(0) $ &$1 $ &$1 $ &$0.001 $&$0.001 $ &$1 $&$1 $&$0.004 $ &$0.004 $ &$1 $&$1 $  \\
			&40&$0(0) $&$0(0) $ &$1 $ &$1 $ &$0(0) $ &$0(0) $ &$1 $ &$1 $ &$0.001 $&$0.001 $ &$1 $&$1  $ &$0.004 $&$0.004  $ &$1 $&$1 $  \\ 
			&60&$0(0) $&$0(0) $ &$1  $ &$1 $ &$0(0) $&$0(0) $ &$ 1 $ &$1  $&$0.001 $ &$0.001  $ &$1 $&$1 $ &$0.004 $&$0.004 $ &$1 $&$1 $  \\ 
			&80&$0(0) $&$0(0) $ &$1 $ &$1 $ &$0(0) $ &$0(0) $ &$1 $&$1 $ &$0.001 $ &$0.001 $ &$1 $&$1 $ &$0.004 $&$0.004 $&$1 $ &$1  $  \\
			&100&$0(0) $&$0(0) $ &$1 $ &$1 $ &$0(0) $ &$0(0) $ &$1 $&$1 $&$0.001 $ &$0.001  $&$1 $ &$1 $&$0.004 $ &$0.004 $&$1 $ &$1 $  \\\hline
			Mixture&1&$0(0) $&$0(0) $ &$1 $ &$1 $ &$0(0) $ &$0(0) $ &$1  $&$1 $&$0.06 $ &$0.01 $&$0.93 $ &$0.99 $&$0.004 $ &$0.004 $&$1 $ &$1 $  \\
			&5&$0(0) $&$0(0) $ & $1 $& $1  $&$0(0) $ &$0(0) $ &$1 $ &$ 1$&$0.001 $&$0.001 $ &$1 $&$1 $ &$0.004 $&$0.004  $ &$ 1$&$1  $  \\
			&10&$0(0) $&$0(0)  $ &$1 $ &$1 $ &$0(0) $ &$0(0) $ &$1  $&$1 $&$0.001 $ &$0.001 $ &$1 $&$1 $ &$0.004 $&$0.004 $&$1 $ &$1 $ \\
			&20&$0(0) $&$0(0) $ &$1  $ &$1 $ &$0(0) $ &$0(0) $ &$1 $&$1 $&$0.001 $ &$0.001  $&$1 $ &$1 $&$0.004 $ &$0.004 $ &$1 $&$1  $\\
			&40&$0(0) $&$0(0) $ &$1 $ &$1 $ &$0(0) $ &$0(0) $ &$1 $&$1 $&$0.001 $ &$0.001  $& $1 $&$1 $&$0.004 $ &$0.004  $&$1 $ &$1  $\\ 
			&60&$0(0) $&$0(0) $ &$1  $ &$1 $ &$0(0) $&$0(0) $ &$1 $ &$1 $ &$0.001 $&$0.001 $ &$1 $&$1 $ &$0.004 $&$0.004 $&$1 $ &$1 $\\
			&80&$0(0) $&$0(0) $ &$1 $ &$1 $ &$0(0) $ &$0(0) $ &$1 $&$1 $&$0.001 $ &$0.001  $& $1 $&$1 $ &$0.004 $&$0.004 $& $1 $&$1 $\\
			&100&$0(0) $&$0(0) $ & $1 $&$1 $ &$0(0) $ &$0(0) $ &$1 $&$1 $& $0.001 $&$0.001  $ &$1 $&$1  $ &$0.004 $&$0.004  $&$1 $ &$1 $\\\hline
			Variance shift&1&$0.01(0) $& $0(0) $&$1 $ &$1 $ &$1.73(0.59) $ &$2.10(0.59) $ &$0.93 $&$0.81 $ &$0.07 $&$0.01 $ &$0.93 $&$0.99 $ &$ 0.006$&$ 0.004 $ &$0.99 $&$1 $\\ 
			&5&$0.42(0.07) $&$0.40(0.08) $ &$0.99 $ &$1 $ &$4.42(0.72) $ &$7.30(0.70)  $ &$0.73 $&$0.64 $&$0.001 $ &$0.001 $ &$1 $&$1 $ &$0.004 $&$0.004 $ &$1 $&$1 $\\
			&10&$0.39(0.06) $&$0.22(0.06) $ &$1 $ &$1 $ &$8.69(0.66) $ &$13.12(0.73) $ &$0.55 $ &$0.40 $&$0.001 $&$0.001 $ &$1 $&$1 $&$0.004 $ &$0.004 $&$1 $ &$1  $\\
			&20&$0(0) $&$0(0) $ &$1 $ &$1 $ &$13.43(0.78) $ &$ 18.12(0.69)$ &$0.35  $ &$0.07 $&$0.001 $&$0.001 $ &$1 $&$1 $&$0.004 $ &$0.004 $ &$1 $&$1 $\\
			&40&$0(0) $&$0(0) $ &$1  $ &$1 $ &$18.01(0.68) $ &$19.82(0.68) $ &$0.11  $&$0 $ &$0.001  $ &$0.001 $&$1 $ &$1 $ &$0.004 $&$0.004 $ &$1 $&$1 $\\ 
			&60&$0(0) $&$0(0) $ &$1 $ &$1 $ &$19.19(0.55) $&$19.98(0.94) $ &$0.02 $ &$0  $ &$0.001 $&$0.001 $ &$1 $&$1 $ &$0.004 $&$0.004 $&$1 $ &$1 $\\
			&80&$0(0) $&$0(0) $ &$1  $ &$1 $ &$19.64(0.47) $ &$20(1) $ &$0 $&$0 $ & $0.001 $&$0.001 $ &$1 $&$1 $ &$0.004 $&$0.004 $ &$1 $&$1 $\\
			&100&$0(0) $& $0(0) $&$1  $ &$1  $ &$19.82(0.64) $ &$20(1)  $ &$0  $&$0 $ &$0.001 $ &$0.001 $& $1 $&$1 $ &$0.004 $&$0.004  $ &$1 $&$1  $ \\\hline
			Heavy tail&1&$0.05(0) $&$0(0) $ &$1 $ &$1 $ &$1.65(0.54) $ &$1.70(0.54) $ &$0.96 $ &$0.99 $&$0.03 $ &$0.004 $&$0.96 $ &$0.99 $&$0.01  $ &$0.005 $&$0.98 $ & $ 0.99$\\
			&5&$0.04(0) $&$0.02(0) $ &$1 $ &$1 $ &$2.89(0.71) $ &$ 4.53(0.74) $ &$0.91  $&$0.76 $& $0.001 $&$0.001 $ &$1 $&$1 $ &$0.004 $&$0.004  $&$1 $ &$1  $\\
			&10&$0(0) $&$0(0)  $ &$1 $ &$1 $ &$4.49(0.78) $ &$7.87(0.73) $ &$0.78 $&$0.64 $ &$0.001 $ &$0.001 $&$1 $ &$1 $& $0.004 $&$0.004 $&$1 $ &$1 $\\
			&20&$0(0) $&$0(0) $ &$1 $ &$1 $ &$5.66(0.76) $ &$11.73(0.75) $ &$0.77 $&$0.42  $&$0.001 $ &$0.001 $&$1 $ &$1 $ &$0.004 $&$0.004 $&$1 $ &$1 $\\
			&40&$0(0) $&$0(0) $ &$1 $ &$1 $ &$9.40(0.79) $ &$16.41(0.78) $ &$0.54 $ &$0.20 $ &$0.001 $&$0.001 $ &$1 $&$1 $ &$0.004 $&$0.004 $&$1 $ &$1 $\\ 
			&60&$0(0) $&$0(0) $ &$1 $ &$1 $ &$11.02(0.74) $&$18.06(0.82) $ &$0.52 $ &$0.16  $ &$0.001 $&$0.001 $ &$1 $&$1 $ &$0.004 $&$0.004 $&$1 $ &$1 $\\
			&80&$0(0) $&$0(0) $ &$1 $ &$1 $ &$12.53(0.77) $ &$18.51(0.90) $ &$0.41 $ &$0.09 $ &$0.001 $&$0.001 $ &$1 $&$1 $ &$0.004 $&$0.004 $ &$1 $&$1 $\\
			&100&$0(0) $&$0(0) $ & $1  $&$1  $ &$13.17(0.75) $ &$19.07(0.97) $ &$0.30 $&$0.06$ & $0.001 $&$0.001 $&$1 $ &$1  $& $0.004 $&$0.004  $&$1 $ &$ 1 $\\\hline
			Kurtosis&1&$0(0) $&$0(0) $ &$1 $ &$1 $ &$1.23(0.42) $ &$1.55(0.52) $ &$0.96 $ &$0.95 $&$0.002 $&$0.001 $ &$0.99 $&$1 $ &$0.004  $&$0.004 $ &$1 $&$1 $\\
			&5&$0(0)  $&$0(0)  $ &$1  $ &$1 $ &$1.75(0.59)  $ &$3.54(0.70)  $ &$0.96 $ &$0.88 $&$0.001  $&$0.001 $ &$1  $&$1 $ &$0.004  $&$0.004  $&$1  $ &$1  $\\
			&10&$0(0) $&$0(0) $ &$1 $ &$1 $ &$2.81(0.66) $ &$6.41(0.76) $ &$0.94 $ &$0.75 $ &$0.001 $&$0.001 $ &$1 $&$1  $ &$0.004 $&$0.004  $ &$1 $&$1  $\\
			&20&$0(0)  $&$0(0) $ &$1 $ &$1 $ &$4.63(0.71)  $ &$9.90(0.78) $ &$0.84 $&$0.51 $ &$0.001  $ &$0.001 $ &$1 $&$1  $ &$0.004 $&$0.004 $ &$ 1 $&$ 1 $\\
			&40&$0(0)  $&$0(0)  $ &$1  $ &$1  $ &$5.70(0.73)  $ &$13.43(0.77) $ &$0.74 $ &$0.28 $ &$0.001  $&$0.001 $ &$1 $&$1 $ &$0.004 $&$0.004 $ &$1 $&$1 $\\ 
			&60&$0(0) $&$0(0) $ &$1 $ &$1 $ &$7.06(0.75) $&$16.38(0.81) $ &$0.72 $ &$0.23  $ &$0.001  $&$0.001 $ &$1 $&$1 $ &$0.004 $&$0.004 $ &$1 $&$1 $\\
			&80&$0(0) $&$0(0) $ &$1 $ &$1 $ &$8.11(0.79) $ &$17.50(0.83) $ &$0.71 $ &$0.13 $ &$0.001  $&$0.001  $ &$1  $&$1  $ &$0.004 $&$0.004 $ &$1 $&$1 $\\
			&100&$0(0) $&$0(0) $ &$1 $ &$ 1$ &$8.83(0.78) $ &$18.52(0.89) $ &$ 0.55$ & $ 0.09$&$ 0.001 $&$ 0.001$ &$1 $&$ 1$ &$0.004 $&$0.004 $ &$1 $&$ 1$\\
			\\\bottomrule
		\end{tabular}
		\HatchedCell{start1}{end1}{%
			pattern color=black!70,pattern=north east lines}
		\HatchedCell{start2}{end2}{%
			pattern color=black!70,pattern=north east lines}
		\HatchedCell{start3}{end3}{%
			pattern color=black!70,pattern=north east lines}
		\HatchedCell{start4}{end4}{%
			pattern color=black!70,pattern=north east lines}
		\HatchedCell{start11}{end11}{%
			pattern color=black!70,pattern=north east lines}
		\HatchedCell{start22}{end22}{%
			pattern color=black!70,pattern=north east lines}
		\HatchedCell{start33}{end33}{%
			pattern color=black!70,pattern=north east lines}
		\HatchedCell{start44}{end44}{%
			pattern color=black!70,pattern=north east lines}
		\HatchedCell{start111}{end111}{%
			pattern color=black!70,pattern=north east lines}
		\HatchedCell{start222}{end222}{%
			pattern color=black!70,pattern=north east lines}
		\HatchedCell{start333}{end333}{%
			pattern color=black!70,pattern=north east lines}
		\HatchedCell{start444}{end444}{%
			pattern color=black!70,pattern=north east lines}
		\HatchedCell{start1111}{end1111}{%
			pattern color=black!70,pattern=north east lines}
		\HatchedCell{start2222}{end2222}{%
			pattern color=black!70,pattern=north east lines}
		\HatchedCell{start3333}{end3333}{%
			pattern color=black!70,pattern=north east lines}
		\HatchedCell{start4444}{end4444}{%
			pattern color=black!70,pattern=north east lines}
		\HatchedCell{start11111}{end11111}{%
			pattern color=black!70,pattern=north east lines}
		\HatchedCell{start22222}{end22222}{%
			pattern color=black!70,pattern=north east lines}
		\HatchedCell{start33333}{end33333}{%
			pattern color=black!70,pattern=north east lines}
		\HatchedCell{start44444}{end44444}{%
			pattern color=black!70,pattern=north east lines}
		\HatchedCell{start111111}{end111111}{%
			pattern color=black!70,pattern=north east lines}
		\HatchedCell{start222222}{end222222}{%
			pattern color=black!70,pattern=north east lines}
		\HatchedCell{start333333}{end333333}{%
			pattern color=black!70,pattern=north east lines}
		\HatchedCell{start444444}{end444444}{%
			pattern color=black!70,pattern=north east lines}
		\HatchedCell{start1111111}{end1111111}{%
			pattern color=black!70,pattern=north east lines}
		\HatchedCell{start2222222}{end2222222}{%
			pattern color=black!70,pattern=north east lines}
		\HatchedCell{start3333333}{end3333333}{%
			pattern color=black!70,pattern=north east lines}
		\HatchedCell{start4444444}{end4444444}{%
			pattern color=black!70,pattern=north east lines}
		\HatchedCell{start11111111}{end11111111}{%
			pattern color=black!70,pattern=north east lines}
		\HatchedCell{start22222222}{end22222222}{%
			pattern color=black!70,pattern=north east lines}
		\HatchedCell{start33333333}{end33333333}{%
			pattern color=black!70,pattern=north east lines}
		\HatchedCell{start44444444}{end44444444}{%
			pattern color=black!70,pattern=north east lines}
	}
\end{table}%

\subsection{The Semi-BNP GAN}
Now, we examine the performance of the proposed GAN through additional datasets, the details of which are given below. The generated samples are shown in Figures \ref{extra-dataset}. Generally, the generated images using semi-BNP GAN show better resolution than the FNP GAN. The MMD scores presented in Table \ref{MMDS-r} are also evidence to demonstrate this claim. To further assess the performance of MMD-based GANs, we report the commonly used Fr\'{e}chet inception distance (FID) and the Kernel inception distance (KID) metrics \citep{binkowski2018demystifying}. These metrics are well-suited for evaluating the performance of GANs. The corresponding scores\footnote{The codes to compute the KID and FID are available at \url{https://github.com/mbinkowski/MMD-GAN/blob/master/gan/compute_scores.py}.} are reported in Table \ref{MMDS-r}. Similar to our MMD scores, the smaller values of FID and KID show better performance of the GAN.

\subsubsection{Bone Marrow Biopsy Dataset \citep{tomczak2016improving}:}
The bone marrow biopsy (BMB) dataset is a collection of histopathology of BMB images corresponding to 16 patients with some types of blood cancer and anemia: 10 patients for training, 3 for testing, and 3 for validation. This dataset contains 10,800 images in the size of $28\times28$ pixels, 6,800 of which are considered for the training set. The rest of the images have been divided into two sets of equal size for testing and validation. The whole dataset can be found at \url{https://github.com/jmtomczak/vae_householder_flow/tree/master/datasets/histopathologyGray}.
The results based on 6800 training images are presented in Figure \ref{extra-dataset}-(a-c).
\subsubsection{Labeled Faces in the Wild Dataset
	\citep{huang2008labeled}:}
The labeled faces in the wild dataset (LFD) include 13,000 facial image samples with 1,024 ($32\times32$) dimensions. The dataset is available at \url{https://conradsanderson.id.au/lfwcrop/}.
\subsubsection{Brain Tumor MRI Dataset
	\citep{msoudnickparvar2021}:}
In the last experiment, we consider a more challenging medical dataset including brain MRI images available at \url{https://www.kaggle.com/dsv/2645886}.
This dataset has two groups including training and testing sets.
Both are classified into four classes: glioma, meningioma, no tumor, and pituitary. To train the networks, we consider all 5,712 training images. The images vary in size and have extra margins. We use a pre-processing code\footnote{\url{https://github.com/masoudnick/Brain-Tumor-MRI-Classification/blob/main/Preprocessing.py}} to remove margins and then resize images to $50\times50$ pixels. We also scale the pixel value of prepared images to range 0-1 to make the range of distribution of feature values equal and prevent any errors in the backpropagation computation.

\begin{figure}[h]
	\centering
	\subfloat[Training data]{\includegraphics[width=.22\linewidth]{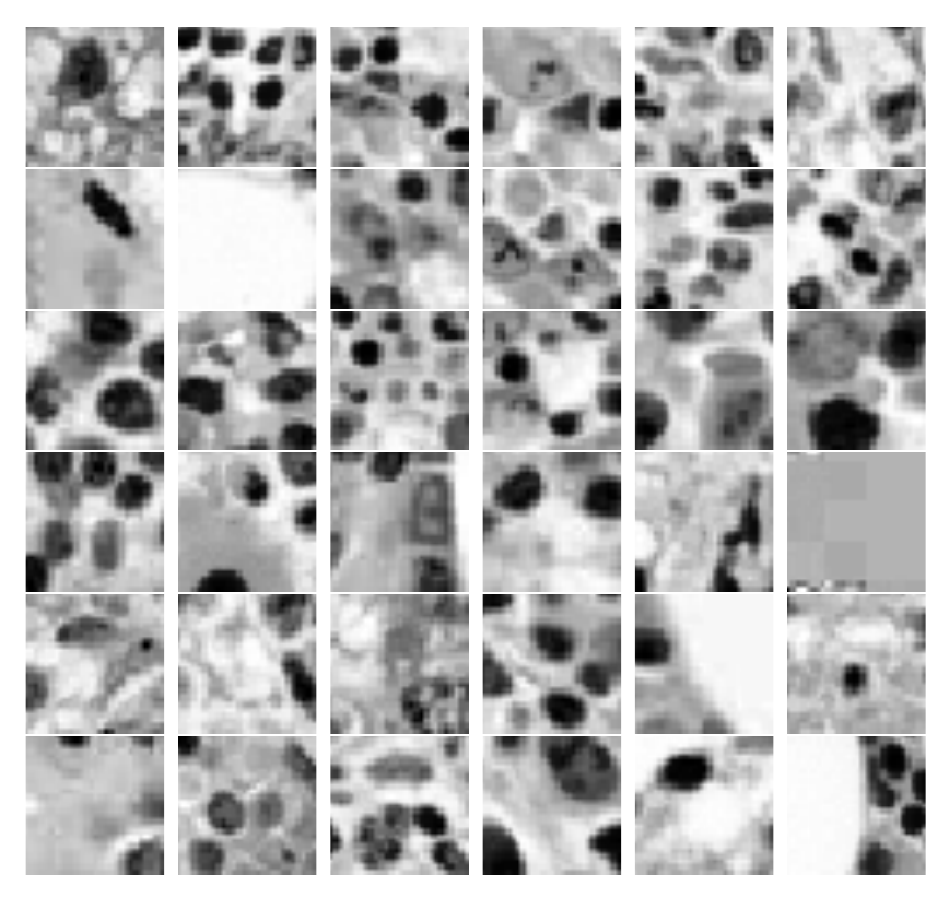}}
	\hspace{1cm}
	\subfloat[Semi-BNP-MMD GAN ]{\includegraphics[width=.22\linewidth]{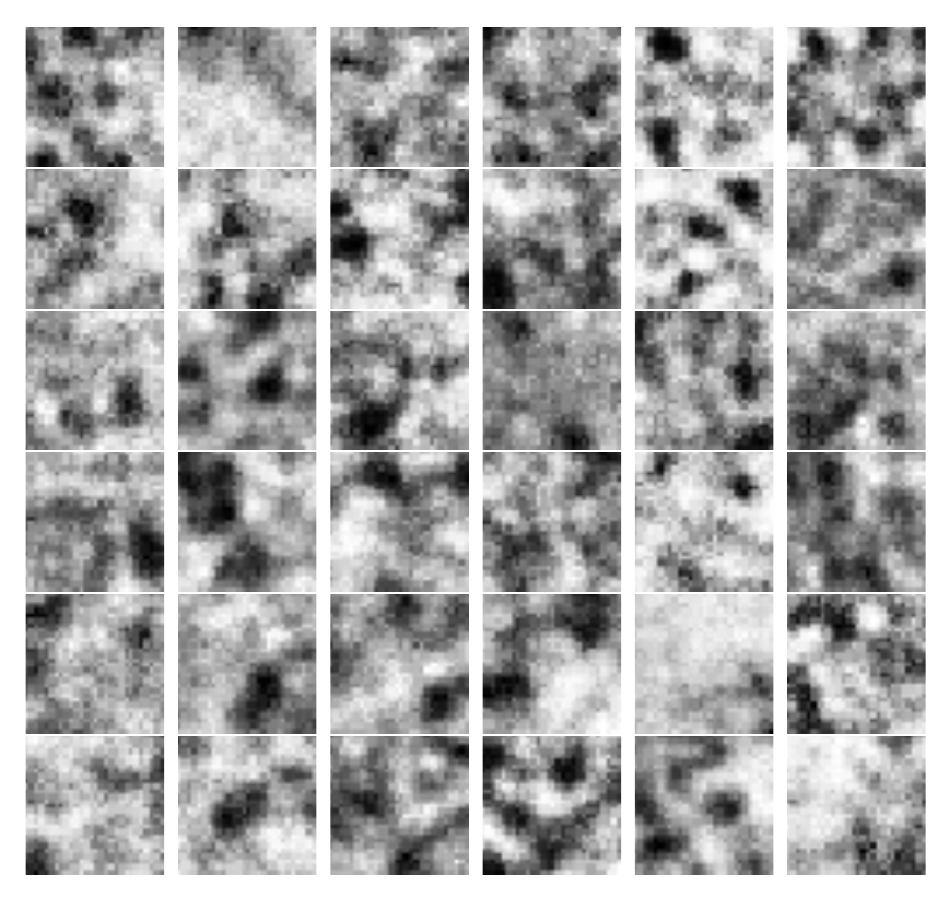}}
	\hspace{1cm}
	\subfloat[MMD-FNP GAN ]{\includegraphics[width=.22\linewidth]{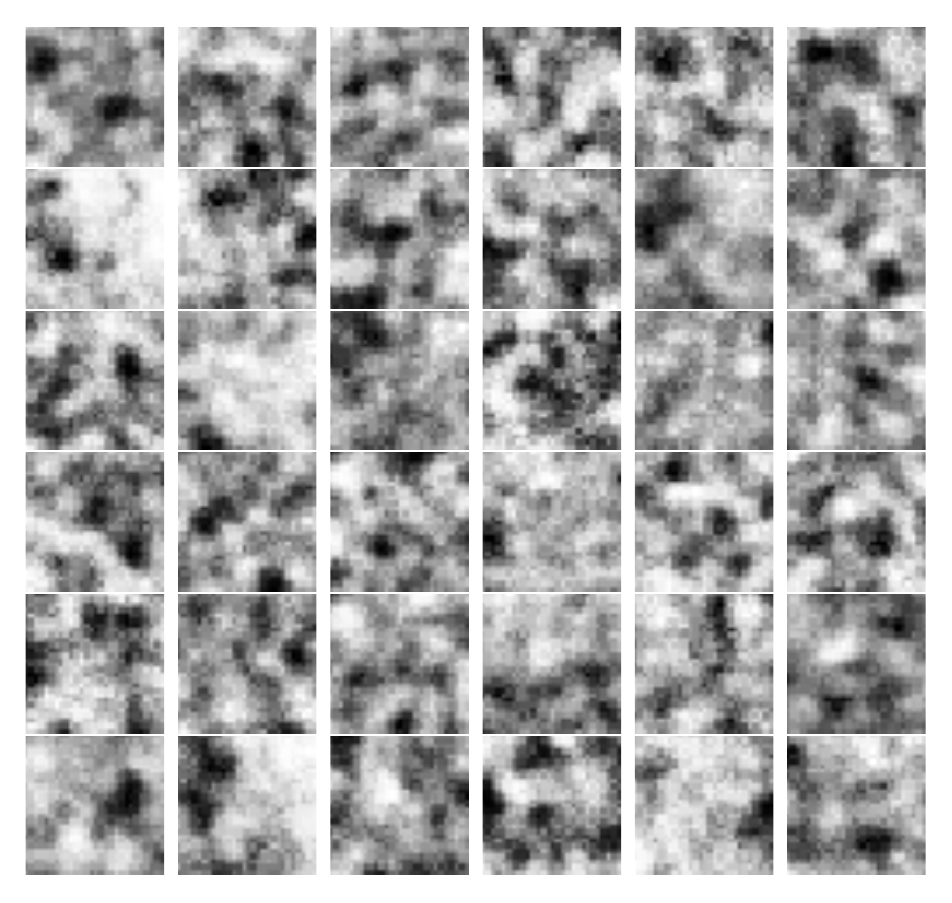}}
	\qquad
	\subfloat[Training data]{\includegraphics[width=.22\linewidth]{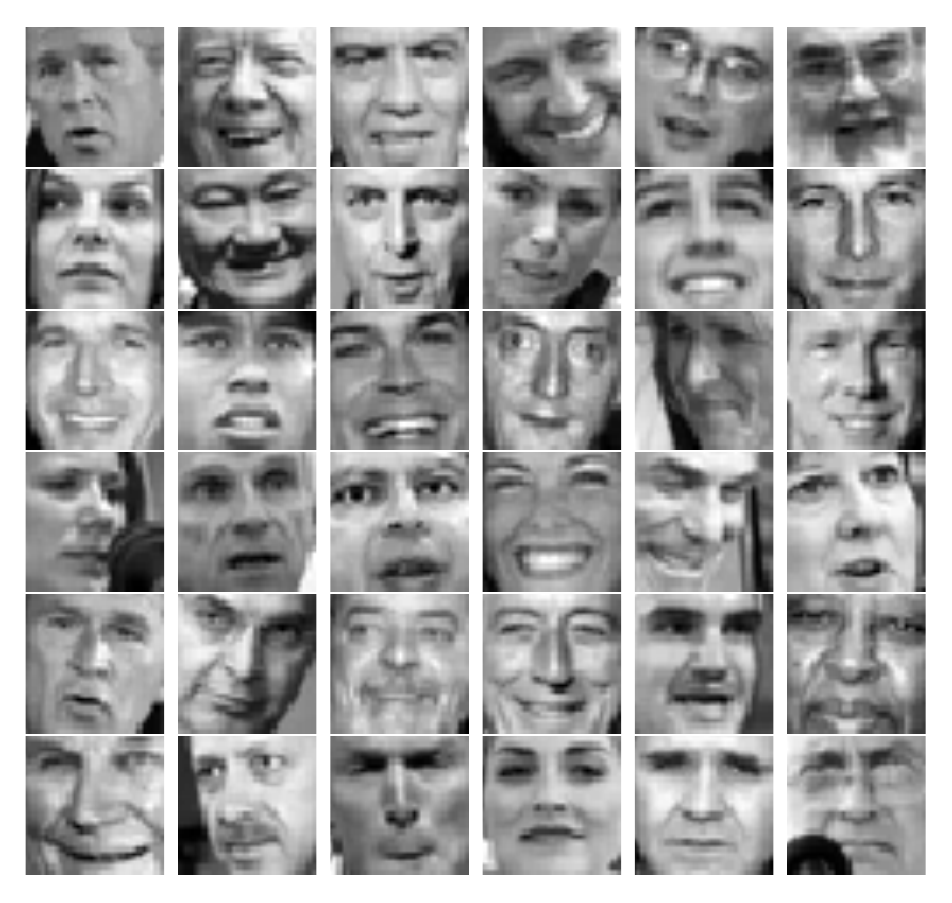}}
	\hspace{01cm}
	\subfloat[Semi-BNP-MMD GAN ]{\includegraphics[width=.22\linewidth]{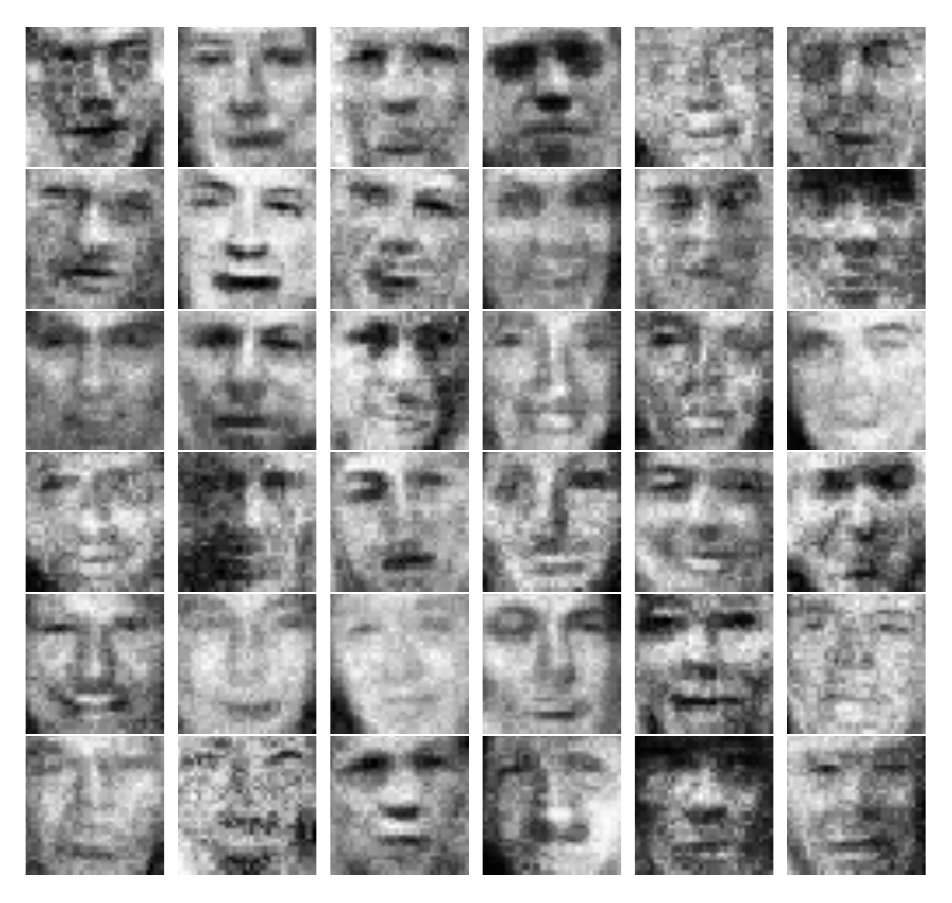}}
	\hspace{1cm}
	\subfloat[MMD-FNP GAN ]{\includegraphics[width=.22\linewidth]{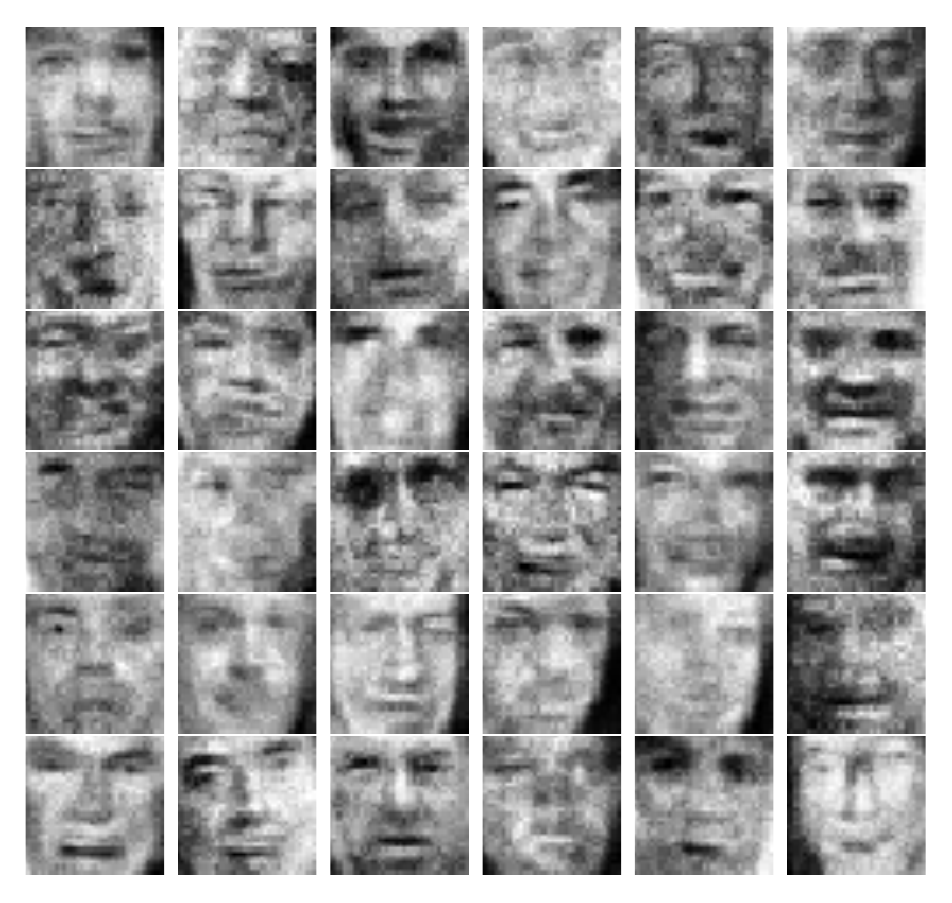}}
	\qquad
	\subfloat[Training data]{\includegraphics[width=.22\linewidth]{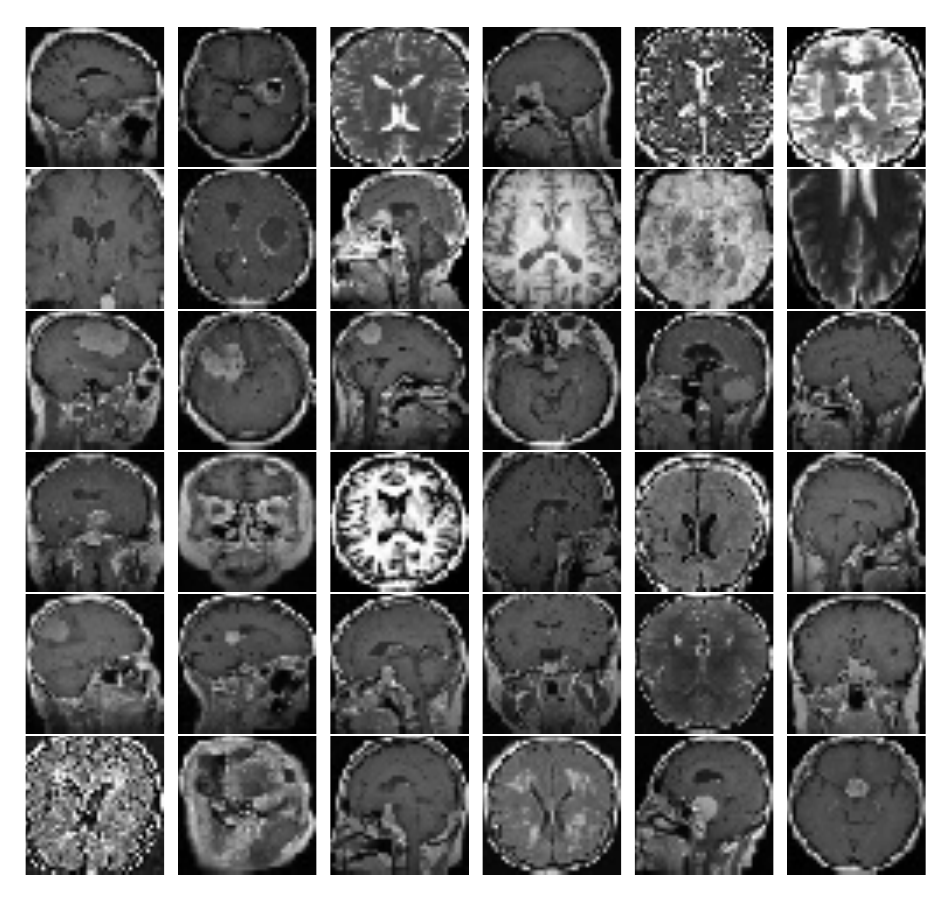}}
	\hspace{01cm}
	\subfloat[Semi-BNP-MMD GAN ]{\includegraphics[width=.22\linewidth]{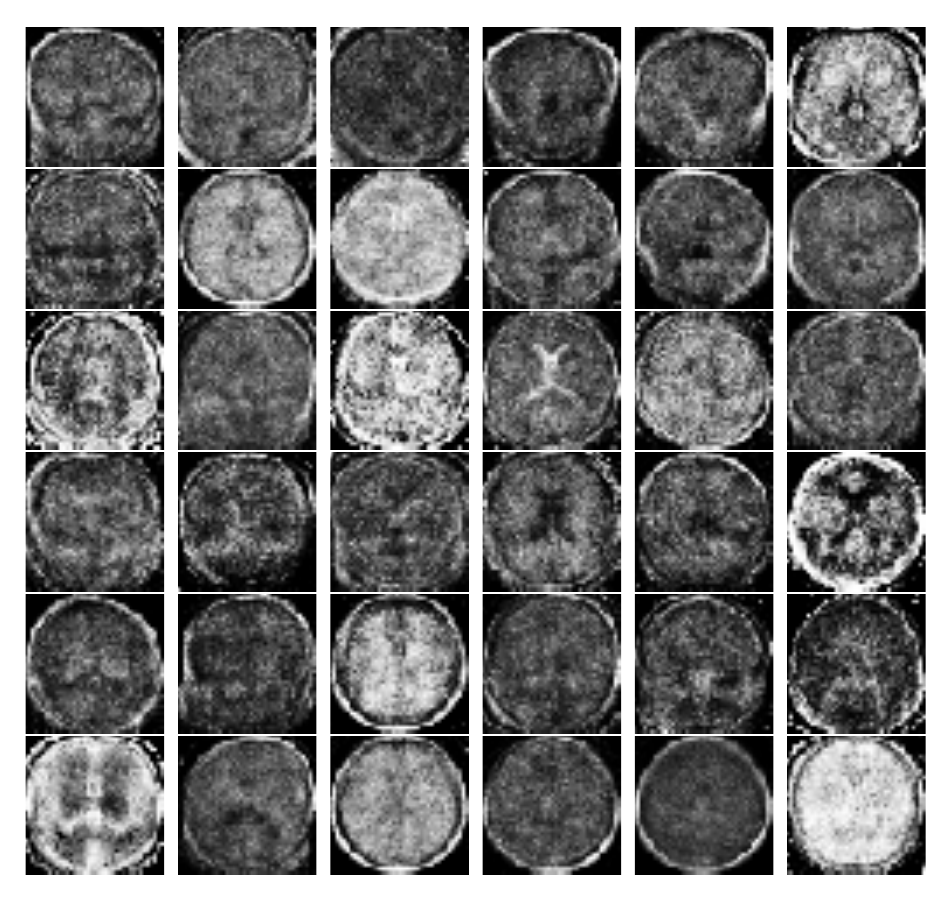}}
	\hspace{1cm}
	\subfloat[MMD-FNP GAN ]{\includegraphics[width=.22\linewidth]{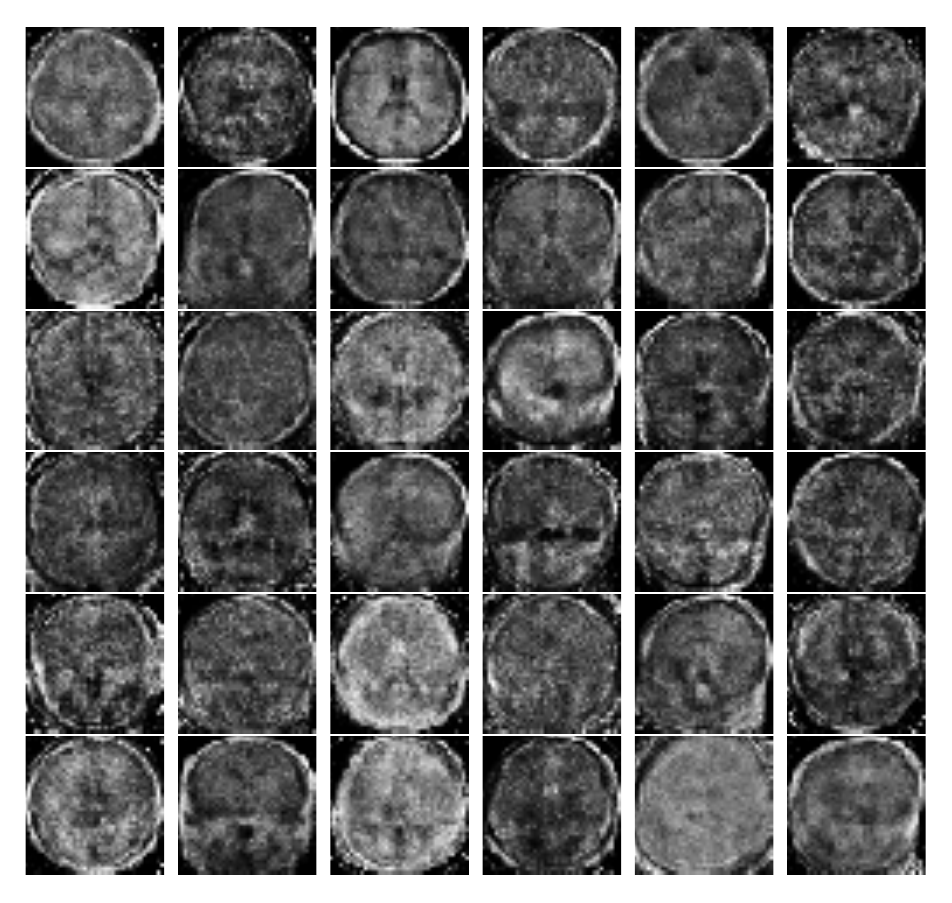}}
	\caption{Generated samples of sizes ($6\times6$) from semi-BNP-MMD and MMD-FNP GAN for the BMB and LFW datasets using a mixture of Gaussian kernels in 40,000 iterations.}\label{extra-dataset}
\end{figure}
\begin{table}[ht] \centering
	\setlength{\tabcolsep}{1.2 mm}
	\caption{The values of MMD, KID, and FID scores for four groups of datasets considering $n_{mb}=1000$ and $r_{mb}=1000$ in \eqref{MMDS}.}\label{MMDS-r}
	\begin{tabular}[c]{lcccccccc}\hline
		\multirow{3}{*}[5pt]{Scores} &  \multicolumn{8}{c}{Dataset} \\\cmidrule(lr){2-9} 
		&\multicolumn{2}{c}{MNIST}&\multicolumn{2}{c}{BMB}&\multicolumn{2}{c}{LFW}&\multicolumn{2}{c}{MRI}\\\cmidrule(lr){2-3}\cmidrule(lr){4-5}\cmidrule(lr){6-7}\cmidrule(lr){8-9}
		&\multicolumn{1}{c}{Semi-BNP}&\multicolumn{1}{c}{FNP}&\multicolumn{1}{c}{Semi-BNP}&\multicolumn{1}{c}{FNP}&\multicolumn{1}{c}{Semi-BNP}&\multicolumn{1}{c}{FNP}&\multicolumn{1}{c}{Semi-BNP}&\multicolumn{1}{c}{FNP}\\\hline
		MMD&$0.0384$& $0.0404$&$0.0285$&$0.0315$ &$0.0281$ &$0.0302$ &$0.2059 $ &$0.2231 $\\
		KID&$0.0034$&$0.0046$&$0.0030$&$0.0036$&$0.0019$&$0.0026$&$0.0260$&$0.0264$\\
		FID&$35.560$&$37.934$&$17.006$&$17.264$&$14.010$&$14.473$&$87.975$&$87.831$\\
		\bottomrule
	\end{tabular}
	
\end{table}%
\section{More Discussion on the Potential Research}
GANs are increasingly used in medical imaging applications which are effective tools for tasks such as medical imaging reconstructions. The synthetic images generated have often been proven to be valuable especially when the original image is noisy or expensive to obtain. 
GANs have also been used for generating images in cross-modality synthesis problems, where we observe magnetic resonance imaging (MRI) for a given patient but want to generate computed tomography (CT) images for that same patient \citep{wolterink2017deep}. This type of generative method for medical imaging can drastically reduce the time and cost of obtaining data if the quality of the synthetic examples is sufficiently high. GANs have also been used in a diagnostic capacity--for example, in detecting brain lesions in images \citep{alex2017generative}. 

Here, the GAN is trained by distinguishing between labeled data of brain images that contain and do not contain lesions. Then, the discriminator of the GAN is used to detect brain lesions on new images. However, GANs are far less commonly used for tasks like diagnosis. According to a survey on medical imaging research in GANs, less than $10\%$ of the top papers surveyed were dedicated towards making diagnoses, whereas the vast majority of papers were dedicated towards generating realistic synthetic examples of medical images for further analysis \citep{yi2019generative}. We believe this is because where the cost of making errors in diagnosis is immediately consequential to people, unlike other AI applications where GANs are largely used.

We plan to extend the current work by mapping the data to a lower dimensional space using an auto-encoder, a dimensionality reduction model helps to reduce the noise in data and tries to optimize the cost function between the real data and fake data in the code space. Then, we will propose a 3D semi-BNP GAN in the code space to improve the ability of the GAN to generate medical datasets. The auto-encoder method should further reduce the chance of mode collapse and the 3D semi-BNP GAN will reduce the blurriness of the generated samples that may be caused by using the auto-encoder. In future work, our model will be able to generate 3D images and, hence, increase the resolution of images, especially for MRI images. We hope that our future work will make an impact in the field of medical imaging.
\section{Notations}
\begin{table}[h]
	\centering
	\begin{tabular}{@{}ll@{}}
		\toprule
		Notation & Definition \\ \midrule
		$N(\cdot,\cdot)$ & Normal distribution \\
		$LN(\cdot,\cdot)$ & Lognormal distribution \\
		$t_3(\cdot,\cdot)$ & $t$-distribution with 3 degrees of freedom \\
		$LG(\cdot,\cdot)$ & Logistic distribution \\
		$B_{d}$ & $d \times d$ matrix with $0.25$ on the main diagonal and $0.2$ off the diagonal \\
		$\mathbf{c}_d$ & $d$-dimensional column vector of $c$'s \\
		$I_d$ & $d \times d$ identical matrix \\ \bottomrule
	\end{tabular}
	\centering
	\begin{tablenotes}
		\tiny
		\item \fontsize{8}{8}\selectfont{ In all distribution notations, the first component represents the mean vector and the second component\\ represents the covariance matrix.}
	\end{tablenotes}
\end{table}

\end{document}